\documentclass{article}
\usepackage[numbers]{natbib}
\pdfoutput=1
\usepackage{amsmath,amssymb,amsthm}
\usepackage{algorithm,algorithmic}
\usepackage[colorlinks=true,linkcolor=blue]{hyperref}
\usepackage[usenames,dvipsnames]{xcolor}
\usepackage{fullpage}
\usepackage{MnSymbol}
\usepackage{graphicx}
\usepackage{amsmath,amsfonts,amssymb,amsthm}
\usepackage{graphicx}
\usepackage{color}

\newtheorem{theorem}{Theorem}
\newtheorem{lemma}[theorem]{Lemma}

\newtheorem{corollary}[theorem]{Corollary}
\newtheorem{assumption}{Assumption}
\newtheorem{proposition}[theorem]{Proposition}

\newtheorem{remark}{Remark}

\theoremstyle{definition}

\begin{document}

\setlength{\parskip}{2mm}
\setlength{\parindent}{0pt}


\newcommand{\mbb}[1]{\mathbb{#1}}
\newcommand{\mbf}[1]{\mathbf{#1}}
\newcommand{\mc}[1]{\mathcal{#1}}
\newcommand{\mrm}[1]{\mathrm{#1}}
\newcommand{\trm}[1]{\textrm{#1}}

\newcommand{\norm}[1]{\left\|#1\right\|}
\newcommand{\sign}{\mrm{sign}}
\newcommand{\argmin}[1]{\underset{#1}{\mrm{argmin}} \ }
\newcommand{\argmax}[1]{\underset{#1}{\mrm{argmax}} \ }
\newcommand{\reals}{\mathbb{R}}
\newcommand{\E}[1]{\mathbb{E}\left[ #1 \right]} 
\newcommand{\Ebr}[1]{\mathbb{E}\left\{ #1 \right\}} 
\newcommand{\En}{\mathbb{E}}  
\newcommand{\Eu}[1]{\underset{#1}{\mathbb{E}}}  
\newcommand{\Ebar}{\Hat{\Hat{\mathbb{E}}}}  
\newcommand{\Esbar}[2]{\Hat{\Hat{\mathbb{E}}}_{#1}\left[ #2 \right]} 
\newcommand{\Es}[2]{\mathbb{E}_{#1}\left[ #2 \right]} 
\newcommand{\Ps}[2]{\mathbb{P}_{#1}\left[ #2 \right]}
\newcommand{\Prob}{\mathbb{P}}
\newcommand{\conv}{\operatorname{conv}}
\newcommand{\inner}[1]{\left\langle #1 \right\rangle}
\newcommand{\ip}[2]{\left<#1,#2\right>}
\newcommand{\lv}{\left\|}
\newcommand{\rv}{\right\|}
\newcommand{\Phifunc}[1]{\Phi\left(#1\right)}
\newcommand{\ind}[1]{{\bf 1}\left\{#1\right\}}
\newcommand{\tr}{\ensuremath{{\scriptscriptstyle\mathsf{T}}}}
\newcommand{\eqdist}{\stackrel{\text{d}}{=}}
\newcommand{\alphT}{\widehat{\alpha}(T)}
\newcommand{\PD}{\mathcal P}
\newcommand{\QD}{\mathcal Q}
\newcommand{\jp}{\ensuremath{\mathbf{p}}}
\newcommand{\rh}{\boldsymbol{\rho}}
\newcommand{\proj}{\text{Proj}}
\newcommand{\Eunderone}[1]{\underset{#1}{\En}}
\newcommand{\Eunder}[2]{\underset{\underset{#1}{#2}}{\En}}
\newcommand{\bphi}{\boldsymbol\phi}
\newcommand\s{\mathbf{s}}
\newcommand\w{\mathbf{w}}
\newcommand\x{\mathbf{x}}
\newcommand\y{\mathbf{y}}
\newcommand\z{\mathbf{z}}
\newcommand\f{\mathbf{f}}

\renewcommand\v{\mathbf{v}}

\newcommand\cB{\mathcal{B}}
\newcommand\cC{\mathcal{C}}
\newcommand\cD{\mathcal{D}}
\newcommand\cL{\mathcal{L}}
\newcommand\cN{\mathcal{N}}
\newcommand\X{\mathcal{X}}
\newcommand\Y{\mathcal{Y}}
\newcommand\Z{\mathcal{Z}}
\newcommand\F{\mathcal{F}}
\newcommand\G{\mathcal{G}}
\newcommand\cH{\mathcal{H}}
\newcommand\N{\mathcal{N}}
\newcommand\M{\mathcal{M}}
\newcommand\W{\mathcal{W}}
\newcommand\Nhat{\mathcal{\widehat{N}}}
\newcommand\Diff{\mathcal{G}}
\newcommand\Compare{\boldsymbol{B}}
\newcommand\RH{\eta} 
\newcommand\metricent{\N_{\mathrm{metric}}} 

\newcommand{\karthik}[1]{{\color{red} Karthik: #1}}
\newcommand{\sasha}[1]{{\color{blue} Sasha: #1}}
\newcommand{\ohad}[1]{{\color{green} Ohad: #1}}

\newcommand\ldim{\mathrm{Ldim}}
\newcommand\fat{\mathrm{fat}}
\newcommand\Img{\mbox{Img}}
\newcommand\sparam{\sigma} 
\newcommand\Psimax{\ensuremath{\Psi_{\mathrm{max}}}}

\newcommand\Rad{\mathfrak{R}}
\newcommand\Val{\mathcal{V}}
\newcommand\Valdet{\mathcal{V}^{\mathrm{det}}}
\newcommand\Dudley{\mathfrak{D}}
\newcommand\Reg{\mbf{Reg}}
\newcommand\D{\mbf{D}}
\renewcommand\P{\mbf{P}}

\newcommand\Xcvx{\X_\mathrm{cvx}}
\newcommand\Xlin{\X_\mathrm{lin}}
\newcommand\loss{\mathrm{loss}}
\newcommand{\Relax}[3]{\mbf{Rel}_{#1}\left(#2 \middle| #3 \right)}
\newcommand{\Rel}[2]{\mbf{Rel}_{#1}\left(#2 \right)}

\newcommand{\multiminimax}[1]{\ensuremath{\left\llangle #1\right\rrangle}}

\def\deq{\triangleq}


\title{Relax and Localize: From Value to Algorithms}

\author{
Alexander Rakhlin \\ University of Pennsylvania
\and 
Ohad Shamir\\
Microsoft Research
\and
Karthik Sridharan \\
University of Pennsylvania
}

\maketitle

\begin{abstract}
We show a principled way of deriving online learning algorithms from a minimax analysis. Various upper bounds on the minimax value, previously thought to be non-constructive, are shown to yield algorithms. This allows us to seamlessly recover known methods and to derive new ones. Our framework also captures such ``unorthodox'' methods as Follow the Perturbed Leader and the $R^2$ forecaster. We emphasize that understanding the inherent complexity of the learning problem leads to the development of algorithms.

We define \emph{local} sequential Rademacher complexities and associated algorithms that allow us to obtain faster rates in online learning, similarly to statistical learning theory. Based on these localized complexities we build a general adaptive method that can take advantage of the suboptimality of the observed sequence. 
	
We present a number of new algorithms, including a family of  randomized methods that use the idea of a ``random playout''. Several new versions of the Follow-the-Perturbed-Leader algorithms are presented, as well as methods based on the Littlestone's dimension, efficient methods for matrix completion with trace norm, and algorithms for the problems of transductive learning and prediction with static experts.
\end{abstract}


\section{Introduction}

This paper studies the online learning framework, where the goal of the player is to incur small regret while  observing a sequence of data on which we place no distributional assumptions. Within this framework, many algorithms have been developed over the past two decades, and we refer to the book of Cesa-Bianchi and Lugosi \citep{PLG} for a comprehensive treatment of the subject. More recently, a non-algorithmic minimax approach has been developed to study the \emph{inherent complexities} of sequential problems  \citep{AbeBarRakTew08colt, AbeAgrBarRak09colt, RakSriTew10nips, sridharan2010convex}. In particular, it was shown that a theory in parallel to Statistical Learning can be developed, with random averages, combinatorial parameters, covering numbers, and other measures of complexity. Just as the classical learning theory is concerned with the study of the supremum of empirical or Rademacher process, online learning is concerned with the study of the supremum of a martingale or a certain dyadic process. Even though complexity tools introduced in \cite{RakSriTew10nips,RakSriTew11nips,RakSriTew11colt} provide ways of studying the minimax value, no algorithms have been exhibited to achieve these non-constructive bounds in general. 

In this paper, we show that algorithms can, in fact, be extracted from the minimax analysis. This observation leads to a unifying view of many of the methods known in the literature, and also gives a general recipe for developing new algorithms. We show that the potential method, which has been studied in various forms, naturally arises from the study of the minimax value as a certain \emph{relaxation}. We further show that the sequential complexity tools introduced in \citep{RakSriTew10nips} are, in fact, relaxations and can be used for constructing algorithms that enjoy the corresponding bounds. By choosing appropriate relaxations, we recover many known methods, improved variants of some known methods, and new algorithms. One can view our framework as one for converting a non-constructive proof of an upper bound on the value of the game into an algorithm. Surprisingly, this allows us to also study such ``unorthodox'' methods as Follow the Perturbed Leader \citep{KalVem05}, and the recent method of \citep{CBShamir11efficient} under the same umbrella with others. We show that the idea of a random playout has a solid theoretical basis, and that Follow the Perturbed Leader algorithm is an example of such a method. It turns out that whenever the sequential Rademacher complexity is of the same order as its i.i.d. cousin, there is a family of randomized methods that avoid certain computational hurdles. Based on these developments, we exhibit an efficient method for the trace norm matrix completion problem, novel Follow the Perturbed Leader algorithms, and efficient methods for the problems of transductive learning and prediction with static experts. 

The framework of this paper gives a recipe for developing algorithms. Throughout the paper, we stress that the notion of a relaxation, introduced below, is not appearing out of thin air but rather as an upper bound on the sequential Rademacher complexity. The understanding of {\em inherent complexity} thus leads to the \emph{development of algorithms}.

One unsatisfying aspect of the minimax developments so far has been the lack of a \emph{localized} analysis. Local Rademacher averages have been shown to play a key role in Statistical Learning for obtaining fast rates. It is also  well-known that fast rates are possible in online learning, on the case-by-case basis, such as for online optimization of strongly convex functions. We show that, in fact, a localized analysis can be performed at an abstract level, and it goes hand-in-hand with the idea of relaxations. Using such localized analysis, we arrive at \emph{local sequential Rademacher} and other local complexities. These complexities upper-bound the value of the online learning game and can lead to fast rates. What is equally important, we provide an associated generic algorithm to achieve the localized bounds. We further develop the ideas of localization, presenting a general adaptive (data-dependent) procedure that takes advantage of the actual moves of the adversary that might have been suboptimal. We illustrate the procedure on a few examples. Our study of localized complexities and adaptive methods follows from a general agenda of developing universal methods that can adapt to the actual sequence of data played by Nature, thus automatically interpolating between benign and minimax optimal sequences.

This paper is organized as follows. In Section~\ref{sec:value} we formulate the value of the online learning problem and present the (possibly computationally inefficient) minimax algorithm. In Section~\ref{sec:rel} we develop the idea of relaxations and the meta algorithm based on relaxations, and present a few examples. Section~\ref{sec:local} is devoted to a new formalism of localized complexities, and we present a basic localized meta algorithm. We show, in particular, that for strongly convex objectives, the regret is easily bounded through localization. Next, in Section~\ref{sec:adaptive}, we present a fully adaptive method that constantly checks whether the sequence being played by the adversary is in fact minimax optimal. We show that, in particular, we recover some of the known adaptive results. We also demonstrate how local data-dependent norms arise as a natural adaptive method. The remaining sections present a number of new algorithms, often with superior computational properties and regret guarantees than what is known in the literature. 

\paragraph{Notation:} A set $\{x_1,\ldots,x_t\}$ is often denoted by $x_{1:t}$. A $t$-fold product of $\X$ is denoted by $\X^t$. Expectation with respect to a random variable $Z$ with distribution $p$ is denoted by $\En_Z$ or $\En_{Z\sim p}$. The set $\{1,\ldots,T\}$ is denoted by $[T]$, and the set of all distributions on some set ${\mathcal A}$ by $\Delta({\mathcal A})$. The inner product between two vectors is written as $\inner{a,b}$ or as $a^\tr b$. The set of all functions from $\X$ to $\Y$ is denoted by $\Y^\X$. Unless specified otherwise, $\epsilon$ denotes a vector $(\epsilon_1,\ldots,\epsilon_T)$ of i.i.d. Rademacher random variables. An $\X$-valued tree $\x$ of depth $d$ is defined as a sequence $(\x_1,\ldots,\x_d)$ of mappings $\x_t:\{\pm 1\}^{t-1}\mapsto \X$ (see \cite{RakSriTew10nips}). We often write $\x_t(\epsilon)$ instead of $\x_t(\epsilon_{1:t-1})$.

\section{Value and The Minimax Algorithm}
\label{sec:value}

Let $\F$ be the set of learner's moves and $\X$ the set of moves of Nature. The online protocol dictates that on every round $t=1,\ldots,T$ the learner and Nature simultaneously choose $f_t\in\F$, $x_t\in\X$, and observe each other's actions. The learner aims to minimize regret
$$\Reg_T \deq \sum_{t=1}^T \ell(f_t,x_t) - \inf_{f \in \F} \sum_{t=1}^T \ell(f,x_t)$$
where $\ell:\F\times\X\to \reals$ is a known loss function. Our aim is to study this online learning problem at an abstract level without assuming convexity or other properties of the loss function and the sets $\F$ and $\X$. We do assume, however, that $\ell$, $\F$, and $\X$ are such that the minimax theorem in the space of distributions over $\F$ and $\X$ holds. By studying the abstract setting, we are able to develop general algorithmic and non-algorithmic ideas that are common across various application areas. 

The starting point of our development is the minimax value of the associated online learning game:
\begin{align}
	\label{eq:val}
	\Val_T(\F) = \inf_{q_1 \in \Delta(\F)} \sup_{x_1 \in \X} \Eunderone{f_1 \sim q_1} \ldots \inf_{q_T \in \Delta(\F)} \sup_{x_T \in \X} \Eunderone{f_T \sim q_T}\left[\sum_{t=1}^T \ell(f_t,x_t) - \inf_{f \in \F} \sum_{t=1}^T \ell(f,x_t) \right]
\end{align}
where $\Delta(\F)$ is the set of distributions on $\F$. The minimax formulation immediately gives rise to the optimal algorithm that solves the minimax expression at every round $t$. That is, after witnessing $x_1,\ldots,x_{t-1}$ and $f_1,\ldots,f_{t-1}$, the algorithm returns
\begin{align}
	\label{eq:minimaxopt-algorithm}
	&\argmin{q \in \Delta(\F)} \left\{ \sup_{x_t} \Eunderone{f_t \sim q} \inf_{q_{t+1}} \sup_{x_{t+1}} \Eunderone{f_{t+1}}\ldots \inf_{q_T} \sup_{x_T} \Eunderone{f_T}\left[\sum_{i=t}^T \ell(f_i,x_i) - \inf_{f \in \F} \sum_{i=1}^T \ell(f,x_i) \right] \right\} \\
	&=\argmin{q \in \Delta(\F)} \left\{ \sup_{x_t} \Eunderone{f_t \sim q} \left[ \ell(f_t,x_t) + \inf_{q_{t+1}} \sup_{x_{t+1}} \Eunderone{f_{t+1}}\ldots \inf_{q_T} \sup_{x_T} \Eunderone{f_T}\left[\sum_{i=t+1}^T \ell(f_i,x_i) - \inf_{f \in \F} \sum_{i=1}^T \ell(f,x_i) \right] \right]\right\} \notag
\end{align}
Henceforth, if the quantification in $\inf$ and $\sup$ is omitted, it will be understood that $x_t$, $f_t$, $p_t$, $q_t$ range over $\X$, $\F$, $\Delta(\X)$, $\Delta(\F)$, respectively. Moreover, $\En_{x_t}$ is with respect to $p_t$ while $\En_{f_t}$ is with respect to $q_t$. The first sum in \eqref{eq:minimaxopt-algorithm} starts at $i=t$ since the partial loss $\sum_{i=1}^{t-1} \ell(f_i,x_i)$ has been fixed. We now notice a recursive form for defining the value of the game. Define for any $t \in [T-1]$ and any given prefix $x_{1},\ldots,x_t \in \X$ the {\em conditional value} 
\begin{align*}
\Val_T\left(\F \middle| x_1,\ldots,x_t \right) \deq \inf_{q \in \Delta(\F)} \sup_{x \in \X} \left\{ \underset{f \sim q}{\En}\left[\ell(f,x) \right] + \Val_T(\F| x_{1},\ldots,x_{t},x)\right\}
\end{align*}
where 
$$\Val_T\left(\F \middle| x_1,\ldots,x_T \right) \deq - \inf_{f \in \F} \sum_{t=1}^T \ell(f,x_t) ~~~~\mbox{and}~~~~ \Val_T(\F) = \Val_T(\F| \{\}).$$
The minimax optimal algorithm specifying the mixed strategy of the player can be written succinctly 
\begin{align}
	\label{eq:minimax_algo}
	q_t = \argmin{q \in \Delta(\F)} \sup_{x \in \X} \Big\{ \Es{f \sim q}{\ell(f,x)}  + \Val_T(\F| x_1,\ldots,x_{t-1},x)\Big\} \ .
\end{align}
This recursive formulation has appeared in the literature, but now we have tools to study the conditional value of the game. We will show that various upper bounds on $\Val_T(\F| x_1,\ldots,x_{t-1},x)$ yield an array of algorithms, some with better computational properties than others. In this way, the non-constructive approach of \cite{RakSriTew10nips,RakSriTew11colt,RakSriTew11nips} to upper bound the value of the game directly translates into algorithms. 

The minimax algorithm in \eqref{eq:minimax_algo} can be interpreted as choosing the best decision that takes into account the present loss and the worst-case future. We then realize that the conditional value of the game serves as a ``regularizer'', and thus well-known online learning algorithms such as Exponential Weights, Mirror Descent and Follow-the-Regularized-Leader arise as relaxations rather than a ``method that just works''. 

The first step is to appeal to the minimax theorem and perform the same manipulation as in \citep{AbeAgrBarRak09colt,RakSriTew10nips}, but only on the value from $t+1$ onwards:
\begin{align*}
\Val_T\left(\F \middle| x_1,\ldots,x_t \right) =  \sup_{p_{t+1}} \Eunderone{x_{t+1}} \ldots \sup_{p_T} \Eunderone{x_T}\left[\sum_{i=t+1}^T \inf_{f_{i} \in \F} \Eunderone{x_i \sim p_i} \ell(f_i,x_i) - \inf_{f \in \F} \sum_{i=1}^T \ell(f,x_i) \right]  
\end{align*}
This expression is still unwieldy, and the idea is now to come up with more manageable, yet tight, upper bounds of the conditional value.

\section{Relaxations and the Basic Meta-Algorithm}\label{sec:rel}

A \emph{relaxation} $\Rel{}{}$ is a sequence of functions $\Relax{T}{\F}{x_{1},\ldots,x_{t}}$ for each $t\in[T]$. We shall use the notation $\Rel{T}{\F}$ for $\Relax{T}{\F}{\{\}}$. A relaxation will be called {\em admissible} if for any $x_1,\ldots,x_T \in \X$, 
\begin{align}
	\label{eq:relax_admissibility}
	\Relax{T}{\F}{x_{1},\ldots,x_{t}} \ge \inf_{q \in \Delta(\F)} \sup_{x \in \X} \left\{ \underset{f \sim q}{\En}\left[\ell(f,x)\right] + \Relax{T}{\F}{x_{1},\ldots,x_{t},x}\right\}
\end{align}
for all $t\in[T-1]$, and 
$$ \Relax{T}{\F}{x_{1},\ldots,x_{T}} \ge - \inf_{f \in \F} \sum_{t=1}^T \ell(f,x_t) .$$

A strategy $q$ that minimizes the expression in \eqref{eq:relax_admissibility} defines an optimal algorithm for the relaxation $\Rel{}{}$. This algorithm is given below under the name ``Meta-Algorithm''. However, minimization need not be exact: any $q$ that satisfies the admissibility condition \eqref{eq:relax_admissibility} is a valid method, and we will say that such an algorithm is {\em admissible with respect to the relaxation $\Rel{}{}$}.

\begin{algorithm}[h]
\caption{Meta-Algorithm $\mbf{MetAlgo}$}
\begin{algorithmic}
\STATE Parameters: Admissible relaxation $\mbf{Rel}$ 
\FOR{$t = 1$ to $T$}
    \STATE $q_t = \arg\min_{q \in \Delta(\F)} \sup_{x \in \X} \left\{ \Es{f \sim q}{\ell(f,x)}  + \Relax{T}{\F}{ x_1,\ldots,x_{t-1},x}\right\}$
    \STATE Play $f_t \sim q_t$ and receive $x_t$ from adversary
\ENDFOR
\end{algorithmic}
\label{alg:met}
\end{algorithm}

\begin{proposition}\label{prop:main}
Let $\Rel{}{}$ be an admissible relaxation. For any admissible algorithm with respect to $\Rel{}{}$, including the \emph{Meta-Algorithm}, irrespective of the strategy of the adversary,
\begin{align}
	\label{eq:sum_cond_exp_bdd_by_relax}
	\sum_{t=1}^T \En_{f_t\sim q_t}\ell(f_t,x_t) - \inf_{f\in\F} \sum_{t=1}^T \ell(f,x_t) \leq \Rel{T}{\F} \ , 
\end{align}
and therefore,
\begin{align*}
	\En[\Reg_T]\leq \Rel{T}{\F} \ .
\end{align*}
We also have that 
\begin{align*}
\Val_T(\F) \le \Rel{T}{\F} \ .
\end{align*}
If $a\leq \ell(f,x) \leq b$ for all $f\in\F,x\in\X$, the Hoeffding-Azuma inequality yields, with probability at least $1-\delta$,
\begin{align*}
	\Reg_T = \sum_{t=1}^T \ell(f_t,x_t) - \inf_{f\in\F} \sum_{t=1}^T \ell(f,x_t) \leq \Rel{T}{\F} + (b-a)\sqrt{T/2\cdot \log (2/\delta)} \ .
\end{align*}
Further, if for all $t\in[T]$, the admissible strategies $q_t$ are deterministic, 
\begin{align*}
	\Reg_T \leq \Rel{T}{\F} \ .
\end{align*}
\end{proposition}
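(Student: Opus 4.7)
The plan is to establish the key inequality \eqref{eq:sum_cond_exp_bdd_by_relax} by a one-step telescoping argument driven by the admissibility condition, and then derive the other statements as corollaries.

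First I would fix any adversary strategy, producing a (possibly adaptive) sequence $x_1,\ldots,x_T$. By assumption, the algorithm is admissible, meaning that for every $t$ the chosen $q_t$ satisfies
\begin{align*}
\Relax{T}{\F}{x_1,\ldots,x_{t-1}} \;\ge\; \sup_{x\in\X}\Big\{\En_{f\sim q_t}\ell(f,x) \;+\; \Relax{T}{\F}{x_1,\ldots,x_{t-1},x}\Big\}.
\end{align*}
Specializing the $\sup$ to the actual $x_t$ played by the adversary (this is where adaptivity of the adversary is absorbed) gives a one-step recursion
\begin{align*}
\Relax{T}{\F}{x_1,\ldots,x_{t-1}} \;\ge\; \En_{f_t\sim q_t}\ell(f_t,x_t) \;+\; \Relax{T}{\F}{x_1,\ldots,x_{t-1},x_t}.
\end{align*}
Telescoping this from $t=1$ to $t=T$ and applying the terminal admissibility condition $\Relax{T}{\F}{x_1,\ldots,x_T}\ge -\inf_{f\in\F}\sum_t \ell(f,x_t)$ yields \eqref{eq:sum_cond_exp_bdd_by_relax}. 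Taking the outer expectation over the player's internal randomization and invoking the tower property converts $\En_{f_t\sim q_t}\ell(f_t,x_t)$ into the conditional expectation of $\ell(f_t,x_t)$ given the history, giving $\En[\Reg_T]\le\Rel{T}{\F}$.

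The bound $\Val_T(\F)\le\Rel{T}{\F}$ then follows either (i) immediately, because the Meta-Algorithm is a specific player strategy whose worst-case expected regret is at most $\Rel{T}{\F}$, so the minimax infimum over player strategies is at most this quantity; or (ii) by backward induction showing $\Val_T(\F\mid x_1,\ldots,x_t)\le\Relax{T}{\F}{x_1,\ldots,x_t}$, since the conditional value satisfies the same recursion as admissibility but with equality.

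For the high-probability bound, I would introduce the martingale difference sequence
\begin{align*}
M_t \;\deq\; \ell(f_t,x_t) \;-\; \En_{f_t\sim q_t}\ell(f_t,x_t),
\end{align*}
which is adapted to the natural filtration generated by $(f_1,x_1,\ldots,f_t,x_t)$ and satisfies $|M_t|\le b-a$. The Hoeffding-Azuma inequality then bounds $\sum_{t=1}^T M_t$ by $(b-a)\sqrt{(T/2)\log(2/\delta)}$ with probability at least $1-\delta$, and combining with \eqref{eq:sum_cond_exp_bdd_by_relax} gives the stated tail bound. The deterministic case is immediate: when each $q_t$ is a point mass at some $f_t$, we have $\En_{f_t\sim q_t}\ell(f_t,x_t)=\ell(f_t,x_t)$ and $\sum_t M_t = 0$, so \eqref{eq:sum_cond_exp_bdd_by_relax} already bounds the pathwise regret.

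The proof has no real obstacle: the nontrivial content is encoded in the definition of admissibility, and once this definition is unrolled along the realized trajectory the bound emerges by telescoping. The only points requiring care are the quantifier order (the $\sup$ inside admissibility is what permits adaptive adversaries) and the correct use of the tower property when passing from the conditional expectations $\En_{f_t\sim q_t}\ell(f_t,x_t)$ to the unconditional $\En[\Reg_T]$.
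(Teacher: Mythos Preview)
Your proposal is correct and follows essentially the same approach as the paper: the paper likewise uses the terminal condition to replace $-\inf_f\sum_t\ell(f,x_t)$ by $\Relax{T}{\F}{x_1,\ldots,x_T}$, then peels off one round at a time using the admissibility inequality (your telescoping argument), takes expectation, and invokes Hoeffding--Azuma for the high-probability statement. Your write-up is in fact slightly more explicit than the paper's about the martingale difference sequence and the tower property, but the underlying argument is the same.
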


The reader might recognize $\mbf{Rel}$ as a potential function. It is known that one can derive regret bounds by coming up with a potential such that the current loss of the player is related to the difference in the potentials at successive steps, and that the loss of the best decision in hindsight can be extracted from the final potential. The origin of ``good'' potential functions has always been a mystery (at least to the authors). One of the conceptual contributions of this paper is to show that they naturally arise as relaxations on the conditional value. The conditional value itself can be characterized as the tightest possible relaxation.

In particular, for many problems a tight relaxation (sometimes within a factor of $2$) is achieved through symmetrization. Define the \emph{conditional Sequential Rademacher complexity} 
	\begin{align}
		\label{eq:def_cond_seq_rad}
		\Rad_T (\F | x_1,\ldots,x_t) = \sup_{\x} \En_{\epsilon_{t+1:T}} \sup_{f\in\F} \left[ 2\sum_{s=t+1}^T \epsilon_s\ell(f,\x_{s-t}(\epsilon_{t+1:s-1})) - \sum_{s=1}^t \ell(f,x_s) \right] \ .
	\end{align}
Here the supremum is over all $\X$-valued binary trees of depth $T-t$. One may view this complexity as a partially symmetrized version of the sequential Rademacher complexity
\begin{align}
	\Rad_T (\F ) \deq \Rad_T (\F ~|~ \{\}) = \sup_{\x} \En_{\epsilon_{1:T}} \sup_{f\in\F} \left[ 2\sum_{s=1}^T \epsilon_s\ell(f,\x_{s}(\epsilon_{1:s-1})) \right]
\end{align}
defined in \cite{RakSriTew10nips}. We shall refer to the term involving the tree $\x$ as the ``future'' and the term being subtracted off -- as the ``past''. This indeed corresponds to the fact that the quantity is conditioned on the already observed $x_1,\ldots,x_t$, while for the future we have the worst possible binary tree.\footnote{It is somewhat cumbersome to write out the indices on $\x_{s-t}(\epsilon_{t+1:s-1})$ in \eqref{eq:def_cond_seq_rad}, so we will instead use $\x_s(\epsilon)$ for $s=1,\ldots,T-t$, whenever this does not cause confusion.}

\begin{proposition}
	\label{prop:rad_admissible}
	The conditional Sequential Rademacher complexity is admissible.
\end{proposition}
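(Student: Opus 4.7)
The plan is to separately verify the two admissibility clauses---the terminal condition at $t=T$ and the one-step recursion for $t\in[T-1]$---by combining the minimax theorem with a standard ghost-sample symmetrization argument. The terminal condition is immediate from the definition: at $t=T$ the future-tree sum in $\Rad_T(\F|x_1,\ldots,x_T)$ is empty, leaving $\sup_{f\in\F}\{-\sum_{s=1}^T \ell(f,x_s)\} = -\inf_{f\in\F}\sum_{s=1}^T\ell(f,x_s)$, which matches the required terminal inequality with equality.

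For the recursion, fix $t\in[T-1]$ and the prefix $x_1,\ldots,x_t$. I would first apply the minimax theorem (which holds under the paper's standing assumption on $\F,\X,\ell$) to exchange $\inf_{q\in\Delta(\F)}\sup_{x\in\X}$ with $\sup_{p\in\Delta(\X)}\inf_{f\in\F}$, reducing the admissibility inequality to showing, for every $p\in\Delta(\X)$, that
\[
\inf_{f\in\F}\En_{x\sim p}\ell(f,x) + \En_{x\sim p}\Rad_T(\F|x_1,\ldots,x_t,x) \le \Rad_T(\F|x_1,\ldots,x_t).
\]
I would then introduce an independent ghost sample $x'\sim p$ and, using $\En_x\ell(f,x)=\En_{x'}\ell(f,x')$ together with a coupling of the constant-$f$ infimum with the inner $\sup_g$ in $\Rad_T(\F|x_1,\ldots,x_t,x)$, obtain the intermediate bound
\[
\En_{x,x'}\sup_\x\En_\epsilon\sup_{g\in\F}\Big[\ell(g,x')-\ell(g,x)+2\sum_{s=1}^{T-t-1}\epsilon_s\ell(g,\x_s(\epsilon)) - \sum_{s=1}^t\ell(g,x_s)\Big].
\]
Exploiting the exchangeability of $(x,x')$ under $p\otimes p$, I would introduce an extra Rademacher sign $\epsilon_0$ so that the root-level difference becomes $\epsilon_0(\ell(g,x')-\ell(g,x))$, pass from the expectation over $p\otimes p$ to a worst-case pair $(y,y')\in\X^2$, and finally assemble $(y,y')$ with the existing depth-$(T-t-1)$ tree $\x$ into a single depth-$(T-t)$ tree $\y$ whose first level is chosen adversarially. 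Identifying the resulting quantity with $\Rad_T(\F|x_1,\ldots,x_t)$ closes the argument.

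The main obstacle will be the ghost-sample symmetrization step. Because $\inf_{f\in\F}$ ranges only over constant $f$, the naive replacement $\inf_f\En_{x'}\ell(f,x')\le\En_{x'}\ell(g(x'),x')$ for a function $g\colon\X\to\F$ can fail (by Jensen's inequality applied to the concave $\inf$). The correct move is to first push $\ell(f,x)$ into the inner supremum by writing $\En_x\ell(f,x)+\En_x\Rad_T(\F|\ldots,x) = \En_x\sup_\x\En_\epsilon\sup_g[\ell(f,x)-\ell(g,x)+\cdots]$ for each fixed $f$, and then use the coupling with the ghost sample together with the exchangeability swap to convert $\ell(f,x)$ into $\ell(g,x')$. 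The factor of $2$ in the definition $2\sum_s\epsilon_s\ell(g,\y_s)$ of the sequential Rademacher complexity is precisely what is needed in the last step, so that a root-level difference $\ell(g,y')-\ell(g,y)$ can be absorbed into a single $2\epsilon_0\ell(g,\y_1)$-type term when the depth-$(T-t)$ tree is constructed; ensuring that this calibration matches is the most delicate bookkeeping in the proof.
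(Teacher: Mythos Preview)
Your proposal is correct and follows essentially the same route as the paper's proof: apply the minimax theorem, push the constant $\inf_{f}\En_{x\sim p}\ell(f,x)$ inside the inner $\sup_{g\in\F}$ and upper-bound it by $\En_{x\sim p}\ell(g,x)$, pull the ghost expectation outside the supremum by Jensen, symmetrize with a fresh Rademacher sign, pass to worst-case $(y,y')$, and graft these onto the depth-$(T-t-1)$ tree to form a depth-$(T-t)$ tree. Your ``main obstacle'' paragraph overthinks the coupling step: once the infimum is inside $\sup_{g}$, the bound $\inf_{f}\En_{x}\ell(f,x)\le \En_{x}\ell(g,x)$ is immediate for the \emph{fixed} $g$ appearing in the supremum (no dependence on the ghost is introduced until the subsequent Jensen step), so no delicate Jensen-for-concave-infimum issue actually arises.
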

The proof of this proposition is given in the Appendix and it corresponds to one step of the sequential symmetrization proof in \cite{RakSriTew10nips}. We note that the factor $2$ appearing in \eqref{eq:def_cond_seq_rad} is not necessary in certain cases (e.g. binary prediction with absolute loss). 

We now show that several well-known methods arise as further relaxations on the conditional sequential Rademacher complexity $\Rad_T$. 

\paragraph{Exponential Weights}

Suppose $\F$ is a finite class and $|\ell(f,x)|\leq 1$. In this case, a (tight) upper bound on sequential Rademacher complexity leads to the following relaxation:
\begin{align}
	\label{rel:expweights}
	\Relax{T}{\F}{x_{1},\ldots, x_t} = \inf_{\lambda>0}\left\{ \frac{1}{\lambda}\log\left( \sum_{f \in \F}  \exp\left(   - \lambda \sum_{i=1}^t \ell(f,x_i)   \right) \right)  +  2 \lambda (T-t) \right\}
\end{align}

\begin{proposition}
	\label{prop:exp_weights_relax}
	The relaxation \eqref{rel:expweights} is admissible and
	$$\Rad_T (\F | x_1,\ldots,x_t)\leq \Relax{T}{\F}{x_{1},\ldots, x_t}.$$
	Furthermore, it leads to a parameter-free version of the Exponential Weights algorithm, defined on round $t+1$ by the mixed strategy
	$$q_{t+1}(f) \propto \exp\left(-\lambda_t^* \sum_{s=1}^{t}\ell(f,x_s)\right)$$  
	with $\lambda_t^*$ the optimal value in \eqref{rel:expweights}.
	The algorithm's regret is bounded by $$\Rel{T}{\F}\leq 2\sqrt{2T\log |\F|} \ . $$
\end{proposition}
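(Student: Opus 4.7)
The proposition has three parts: bounding conditional sequential Rademacher by the relaxation, admissibility of the relaxation, and extraction of the Exponential Weights strategy together with its regret bound. My plan is to handle the Rademacher bound first, then establish admissibility directly via Hoeffding's lemma on the soft-max potential (rather than trying to inherit it from Proposition~\ref{prop:rad_admissible}), and finally optimize over $\lambda$.

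For the Rademacher bound, I would fix any $\lambda>0$ and an $\X$-valued tree $\x$ of depth $T-t$. Pulling out the past loss (which is independent of $\epsilon$) and using the standard soft-max inequality $\sup_f g(f) \le \lambda^{-1}\log\sum_f e^{\lambda g(f)}$ followed by Jensen, I would write
\begin{align*}
\En_{\epsilon}\sup_{f\in\F}\!\left[2\sum_{s=t+1}^T\epsilon_s\ell(f,\x_{s-t}(\epsilon))-\sum_{s=1}^t\ell(f,x_s)\right]
\le \frac{1}{\lambda}\log\sum_{f\in\F} e^{-\lambda\sum_{s=1}^t\ell(f,x_s)}\,\En_\epsilon e^{2\lambda\sum_{s=t+1}^T\epsilon_s\ell(f,\x_{s-t}(\epsilon))}.
\end{align*}
The inner MGF is bounded by $e^{2\lambda^2(T-t)}$: conditioning successively on $\epsilon_{t+1:s-1}$ makes $\ell(f,\x_{s-t}(\epsilon))\in[-1,1]$ deterministic, so the Hoeffding-type estimate $\En_{\epsilon_s}e^{2\lambda\epsilon_s c}\le e^{2\lambda^2 c^2}\le e^{2\lambda^2}$ applies at each node and chains across the depth. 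Taking the supremum over $\x$ and then the infimum over $\lambda$ yields $\Rad_T(\F\,|\,x_1,\dots,x_t)\le\Relax{T}{\F}{x_1,\dots,x_t}$.

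For admissibility, let $\lambda=\lambda_t^\star$ be an optimizer of the relaxation at step $t$ and let $q(f)\propto \exp(-\lambda\sum_{s=1}^t\ell(f,x_s))$ be the corresponding Gibbs distribution. For any $x\in\X$, Hoeffding's lemma applied to the random variable $\ell(f,x)\in[-1,1]$ under $f\sim q$ gives
\begin{align*}
\log\En_{f\sim q}e^{-\lambda\ell(f,x)} \le -\lambda\En_{f\sim q}\ell(f,x) + \tfrac{\lambda^2}{2}.
\end{align*}
Writing $\Phi_\lambda(s)\deq\lambda^{-1}\log\sum_f\exp(-\lambda\sum_{i\le s}\ell(f,x_i))$, the identity $\Phi_\lambda(t+1)-\Phi_\lambda(t)=\lambda^{-1}\log\En_{f\sim q}e^{-\lambda\ell(f,x)}$ and the bound above yield $\En_{f\sim q}\ell(f,x)+\Phi_\lambda(t+1)\le \Phi_\lambda(t)+\lambda/2$. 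Adding $2\lambda(T-t-1)$ on both sides and using $\Relax{T}{\F}{x_1,\dots,x_t,x}\le \Phi_\lambda(t+1)+2\lambda(T-t-1)$ (take the suboptimal $\lambda$ in the infimum at step $t+1$), I obtain
\begin{align*}
\En_{f\sim q}\ell(f,x)+\Relax{T}{\F}{x_1,\dots,x_t,x}\le \Phi_\lambda(t)+2\lambda(T-t)-\tfrac{3\lambda}{2}\le \Relax{T}{\F}{x_1,\dots,x_t},
\end{align*}
which is the admissibility inequality; the terminal condition holds since $\Phi_\lambda(T)\ge \sup_f(-\sum_t\ell(f,x_t))$ for every $\lambda>0$. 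This identifies the admissible strategy as the claimed parameter-free Exponential Weights.

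Finally, the regret bound follows from Proposition~\ref{prop:main} by evaluating the relaxation at $t=0$: $\Rel{T}{\F}=\inf_{\lambda>0}\{\lambda^{-1}\log|\F|+2\lambda T\}$, and the elementary minimization $\lambda^\star=\sqrt{\log|\F|/(2T)}$ gives $2\sqrt{2T\log|\F|}$. The only delicate point in the argument is keeping the optimization over $\lambda$ consistent across rounds; the trick is that one only needs to compare $\Rel{T}{\F}{x_1,\dots,x_t,x}$ against a single \emph{suboptimal} choice $\lambda=\lambda_t^\star$, which the infimum structure allows for free, and the resulting slack $3\lambda_t^\star/2$ is discarded.
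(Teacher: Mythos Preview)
Your proof is correct and follows essentially the same route as the paper. Both arguments upper bound the conditional sequential Rademacher complexity by passing to the soft-max, factoring past from future, and controlling the moment generating function of the tree term via a per-node Hoeffding estimate; and both establish admissibility by fixing $\lambda$ at the optimizer of the \emph{previous} relaxation, applying Hoeffding's lemma (the paper cites \cite[Lemma~A.1]{PLG}) to the Gibbs distribution, and then exploiting that this same $\lambda$ is merely a feasible (suboptimal) choice in the infimum defining the next-step relaxation. Your observation that this produces a discarded slack of $3\lambda_t^\star/2$ per round is a nice explicit accounting that the paper leaves implicit.
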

The Chernoff-Cram\`er inequality tells us that \eqref{rel:expweights} is the tightest possible relaxation. The proof of Proposition~\ref{prop:exp_weights_relax} reveals that the only inequality is the softmax which is also present in the proof of the maximal inequality for a finite collection of random variables. In this way, exponential weights is an algorithmic realization of a maximal inequality for a finite collection of random variables. The connection between probabilistic (or concentration) inequalities and algorithms runs much deeper.

We point out that the exponential-weights algorithm arising from the relaxation \eqref{rel:expweights} is a {\em parameter-free} algorithm. The learning rate $\lambda^*$ can be optimized (via one-dimensional line search) at each iteration with almost no cost. This can lead to improved performance as compared to the classical methods that set a particular schedule for the learning rate.

\paragraph{Mirror Descent}

In the setting of online linear optimization, the loss is $\ell(f,x)=\ip{f}{x}$. Suppose $\F$ is a unit ball in some Banach space and $\X$ is the dual. Let $\|\cdot\|$ be some $(2,C)$-smooth norm on $\X$ (in the Euclidean case, $C=2$). Using the notation $\tilde{x}_{t-1}=\sum_{s=1}^{t-1}x_s$, a straightforward upper bound on sequential Rademacher complexity is the following relaxation:
\begin{align}
	\label{rel:mirror}
	\Relax{T}{\F}{x_{1},\ldots, x_t}= \sqrt{ \norm{ \tilde{x}_{t-1}}^2 + \ip{\nabla \norm{\tilde{x}_{t-1}}^2}{x_t} + C (T - t + 1) }
\end{align}
\begin{proposition}
	\label{prop:mirror_relax}
	The relaxation \eqref{rel:mirror} is admissible and
	$$\Rad_T (\F | x_1,\ldots,x_t)\leq \Relax{T}{\F}{x_{1},\ldots, x_t} \ .$$
	Furthermore, it leads to the Mirror Descent algorithm with regret at most $\Rel{T}{\F}\leq \sqrt{2C T}$. 
\end{proposition}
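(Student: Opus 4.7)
The plan is to verify admissibility of the relaxation in \eqref{rel:mirror} via a two-step induction, identify the $\inf_q$ minimizer explicitly (which will turn out to be the Mirror Descent update), and then check the Rademacher dominance and the final bound. Throughout we use that, for a $(2,C)$-smooth norm,
\[
\|y+x\|^2 \;\le\; \|y\|^2 + \ip{\nabla\|y\|^2}{x} + C\|x\|^2,
\]
and that $\|\nabla\|y\|^2\|_\F = 2\|y\|$ (primal-dual pairing with $\F$ the unit ball of the predual).

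First I would establish the terminal condition. At time $T$, the loss tail is $-\inf_f\sum_t\ip{f}{x_t} = \|\tilde{x}_T\|$ (since $\F$ is the unit ball). Applying the smoothness inequality to $\tilde{x}_T = \tilde{x}_{T-1}+x_T$ and using $\|x_T\|\le 1$ gives $\|\tilde{x}_T\|^2 \le \|\tilde{x}_{T-1}\|^2 + \ip{\nabla\|\tilde{x}_{T-1}\|^2}{x_T} + C$, which is exactly $\Relax{T}{\F}{x_1,\ldots,x_T}^2$ (here $T-t+1=1$).

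For the inductive step, fix a prefix $x_1,\ldots,x_t$ and write, with $A := \|\tilde{x}_t\|^2 + C(T-t)$ and $B := \nabla\|\tilde{x}_t\|^2$,
\[
\Relax{T}{\F}{x_1,\ldots,x_t,x} \;=\; \sqrt{A + \ip{B}{x}}.
\]
The key trick is to linearize via concavity of $\sqrt{\cdot}$: $\sqrt{A+u} \le \sqrt{A} + u/(2\sqrt{A})$. Substituting and taking $\inf_{q}\sup_x$,
\[
\inf_{q}\sup_x\!\Big\{\Es{f\sim q}{\ip{f}{x}} + \Relax{T}{\F}{x_1,\ldots,x_t,x}\Big\} \;\le\; \sqrt{A} \;+\; \inf_{f\in\F}\sup_{x\in\X} \Big\langle f + \tfrac{B}{2\sqrt{A}},\,x\Big\rangle.
\]
Choosing the deterministic $f_{t+1} = -B/(2\sqrt{A}) = -\nabla\|\tilde{x}_t\|^2\big/2\sqrt{\|\tilde{x}_t\|^2+C(T-t)}$ kills the linear term; this $f_{t+1}$ lies in $\F$ because $\|f_{t+1}\| = \|\tilde{x}_t\|/\sqrt{\|\tilde{x}_t\|^2+C(T-t)}\le 1$. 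In the Euclidean case this is exactly the Mirror Descent step $f_{t+1} = -\eta_t\tilde{x}_t$ with an adaptive $\eta_t$. It remains to show $\sqrt{A} \le \Relax{T}{\F}{x_1,\ldots,x_t}$, i.e.
\[
\|\tilde{x}_t\|^2 + C(T-t) \;\le\; \|\tilde{x}_{t-1}\|^2 + \ip{\nabla\|\tilde{x}_{t-1}\|^2}{x_t} + C(T-t+1),
\]
which is again the smoothness inequality applied to $\tilde{x}_t = \tilde{x}_{t-1}+x_t$ combined with $\|x_t\|\le 1$. This closes admissibility and simultaneously exhibits the algorithm.

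For the Rademacher dominance, I would apply Jensen to pass from $\sup_f \|\cdot\|$ to $\sqrt{\En\|\cdot\|^2}$ and then iterate the $(2,C)$-smoothness inequality along the path of the $\X$-valued tree (viewing $2\epsilon_s\x_s(\epsilon)$ as a martingale-difference sequence in the $(2,C)$-smooth norm) to bound $\En_\epsilon\|-\tilde{x}_t + 2\sum_{s=t+1}^T \epsilon_s\x_s(\epsilon)\|^2$ by $\|\tilde{x}_t\|^2$ plus a term linear in $T-t$; this yields $\Rad_T(\F\mid x_1,\ldots,x_t)\le \Relax{T}{\F}{x_1,\ldots,x_t}$ up to the stated constant. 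Finally, by the convention $\tilde{x}_{-1}=0$ and $\nabla\|\cdot\|^2(0)=0$, the initial relaxation reduces to $\Rel{T}{\F}=\sqrt{CT}$, so Proposition~\ref{prop:main} gives the regret bound $\sqrt{2CT}$ (the extra $\sqrt{2}$ absorbing the factor from the Rademacher symmetrization convention). Since $q_t$ is deterministic, the bound holds pointwise in the adversary's sequence.

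The routine parts are the two smoothness-inequality invocations and the choice of $f_{t+1}$; the one place that needs care is the martingale-type estimate used to control the tree expectation in the Rademacher comparison, because we must apply $(2,C)$-smoothness conditionally along each branch of the tree rather than to an i.i.d.\ sum — this is the only nontrivial step and is exactly what forces the assumption that the norm is $(2,C)$-smooth rather than merely smooth in a pointwise sense.
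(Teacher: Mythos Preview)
Your argument is correct and in fact cleaner than the paper's on the admissibility step, though it proves slightly less. The paper does not linearize via concavity of $\sqrt{\cdot}$; instead it parametrizes $f_t = -\alpha\nabla\|\tilde{x}_{t-1}\|^2 + g$ and $x_t = \beta\tilde{x}_{t-1}+\gamma y$ with $y\in\ker(\nabla\|\tilde{x}_{t-1}\|^2)$, and solves the saddle point \emph{exactly}, arriving at the same $\alpha$ you found but also showing that the adversary's best response has $\beta=0$. This extra work establishes that Mirror Descent is not merely admissible for the relaxation but is the \emph{optimal} strategy for it. Your concavity trick bypasses that computation and directly certifies admissibility, which is all that Proposition~\ref{prop:main} needs for the regret bound; the price is that you do not recover the optimality claim the paper makes in the first paragraph of its proof. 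For the Rademacher dominance and the derivation of the relaxation, the two arguments coincide (Jensen followed by iterated smoothness along the tree; there is nothing genuinely martingale-theoretic needed beyond conditioning on $\epsilon_{1:s-1}$ when you expand the square).

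One small correction: your accounting for the factor $\sqrt{2}$ is off. Proposition~\ref{prop:main} gives $\Reg_T\le\Rel{T}{\F}$ with no symmetrization constant, so no extra $\sqrt{2}$ enters at that stage. Evaluating the relaxation at the empty prefix (with $\tilde{x}_0=0$ and the $t=0$ instance of the admissibility inequality) yields $\sqrt{C T}$, so the stated $\sqrt{2CT}$ is simply a loose constant in the paper, not a consequence of the Rademacher factor of $2$.
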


An important feature of the algorithms we just proposed is the absence of any parameters, as the step size is tuned automatically. We had chosen Exponential Weights and Mirror Descent for illustration because these methods are well-known. Our aim at this point was to show that the associated relaxations arise naturally (typically with a few steps of algebra) from the sequential Rademacher complexity. More examples are included later in the paper. It should now be clear that upper bounds, such as the Dudley Entropy integral, can be turned into a relaxation, provided that admissibility is proved. Our ideas have semblance of those in Statistics, where an information-theoretic complexity can be used for defining penalization methods.

\section{Localized Complexities and the Localized-Meta Algorithm}
\label{sec:local}

The localized analysis plays an important role in Statistical Learning Theory. The basic idea is that better rates can be proved for empirical risk minimization when one considers the empirical process in the vicinity of the target hypothesis \cite{koltchinskii2002empirical,bartlett2005local}. Through this, localization gives \emph{extra information} by shrinking the size of the set which needs to be analyzed. What does it mean to localize in online learning? As we obtain more data, we can rule out parts of $\F$ as  those that are unlikely to become the leaders. This observation indeed gives rise to faster rates. Let us develop a general framework of localization and then illustrate it on examples. We emphasize that the localization ideas will be developed at an abstract level where no assumptions are placed on the loss function or the sets $\F$ and $\X$.

Given any $x_1,\ldots,x_{t} \in \X$, for any $k \ge 1$ define 
$$
\F^k(x_1,\ldots,x_{t}) = \left\{f \in \F : \exists~ x_{t+1},\ldots,x_{t+k} \in \X ~\textrm{ s.t. }~ \sum_{i=1}^{t+k} \ell(f,x_i) = \inf_{f \in \F} \sum_{i=1}^{t+k} \ell(f,x_i) \right\} \ .
$$
That is, given the instances $x_1,\ldots,x_t$, the set $\F^k(x_1,\ldots,x_{t})$ is the set of elements that could be the minimizers of cumulative loss on $t+k$ instances, the first $t$ of which are $x_1,\ldots,x_t$ and the remaining $k$ arbitrary. We shall refer to minimizers of cumulative loss as {\em empirical risk minimizers} (or, ERM). 

Importantly,
$$
\Val_T(\F|x_1,\ldots,x_t) = \Val_T\left(\F^{T-t}(x_1,\ldots,x_t)|x_1,\ldots,x_t\right) \ .
$$
Henceforth, we shall use the notation $\tilde{k}_j \deq \sum_{i=1}^j k_i$. We now consider subdividing $T$ into blocks of time $k_1,\ldots,k_m \in [T]$ such that $\tilde{k}_m = T$. With this notation, $\tilde{k}_i$ is the last time in the $i$th block. We then have regret upper bounded as
\begin{align}
	\label{eq:split-regret}
 \sum_{t=1}^T \ell(f_t,x_t) - \inf_{f \in \F} \sum_{t=1}^T \ell(f,x_t) \le  \sum_{t=1}^T \ell(f_t,x_t) - \sum_{i=1}^m \inf_{f \in \F^{k_i}\left(x_1,\ldots,x_{\tilde{k}_{i-1}}\right)} \sum_{t=\tilde{k}_{i-1}+1}^{\tilde{k}_{i}} \ell(f,x_t) \ . 
\end{align}
The short inductive proof is given in Appendix, Lemma~\ref{lem:reg_split}. We can now bound \eqref{eq:split-regret} by
\begin{align*}
&\sum_{i=1}^m \left( \sum_{t=\tilde{k}_{i-1}+1}^{\tilde{k}_{i}} \ell(f,x_t) -  \inf_{f \in \F^{k_i}\left(x_1,\ldots,x_{\tilde{k}_{i-1}}\right)} ~\sum_{t=\tilde{k}_{i-1}+1}^{\tilde{k}_{i}} \ell(f,x_t) \right)\\
& ~~~~~~~\le  \sum_{i=1}^m \Reg_{k_i}(x_{\tilde{k}_{i-1}},\ldots,x_{\tilde{k}_{i}},f_{\tilde{k}_{i-1}},\ldots,f_{\tilde{k}_{i}},\F^{k_i}(x_1,\ldots,x_{\tilde{k}_{i-1}}))
\end{align*}

Hence, one can decompose the online learning game into blocks of $m$ successive games. The crucial point to notice is that at the $i^{th}$ block, we do not compete with the best hypothesis in all of $\F$ but rather only $\F^{k_i}(x_1,\ldots,x_{\tilde{k}_{i-1}})$. It is this localization based on history that could lead to possibly faster rates. While the ``blocking'' idea often appears in the literature (for instance, in the form of a doubling trick, as described below), the process is usually ``restarted'' from scratch by considering all of $\F$. Notice further that one need not choose all $k_1,\ldots,k_m$ in advance. The player can choose $k_i$ based on history $x_1,\ldots,x_{\tilde{k}_{i-1}}$ and then use, for instance, the Meta-Algorithm introduced in previous section to play the game within the block $k_i$ using the localized class $\F^{k_i}(x_1,\ldots,x_{\tilde{k}_{i-1}})$. Such adaptive procedures will be considered in Section~\ref{sec:adaptive}, but presently we assume that the block sizes $k_1,\ldots,k_m$ are fixed.

While the successive localizations using subsets $\F^{k_i}(x_1,\ldots,x_{\tilde{k}_{i-1}})$ can provide an  algorithm with possibly better performance, specifying and analyzing the localized subset $\F^{k_i}(x_1,\ldots,x_{\tilde{k}_{i-1}})$ exactly might not be possible. In such a case, one can instead use
$$
\F_r(x_1,\ldots,x_{\tilde{k}_{i-1}}) = \left\{ f \in \F : P\left(f ~|~ x_1,\ldots,x_{\tilde{k}_{i-1}}\right) \le r \right\}
$$
where $P$ is some ``property'' of $f$ given data. This definition echoes the definition of the set of $r$-minimizers of empirical or expected risk in Statistical Learning. Further, for a given $k$ define 
$$
r(k;x_1,\ldots,x_t) =  \inf\{r :   \F^{k}(x_1,\ldots,x_t) \subset \F_r(x_1,\ldots,x_{t})\}
$$
the smallest ``radius'' such that $\F_r$ includes the set of potential minimizers over the next $k$ time steps. Of course, if the property $P$ does not enforce localization, the bounds are not going to exhibit any improvement, so $P$ needs to be chosen carefully for a particular problem of interest. 

We have the following algorithm:
\begin{algorithm}[h]
\caption{Localized Meta-Algorithm}
\begin{algorithmic}
\STATE Parameters :  Relaxation $\mbf{Rel}$ 
\STATE Initialize $t=0$ and blocks $k_1,\ldots,k_m$ s.t. $\sum_{i=1}^m k_i = T$
\FOR{$i=1$ to $m$}
    \STATE Play $k_i$ rounds  using $\mbf{MetAlgo}\left(\F_{r(k_i;x_1,\ldots,x_t)}\right)$ and set $t = t + k_i$
\ENDFOR
\end{algorithmic}
\label{alg:locmet}
\end{algorithm}

\begin{lemma}\label{lem:locseq}
The regret of the Localized Meta-Algorithm is bounded as
\begin{align*}
\Reg_T(x_1,\ldots,x_T) \le \sum_{i=1}^{m} \Rel{k_i}{\F_{r\left(k_i;x_1,\ldots,x_{\tilde{k}_{i-1}}\right)}}
\end{align*}
\end{lemma}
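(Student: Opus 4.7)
The plan is to chain together three facts that are already on the table. First, I would start from inequality \eqref{eq:split-regret}, which is stated just above the algorithm (and proved via the auxiliary Lemma the authors point to in the appendix): this already reduces the global regret against $\F$ to a sum of block-regrets against the smaller ERM-based comparator sets $\F^{k_i}(x_1,\ldots,x_{\tilde{k}_{i-1}})$. So after this first step, my only job is to control
\begin{align*}
\sum_{i=1}^m\left(\sum_{t=\tilde{k}_{i-1}+1}^{\tilde{k}_i}\ell(f_t,x_t)-\inf_{f\in\F^{k_i}(x_1,\ldots,x_{\tilde{k}_{i-1}})}\sum_{t=\tilde{k}_{i-1}+1}^{\tilde{k}_i}\ell(f,x_t)\right).
\end{align*}

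Second, I would pass from $\F^{k_i}(\cdot)$ to the property-defined localization $\F_{r(k_i;\cdot)}$. By the very definition of $r(k_i;x_1,\ldots,x_{\tilde{k}_{i-1}})$ as the smallest radius for which $\F^{k_i}(x_1,\ldots,x_{\tilde{k}_{i-1}})\subset \F_{r}(x_1,\ldots,x_{\tilde{k}_{i-1}})$, the infimum of the block loss over the larger set $\F_{r(k_i;\cdot)}$ is no greater than the infimum over $\F^{k_i}(\cdot)$. Substituting this enlargement into each summand only weakens the inequality, and the remaining quantity inside the $i$-th summand is precisely the standard regret of the player on rounds $\tilde{k}_{i-1}+1,\ldots,\tilde{k}_i$ measured against the comparator class $\F_{r(k_i;x_1,\ldots,x_{\tilde{k}_{i-1}})}$.

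Third, I would invoke Proposition~\ref{prop:main} block-by-block. Once we condition on the prefix $x_1,\ldots,x_{\tilde{k}_{i-1}}$, the radius $r(k_i;x_1,\ldots,x_{\tilde{k}_{i-1}})$ is a fixed number, the class $\F_{r(k_i;x_1,\ldots,x_{\tilde{k}_{i-1}})}$ is a fixed set, and block $i$ becomes a self-contained $k_i$-round online learning game in which the player runs $\mbf{MetAlgo}$ on the relaxation $\Rel{k_i}{\F_{r(k_i;\cdot)}}$. Proposition~\ref{prop:main} therefore bounds the block-$i$ regret by $\Rel{k_i}{\F_{r(k_i;x_1,\ldots,x_{\tilde{k}_{i-1}})}}$, and summing over $i$ yields the stated inequality.

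The only conceptual subtlety is verifying the block-wise application of Proposition~\ref{prop:main}: the adversary is adaptive across blocks, and the comparator class used in block $i$ is itself a function of the data observed in earlier blocks. This is not a genuine obstacle, however, because Proposition~\ref{prop:main} holds uniformly over adversary strategies, and because $\F_{r(k_i;x_1,\ldots,x_{\tilde{k}_{i-1}})}$ is already fixed at the start of block $i$; conditioning on $x_1,\ldots,x_{\tilde{k}_{i-1}}$, the proposition applies verbatim to the fresh game of length $k_i$. (If the statement is read in expectation, one takes expectations over the player's randomness; if one wants a deterministic pathwise version, the admissibility inequality from Proposition~\ref{prop:main} can be summed directly without taking expectations.) Apart from this check, the argument is a short bookkeeping exercise combining \eqref{eq:split-regret}, the definition of $r$, and Proposition~\ref{prop:main}.
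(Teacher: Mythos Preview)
Your proposal is correct and follows exactly the argument the paper lays out in the text surrounding the algorithm: apply the block decomposition \eqref{eq:split-regret} (proved in Lemma~\ref{lem:reg_split}), enlarge each comparator set from $\F^{k_i}(\cdot)$ to $\F_{r(k_i;\cdot)}(\cdot)$ via the definition of $r$, and then invoke Proposition~\ref{prop:main} on each block. The paper does not spell out a separate proof of this lemma beyond that discussion, so your write-up is effectively the intended argument.
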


Note that the above lemma points to local sequential complexities for online learning problems that can lead to possibly fast rates. In particular, if sequential Rademacher complexity is used as the relaxation in the Localized Meta-Algorithm, we get a bound in terms of \emph{local sequential Rademacher complexities}. 

\subsection{Local Sequential Complexities}
The following corollary is a direct consequence of Lemma \ref{lem:locseq}. 

\begin{corollary}[Local Sequential Rademacher Complexity]
For any property $P$ and any $k_1,\ldots,k_m \in \mathbb{N}$ such that $\sum_{i=1}^m k_i = T$, we have that :
$$
\Val_T(\F) \le \sup_{x_1,\ldots,x_T} \sum_{i=1}^m \Rad_{k_i}\left(\F_{r\left(k_i;x_1,\ldots,x_{\tilde{k}_{i-1}}\right)}\right)
$$
\end{corollary}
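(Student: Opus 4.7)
The plan is to instantiate the Localized Meta-Algorithm (Algorithm~\ref{alg:locmet}) with the conditional sequential Rademacher complexity chosen as the admissible relaxation. Proposition~\ref{prop:rad_admissible} establishes admissibility of $\Rad_T(\cdot\,|\,\cdot)$ for any function class, so in particular this choice is admissible when restricted to each localized subclass $\F_{r(k_i;x_1,\ldots,x_{\tilde{k}_{i-1}})}$ played within a single block of length $k_i$. In other words, each sub-game run by the inner $\mbf{MetAlgo}$ invocation fits the framework of Proposition~\ref{prop:main}.

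With this substitution, the pathwise regret guarantee of Lemma~\ref{lem:locseq} specializes to
\begin{align*}
\Reg_T(x_1,\ldots,x_T) \le \sum_{i=1}^m \Rad_{k_i}\!\left(\F_{r(k_i;x_1,\ldots,x_{\tilde{k}_{i-1}})}\right)
\end{align*}
for every sequence $x_1,\ldots,x_T$ that the adversary can produce. Taking the supremum of the right-hand side over all such sequences gives a uniform upper bound; since the Localized Meta-Algorithm is itself a valid randomized strategy for the player, the minimax value $\Val_T(\F)$ is upper bounded by the worst-case expected regret of this strategy, exactly as in the chain $\Val_T(\F)\le \Rel{T}{\F}$ argument inside Proposition~\ref{prop:main}. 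Combining these two observations yields the claimed inequality.

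The only thing to check carefully is that the local radii $r(k_i;x_1,\ldots,x_{\tilde{k}_{i-1}})$ are computed from the actual history produced by the adversary during play, so one has to be sure that the supremum on the right of the corollary dominates this data-dependent sum; but this is automatic once the bound of Lemma~\ref{lem:locseq} is already pathwise. I do not anticipate a real obstacle here, which matches the statement being flagged as a direct corollary rather than a stand-alone result.
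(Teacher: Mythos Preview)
Your proposal is correct and matches the paper's own treatment: the corollary is stated as ``a direct consequence of Lemma~\ref{lem:locseq},'' and you have supplied exactly the missing glue --- instantiate the relaxation in the Localized Meta-Algorithm with the (conditional) sequential Rademacher complexity, invoke Proposition~\ref{prop:rad_admissible} for admissibility on each block, apply Lemma~\ref{lem:locseq} pathwise, and take the supremum over adversary sequences to bound the minimax value. There is nothing more to add.
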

Clearly, the sequential Rademacher complexities in the above bound can be replaced with other sequential complexity measures of the localized classes that are upper bounds on the sequential Rademacher complexities. For instance, one can replace each Rademacher complexity $\Rad_{k_i}$ by covering number based bounds of the local classes, such as the analogues of the Dudley Entropy Integral bounds developed in the sequential setting in \cite{RakSriTew10nips}. Once can also use, for instance, fat-shattering dimension based complexity measures for these local classes.

\subsection{Examples}
\subsubsection{Example : Doubling trick} \label{subsec:double}
The doubling trick can be seen as a particular blocking strategy with $k_i = 2^{i-1}$ so that
\begin{align*}
\Reg_T(x_1,\ldots,x_T) & \le \sum_{i=1}^{\lceil \log_2 T \rceil + 1} \Rel{2^{i-1}}{\F_{r(2^{i-1};x_1,\ldots,x_{\sum_{j=1}^{i-1} 2^{j-1}})}}
& \le \sum_{i=1}^{\lceil \log_2 T\rceil + 1} \Rel{2^{i-1}}{\F}
\end{align*}
for $\F_r$ defined with respect to some property $P$. The latter inequality is potentially loose, as the algorithm is ``restarted'' after the previous block is completed. Now if $\mbf{Rel}$ is such that for any $t$, $\Rel{t}{\F} \le t^p$ for some $p$ then the regret is upper bounded by $\frac{T^p - 2^{-p}}{1 - 2^{-p} }$.
The main advantage of the doubling trick is of course that we do not need to know $T$ in advance.

\subsubsection{Example : Strongly Convex Loss}
To illustrate the idea of localization, consider online convex optimization with $\lambda$-strongly convex functions $x_t:\F\mapsto\reals$ (that is, $\ell(f,x) = x(f)$). Define
\begin{align*}
\Relax{T}{\F}{ x_{1},\ldots,x_{t}} =  - \inf_{f \in \F} \sum_{i=1}^t x_i(f) + (T-t)  \inf_{f \in\F} \sup_{f' \in \F} \|f - f'\|
\end{align*}
An easy Lemma~\ref{lem:strconv} in the Appendix shows that this relaxation is admissible. Notice that this relaxation grows linearly with block size and is by itself quite bad. However, with blocking and localization, the relaxation gives an optimal bound for strongly convex objectives. To see this note that for $k = 1$, any minimizer of $\sum_{i=1}^{t+1} x_i(f)$ has to be close to the minimizer $\hat{f}_t$ of $\sum_{i=1}^{t} x_i(f)$, due to strong convexity of the functions. In other words, the property $$P(f|x_1,\ldots,x_t) = \|f-\hat{f}_t\|$$ 
with $r=1/(\lambda t)$ entails
$$
\F^1(x_1,\ldots,x_t) \subseteq  \left\{f \in \F : \|f - \hat{f}_t\| \le 1/(\lambda t)\right\} = \F_r(x_1,\ldots,x_t) . 
$$

The relaxation for the block of size $k=1$ is
$$
\Rel{1}{\F_r(x_1,\ldots,x_t)} \le \inf_{f \in\F_r(x_1,\ldots,x_t)} \sup_{f' \in \F_r(x_1,\ldots,x_t)} \|f - f'\|,
$$
the radius of the smallest ball containing the localized set $\F_r(x_1,\ldots,x_t)$, and we immediately get
$$
\Reg_T(x_1,\ldots,x_T) \le   \sum_{t=1}^T 1/(\lambda t) \le (1 + \log(T))/\lambda \ .
$$
We remark that this proof is different in spirit from the usual proofs of fast rates for strongly convex functions, and it demonstrates the power of localization.

\section{Adaptive Procedures}
\label{sec:adaptive}

There is a strong interest in developing methods that enjoy worst-case regret guarantees but also take advantage of the suboptimality of the sequence being played by Nature. An algorithm that is able to do so without knowing in advance that the sequence will have a certain property will be called \emph{adaptive}. Imagine, for instance, running an experts algorithm, and one of the experts has gained such a lead that she is clearly the winner (that is, the empirical risk minimizer) at the end of the game. In this case, since we are to be compared with the leader at the end, we need not focus on anyone else, and regret for the remainder of the game is zero. 

There has been previous work on exploiting particular ways in which sequences can be suboptimal. Examples include the Adaptive Gradient Descent of \cite{BarHazRak07} and Adaptive Hedge of \cite{adaptivehedge11}. We now give a generic method which incorporates the idea of localization in order to adaptively (and constantly) check whether the sequence being played is of optimal or suboptimal nature. Notice that, as before, we present the algorithm at the abstract level of the online game with some decision sets $\F$, $\X$, and some loss function $\ell$. 

The adaptive procedure below uses a subroutine $\mbf{Block}(\{x_1,\ldots,x_t\}, \tau)$ which, given the history $\{x_1,\ldots,x_t\}$, returns a subdivision of the next $\tau$ rounds into sub-blocks. The choice of the blocking strategy has to be made for the particular problem at hand, but, as we show in examples, one can often use very simple strategies.

Let us describe the adaptive procedure. First, for simplicity of exposition, we start with the doubling-size blocks. Here is what happens within each of these blocks. During each round the learner decides whether to stay in the same sub-block or to start a new one, as given by the blocking procedure $\mbf{Block}$. If started, the new sub-block uses the localized subset given history of adversary's moves up until last round. Choosing to start a new sub-block corresponds to the realization of the learner that the sequence being presented so far is in fact suboptimal. The learner then incorporates this suboptimality into the localized procedure.

\newcommand{\nbl}{{\tt nbl}}
\newcommand{\curr}{{\tt curr}}

\begin{algorithm}[h]
\caption{Adaptive Localized Meta-Algorithm}
\begin{algorithmic}
\STATE Parameters :  Relaxation $\mbf{Rel}$ and block size calculator $\mbf{Block}$.
\STATE Initialize $t=1$ and $\nbl = 1$, and suppose $T=2^{c}-1$ for some $c\geq 2$.
\FOR{$i = 1$ to $c$} 
\STATE $G = \Rel{2^i}{\F_r(2^i;x_1,\ldots,x_{t-1})}$ \hspace{1.5in} {\scriptsize\tt\%  guaranteed value of  relaxation }
\STATE $m = 1, \curr =1$ and $K_1 = 2^i$
\WHILE{$\curr \le 2^i$ and $t \le T$ }
 \STATE $(\kappa_1,\ldots,\kappa_{m'}) = \mbf{Block}\left(\{x_1,\ldots,x_t\},2^i-\curr\right)$ \hspace{0.6in} {\scriptsize\tt\% blocking for remainder of $2^i$ }
 	\IF{$G > \sup_{x_{t+1},\ldots,x_{2^{i+1}-1}}\sum_{j=1}^{m'} \Rel{\kappa_j}{\F_{r(\kappa_i;x_1,\ldots,x_{t+\tilde{\kappa}_{j-1}})}}$ } 
 		\STATE $k^*_\nbl = \kappa_1$, $K = (\kappa_2,\ldots,\kappa_{m'})$, $m = m'-1$ \hspace{0.3in} {\scriptsize\tt\%  if better value, accept new blocking } 
	\ELSE
		\STATE $k^*_\nbl = K_1$, $K = (K_2,\ldots,K_m)$, $m = m-1$ \hspace{0.3in} {\scriptsize\tt\%  else continue with current blocking }
	\ENDIF
    \STATE Play $k^*_\nbl$ rounds  using $\mbf{MetAlgo}(\F_{r(k^*_\nbl;x_1,\ldots,x_t)})$ \\[2mm]
    \STATE $\nbl = \nbl+1$, $t = t+k^*_\nbl$, $\curr = \curr+k^*_\nbl$ 
	\STATE Let $$G = \sup_{x_{t+1},\ldots,x_{2^{i+1}-1}}\sum_{j=1}^{m} \Rel{K_j}{\F_{r(K_j;x_1,\ldots,x_{t+\sum_{i=1}^{j-1} K_i})}}$$		
    \ENDWHILE
\ENDFOR
\end{algorithmic}
\label{alg:adaplocmet}
\end{algorithm}

\begin{lemma}\label{lem:adaptive}
Given some admissible relaxation $\mbf{Rel}$, the regret of the adaptive localized meta-algorithm (Algorithm~\ref{alg:adaplocmet}) is bounded as
\begin{align*}
\Reg_T \le  \sum_{i=1}^{\nbl} \Rel{k^*_i}{\F_{r\left(k^*_i;x_1,\ldots,x_{\tilde{k}^*_{i-1}}\right)}} 
\end{align*}
where $\nbl$ is the number of blocks actually played and $k^*_i$'s are adaptive block lengths defined within the algorithm. Further, irrespective of the blocking strategy $\mbf{Block} $ used, if the relaxation $\mbf{Rel}$ is such that for any $t$, $\Rel{t}{\F} \le t^p$ for some $p \in (0,1]$, then the worst case regret is always bounded as
$$
\Reg_T \le  (T^p - 2^{-p})/(1 - 2^{-p}) \ .
$$
\end{lemma}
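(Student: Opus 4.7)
The proof has two components. For the first claim I would apply the regret-decomposition inequality (Lemma~\ref{lem:reg_split}) to the realized adaptive block structure $k^*_1,\ldots,k^*_{\nbl}$; that decomposition is purely combinatorial, so it remains valid when the block lengths are chosen on the fly by the player as a function of history. Within the $i$th block, Algorithm~\ref{alg:adaplocmet} runs the basic Meta-Algorithm on the localized class $\F_{r(k^*_i;x_1,\ldots,x_{\tilde{k}^*_{i-1}})}$, which by definition of $r$ contains $\F^{k^*_i}(x_1,\ldots,x_{\tilde{k}^*_{i-1}})$. Consequently the sub-regret against the constrained ERM $\inf_{f\in\F^{k^*_i}(\cdots)}\sum_{t}\ell(f,x_t)$ appearing in Lemma~\ref{lem:reg_split} is dominated by the sub-regret against the best element of $\F_{r(k^*_i;\cdots)}$, and Proposition~\ref{prop:main} applied block-wise bounds the latter by $\Rel{k^*_i}{\F_{r(k^*_i;x_1,\ldots,x_{\tilde{k}^*_{i-1}})}}$. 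Summing over $i=1,\ldots,\nbl$ yields the first assertion.

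For the second claim I would focus on a single outer iteration $i$ (of budget $2^i$ rounds) and maintain the invariant that, writing $R$ for the sum of relaxation values of the sub-blocks already played in this outer iteration and $G$ for the algorithm's current ``remaining guarantee'', the quantity $R+G$ is non-increasing and is upper bounded by the initial value $G_0 := \Rel{2^i}{\F_{r(2^i;\cdot)}} \le \Rel{2^i}{\F} \le (2^i)^p$. Verifying the invariant amounts to tracing the two branches of the \textbf{if/else} inside the while loop: in the accept branch, the acceptance test $G > \sup\sum_j \Rel{\kappa_j}{\cdots}$ combined with playing the first sub-block $\kappa_1$ of the new proposal gives $R_{\text{new}}+G_{\text{new}} = R + \sum_{j}\Rel{\kappa_j}{\cdots} < R+G$; in the reject branch $R+G$ is preserved exactly since we merely shift $\Rel{K_1}{\cdots}$ from $G$ into $R$. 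Hence the total relaxation accumulated during outer iteration $i$ is at most $(2^i)^p$, and summing over $i=1,\ldots,c$ with $T=2^c-1$ reproduces the geometric-series calculation of Section~\ref{subsec:double} and delivers the bound $(T^p-2^{-p})/(1-2^{-p})$.

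The main obstacle is the bookkeeping around the supremum over future adversary moves that is baked into the definition of $G$. Because both sides of the acceptance inequality are computed at a single point in time and involve an outer $\sup_{x_{t+1},\ldots}$, one must check that this comparison continues to control the regret actually incurred against the $x$'s the adversary picks later; this is where admissibility of $\mbf{Rel}$ together with monotonicity of the supremum is used to translate the snapshot inequality into a sequential guarantee. A minor secondary point is the edge case in which the \textbf{while} loop exits because $t>T$ before the budget $2^i$ is exhausted; here the invariant still yields $R \le G_0 \le (2^i)^p$, which is all that is needed to push the geometric sum through.
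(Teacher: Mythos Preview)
The paper does not include an explicit proof of Lemma~\ref{lem:adaptive}; it is stated and then used in the proofs of Lemmas~\ref{lem:adaptive_gd} and~\ref{lem:adaptivehedge}, but no standalone argument appears in the appendix. Your proposal therefore cannot be compared line-by-line against the authors' proof, but it is the natural argument and is essentially correct.

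For the first claim, your reduction to Lemma~\ref{lem:reg_split} plus a block-wise application of Proposition~\ref{prop:main} is exactly right, and your observation that Lemma~\ref{lem:reg_split} is purely a statement about the comparator (hence insensitive to how the player adaptively chooses block lengths) is the key point.

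For the second claim, your invariant $R+G$ is the right bookkeeping device. One small imprecision: in the \emph{reject} branch you say $R+G$ is ``preserved exactly'', but in fact it can strictly decrease. The old $G$ carries a supremum over $x_{t_{\mathrm{old}}+1},\ldots$, whereas after playing the block $K_1$ the new $G$ takes the supremum only over $x_{t_{\mathrm{new}}+1},\ldots$ with the just-observed $x$'s plugged in; since a supremum over more variables dominates, you get $R_{\mathrm{new}}+G_{\mathrm{new}}\le R+G$ rather than equality. This goes the right way, so the invariant still holds. Your handling of the accept branch and of the terminal case is fine. The final step uses $\Rel{2^i}{\F_{r(2^i;\cdot)}}\le\Rel{2^i}{\F}$, i.e.\ monotonicity of the relaxation in the class; the paper uses the same inequality without comment in Section~\ref{subsec:double}, so this is an implicit standing assumption.
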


We now demonstrate that the adaptive algorithm in fact takes advantage of sub-optimality in several situations that have been previously studied in the literature. On the conceptual level, adaptive localization allows us to view several fast rate results under the same umbrella.

\paragraph{Example: Adaptive Gradient Descent}

Consider the online convex optimization scenario. Following the setup of \cite{BarHazRak07}, suppose the learner encounters a sequence of convex functions $x_t$ with the strong convexity parameter $\sigma_t$, potentially zero, with respect to a $(2,C)$-smooth norm $\|\cdot\|$. The goal is to adapt to the actual sequence of functions presented by the adversary. Let us invoke the Adaptive Localized Meta-Algorithm with a rather simple blocking strategy 
$$
\mbf{Block}\left(\{x_1,\ldots,x_t\},k\right) = \left\{\begin{array}{cc}
(k) & \textrm{if }\sqrt{k} >  \sigma_{1:t}\\
(1,1,\ldots,1) & \textrm{otherwise } 
\end{array}\right.
$$ 
This blocking strategy either says ``use all of the next $k$ rounds as one block'', or ``make each of the next $k$ time step into separate blocks''. Let $\hat{f}_t$ be the empirical minimizer at the start of the block (that is after $t$ rounds), and let $y_t = \nabla x_t(f_t)$. Then we can use the localization 
\begin{align*}
\F_{r(k;x_1,\ldots,x_t)} &= \left\{f \in \F : \|f - \hat{f}_t\| \le 2 \min\left\{1, k/\sigma_{1:t}\right\}\right\} 
\end{align*}
and relaxation
\begin{align*}
\Relax{k}{\F_{r(k;x_1,\ldots,x_t)}}{y_1,\ldots,y_i} &=  - \ip{\hat{f}_{t}}{\tilde{y}_i} + 2 \min\left\{1, k/\sigma_{1:t}\right\} \left( \norm{\tilde{y}_{i-1}}^2 +  \ip{\nabla \norm{\tilde{y}_{i-1}}^2}{y_i}+ C (k-i+1) \right)^{1/2}
\end{align*}
where $\tilde{y}_{i-1} = \sum_{j=1}^{i-1} y_j$. For the above relaxation we can show that the corresponding update at round $t+i$ is given by 
\begin{align*}
f_{t+i} = \hat{f}_t - \max\left\{ 1 ,\frac{k}{ \sigma_{1:t} }\right\}  \frac{- \nabla\norm{\bar{x}_{i-1}}^2}{\sqrt{ \norm{\bar{x}_{i-1}}^2 + C (k-i+1)  }}
\end{align*}
where $k$ is the length of the current block. The next lemma shows that the proposed adaptive gradient descent recovers the results of \cite{BarHazRak07}. The method is a mixture of Follow the Leader -style algorithm and a Gradient Descent -style algorithm.

\begin{lemma}
	\label{lem:adaptive_gd}
The relaxation specified above is admissible. Suppose the adversary plays $1$-Lipchitz convex functions $x_1,\ldots,x_T$ such that for any $t \in [T]$, $\sum_{i=1}^t x_i$ is $\sigma_{1:t}$-strongly convex, and further suppose that for some $B \le 1$, we have that $\sigma_{1:t} = B t^{\alpha}$. Then, for the blocking strategy specified above,
\begin{enumerate}
\item If $\alpha \le 1/2$ then $\Reg_T \le O\left(\sqrt{T}\right)$ 
\item If $1 > \alpha > 1/2$ then $\Reg_T \le O(\frac{T^{1 - \alpha}}{B})$ 
\item If $\alpha = 1$ then $\Reg_T \le O\left(\frac{\log T}{B}\right)$ 
\end{enumerate}

\end{lemma}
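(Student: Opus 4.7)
The plan for the first half is to reduce admissibility of the hybrid relaxation to that of the mirror-descent relaxation of Proposition~\ref{prop:mirror_relax} via a translation and a linearization. By convexity and $1$-Lipschitzness of each $x_{t+i}$, we have $x_{t+i}(f)-x_{t+i}(\hat f_t)\le\ip{y_i}{f-\hat f_t}$ with $\|y_i\|\le 1$, so it suffices to verify the admissibility inequality~\eqref{eq:relax_admissibility} on the linearized losses $\ip{f}{y}$. After translating by $\hat f_t$, the stated relaxation becomes exactly~\eqref{rel:mirror} on the ball of radius $2\min\{1,k/\sigma_{1:t}\}$ centered at the origin, with the linear piece $-\ip{\hat f_t}{\tilde y_i}$ telescoping across the admissibility step. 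The inductive step therefore reduces verbatim to the one already proved in Proposition~\ref{prop:mirror_relax}. The terminal condition at $i=k$ reads $-\ip{\hat f_t}{\tilde y_k}+2\min\{1,k/\sigma_{1:t}\}\sqrt{\|\tilde y_{k-1}\|^2+\ip{\nabla\|\tilde y_{k-1}\|^2}{y_k}+C}\ge -\inf_{f\in\F_r}\ip{f}{\tilde y_k}$, which holds because $(2,C)$-smoothness gives $\sqrt{\cdot}\ge\|\tilde y_k\|$ and $\F_r$ is a norm ball of the stated radius around $\hat f_t$. Finally, the inclusion $\F^k(x_1,\ldots,x_t)\subseteq \F_{r(k;x_1,\ldots,x_t)}$ follows from $\sigma_{1:t}$-strong convexity: adding up to $k$ further $1$-Lipschitz convex losses can shift the empirical minimizer by at most $k/\sigma_{1:t}$, which the factor $2$ in the localization radius absorbs.

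\textbf{Per-block bound and summation.} Plugging $\tilde y_0=0$ into the relaxation gives $\Rel{k}{\F_{r(k;x_1,\ldots,x_t)}}\le 2\min\{1, k/\sigma_{1:t}\}\sqrt{Ck}$. The blocking strategy produces two cleanly separated regimes: a full block of size $k$ (chosen when $\sqrt k>\sigma_{1:t}=Bt^\alpha$) costs $2\sqrt{Ck}$ since then $k/\sigma_{1:t}\ge 1$, while unit blocks (chosen when $\sqrt k\le\sigma_{1:t}$) cost $2\sqrt{C}/\sigma_{1:\tau}=O(1/(B\tau^\alpha))$ at each step $\tau$. Lemma~\ref{lem:adaptive} bounds total regret by the sum of per-block costs actually played, and the $G$-comparison inside Algorithm~\ref{alg:adaplocmet} forces the algorithm to pick the cheaper of the big-block and unit-block schedules at every checkpoint. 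In Case~1 ($\alpha\le 1/2$) the big-block schedule already sums to $\sum_i 2\sqrt{C\cdot 2^i}=O(\sqrt T)$, matching the $p=1/2$ universal bound from Lemma~\ref{lem:adaptive}. In Cases~2 and~3 ($\alpha>1/2$), once $t$ exceeds a constant threshold the unit-block tail $\sum_{\tau\ge t}1/(B\tau^\alpha)$ beats $\sqrt k$ for all relevant $k$, so the algorithm eventually commits to unit blocks; bounding the sum by an integral yields $O(T^{1-\alpha}/B)$ for $\alpha<1$ and $O(\log T/B)$ for $\alpha=1$, while the initial big-block contributions are of lower order.

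\textbf{Main obstacle.} The delicate work is not in any single inequality but in the bookkeeping around Algorithm~\ref{alg:adaplocmet}: the algorithm has no oracle access to $\alpha$ and only compares the guaranteed value $G$ to the next-step relaxation. I would need to verify carefully that these $G$-comparisons really reproduce the minimum of the two candidate schedules in each regime, rather than oscillating between them in a way that inflates the sum. Monotonicity of both candidate quantities in $t$ (once the regime is fixed) should make a short induction work, but tracking constants cleanly across the three cases is where the bulk of the technical writing lies.
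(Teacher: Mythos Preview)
Your admissibility argument is the same as the paper's: linearize via convexity, translate by $\hat f_t$, and reduce to Proposition~\ref{prop:mirror_relax} on a ball of radius $2\min\{1,k/\sigma_{1:t}\}$, checking the terminal condition by smoothness and the inclusion $\F^k\subseteq\F_r$ by strong convexity.

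For the regret bounds your route diverges from the paper's in one useful way and one unnecessarily complicated way. For $\alpha\le 1/2$ you simply invoke the universal $p=1/2$ fallback from Lemma~\ref{lem:adaptive}; this is valid and shorter than what the paper does, which is to analyze each doubling block in two pieces (a big sub-block of length $2t-k$ followed by unit blocks once $\sqrt{k}\le B(3t-k)^\alpha$) and show both pieces cost $O(\sqrt t)$. Your shortcut loses nothing since the target is $O(\sqrt T)$ anyway.

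For $\alpha>1/2$, however, your framing via the $G$-comparison and your stated ``main obstacle'' about possible oscillation are avoidable. The paper does not reason through the $G$-mechanism at all here: it observes that the doubling initialization forces $k\le 2t$, so $\sqrt{k}\le\sqrt{2t}$, and the $\mbf{Block}$ threshold $\sqrt{k}\le\sigma_{1:t}=Bt^\alpha$ is met as soon as $t^{\alpha-1/2}\ge\sqrt{2}/B$. Hence $\mbf{Block}$ itself returns unit blocks for all $t$ past a constant depending only on $(B,\alpha)$, and there is nothing to oscillate between. From that point the regret is $\sum_t 1/(Bt^\alpha)$, giving $O(T^{1-\alpha}/B)$ or $O(\log T/B)$ directly. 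So the bookkeeping you flag as the crux simply does not arise: argue from the $\mbf{Block}$ condition, not from the $G$-test.
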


\paragraph{Example: Adaptive Experts}

We now turn to the setting of Adaptive Hedge or Exponential Weights algorithm similar to the one studied in \cite{adaptivehedge11}. Consider the following situation: for all time steps after some $\tau$, there is an element (or, expert) $f$ that is the best by a margin $k$ over the next-best choice in $\F$ in terms of the (unnormalized) cumulative loss, and it remains to be the winner until the end. Let us use the localization 
\begin{align*}
\F_{r(k;x_1,\ldots,x_t)} = \left\{f \in \F ~:~ \sum_{i=1}^t \ell(f,x_i) - \min_{f \in \F} \sum_{i=1}^t \ell(f,x_i) \le k\right\} \ ,
\end{align*}
the set of functions closer than the margin to the ERM. Let 
$$\hat{\F}_t = \left\{f \in \F ~:~ \sum_{i=1}^t \ell(f,x_i) = \min_{f \in \F} \sum_{i=1}^t \ell(f , x_i)\right\}$$ 
be the set of empirical minimizers at time $t$. We use the blocking strategy 
\begin{align}
	\label{eq:hedge_blocking}
\mbf{Block}(\{x_{1},\ldots,x_t\},k) = (j, k-j)  ~~~\text{where}~~~ j =  \left\lfloor \min_{ f \notin \hat{\F}_t } \sum_{i=1}^t \ell(f,x_i) - \min_{f \in \hat{\F}_t}\sum_{i=1}^t \ell(f,x_i) \right\rfloor 
\end{align}
which says that the size of the next block is given by the gap between empirical minimizer(s) and non-minimizers. The idea behind the proof and the blocking strategy is simple. If it happens at the start a new block that there is a large gap between the current leader and the next expert, then for the number of rounds approximately equal to this gap we can play a new block and not suffer any extra regret.

Consider the relaxation \eqref{rel:expweights} used for the Exponential Weights algorithm. 

\begin{lemma}
	\label{lem:adaptivehedge}
Suppose that there exists a single best expert 
$$\hat{f}_T = \arg\min_{f \in \F} \sum_{t=1}^T \ell(f,x_t),$$ and that  
for some $k\ge1$ there exists $\tau \in [T]$ such that for all $t > \tau$ and all $f \ne \hat{f}_T$ the partial cumulative loss
$$
\sum_{i=1}^t \ell(f,x_i) -  \sum_{i=1}^t \ell(\hat{f}_T,x_i) \ge k \ .
$$
Then the regret of Algorithm~\ref{alg:adaplocmet} with the Exponential Weights relaxation, the blocking strategy \eqref{eq:hedge_blocking} and the localization mentioned above is bounded as 
$$
\Reg_T \le 4 \min\left\{\tau, \sqrt{\tau \log(|\F|)}\right\} 
$$
\end{lemma}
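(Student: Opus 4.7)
The plan is to split the horizon at the outer block containing time $\tau$: before (and through) it the regret is bounded by the standard Exponential Weights relaxation, and after it the adaptive localization collapses the sub-class to the singleton $\{\hat{f}_T\}$ and contributes zero. For the pre-$\tau$ regime, let $i^\ast$ be the smallest index with $\sum_{j\le i^\ast} 2^j \ge \tau$, so $2^{i^\ast} \le 2\tau+2$. \lemref{lem:adaptive} bounds the regret accumulated inside each outer block $i$ by $\Rel{2^i}{\F_{r(2^i;\,\cdot\,)}} \le \Rel{2^i}{\F} \le 2\sqrt{2\cdot 2^i \log|\F|}$ via \propref{prop:exp_weights_relax}; summing the geometric series over $i=1,\dots,i^\ast$ yields $O(\sqrt{\tau\log|\F|})$, while the trivial per-round bound $|\ell|\le 1$ over the $O(\tau)$ rounds spanned yields the competing $O(\tau)$. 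Tight bookkeeping, pairing the geometric tail against the last outer block of size $\Theta(\tau)$, keeps the constant within~$4$.

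For the post-$\tau$ regime, consider any outer block entered at $t_0>\tau$. Throughout this block, $g_t \deq \min_{f\ne\hat{f}_T}\sum_{i\le t}[\ell(f,x_i)-\ell(\hat{f}_T,x_i)]\ge k$ by the margin hypothesis, and the blocking strategy~\eqref{eq:hedge_blocking} proposes a first sub-block of size $j=\lfloor g_t\rfloor$. The claim is that $\F_{r(j;x_{1:t})}=\{\hat{f}_T\}$: for any $f\ne\hat{f}_T$ the worst-case loss sequence over the next $j$ rounds can close the cumulative-loss gap by at most $j\le g_t$, so $f\notin\F^j(x_{1:t})$ (tie-breaking at integer gap values can be handled by a small constant slack). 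The Meta-Algorithm restricted to a singleton class plays $\hat{f}_T$ deterministically, contributing zero to the sum $\sum_{\nbl}\Rel{k^*_\nbl}{\F_{r(k^*_\nbl;\,\cdot\,)}}$ of \lemref{lem:adaptive}. Iterating over all played sub-blocks in all outer blocks past~$i^\ast$ gives zero total regret in this regime.

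The chief obstacle is to certify that the adaptive algorithm actually commits to this zero-regret sub-blocking: its acceptance test $G>\sup_x\sum_j\Rel{\kappa_j}{\F_{r(\kappa_j;\,\cdot\,)}}$ may not immediately fire because the worst-case future in the ``rest'' term can inflate to match the current~$G$. The plan is to observe that the $k^*_\nbl$ actually played always satisfies $k^*_\nbl\le g_t$\,---\,either as the accepted $\kappa_1$ from the new proposal, or as a default $K_1$ installed by a previous iteration which itself was made under the same margin condition\,---\,so the singleton-localization argument applies uniformly to every played sub-block in the post-$\tau$ regime. Combining the two regimes then yields $\Reg_T\le 4\min\{\tau,\sqrt{\tau\log|\F|}\}$.
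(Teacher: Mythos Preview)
Your approach mirrors the paper's: partition the game at roughly time $2\tau$, bound the early regime via the doubling-trick\,/\,Exponential Weights guarantee of \lemref{lem:adaptive}, and argue that past that point the localized class collapses to $\{\hat f_T\}$ so every subsequent sub-block contributes zero. The paper's proof is in fact considerably terser than yours---it does not engage with the acceptance test at all, simply asserting that after the initial $\le 2\tau$ rounds each sub-block chosen by the blocking strategy keeps the leader as leader and hence incurs no regret.

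Where your write-up goes beyond the paper is in naming the acceptance-test obstacle. Your proposed resolution, however, has a loose end: at the \emph{first} iteration of each outer block entered after $\tau$, the default $K_1=2^i$ is not ``installed by a previous iteration under the same margin condition''---it is the fresh initialization of the while-loop. So the dichotomy ``accepted $\kappa_1$, or previously installed $K_1\le g_t$'' does not cover that first step, and you still need to argue either that the acceptance test necessarily fires there, or that $\F_{r(2^i;x_{1:t_0-1})}$ is already the singleton (which holds whenever the margin exceeds the block-length-induced radius). The paper's brief proof sidesteps this entirely and is no more rigorous on the point; both arguments operate at the level of a sketch, with the same underlying two-regime idea.
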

While we demonstrated a very simple example, the algorithm is adaptive more generally. 
Lemma~\ref{lem:adaptivehedge} considers the assumption that a single expert becomes a clear winner after $\tau$ rounds, with margin of $k$. Even when there is no clear winner throughout the game, we can still achieve low regret. For instance, this happens if only a few elements of $\F$ have low cumulative loss throughout the game and the rest of $\F$ suffers heavy loss. Then the algorithm adapts to the suboptimality and gives regret bound with the dominating term depending logarithmically only on the cardinality of the ``good'' choices in the set $\F$. Similar ideas appear in \cite{chaudhuri2009parameter}, and will be investigated in more generality in the full version of the paper.

\paragraph{Example: Adapting to the Data Norm} Recall that the set $\F^k(x_1,\ldots,x_t)$ is the subset of functions in $\F$ that are possible empirical risk minimizers when we consider $x_1,\ldots,x_{t+k}$ for some $x_{t+1},\ldots,x_{t+k}$ that can occur in the future. Now, given history $x_1,\ldots,x_{t}$ and a possible future sequence $x_{t+1},\ldots,x_{t+k}$, if $\hat{f}_{t+k}$ is an ERM for $x_{1} ,\ldots,x_{t+k}$ and $\hat{f}_{t}$ is an ERM for $x_{1} ,\ldots,x_{t}$ then 
\begin{align*}
\sum_{i=1}^t \ell(\hat{f}_{t+k},x_i) -\sum_{i=1}^t \ell(\hat{f}_{t},x_i)  & = \sum_{i=1}^{t+k} \ell(\hat{f}_{t+k},x_i) -\sum_{i=1}^{t+k} \ell(\hat{f}_{t},x_i) + \sum_{i=t+1}^{t+k} \ell(\hat{f}_{t},x_i) - \sum_{i=t+1}^{t+k} \ell(\hat{f}_{t+k},x_i)\\
& \le 0 + \sup_{x_{t+1} ,\ldots,x_{t+k}}\left\{ \sum_{i=t+1}^{t+k} \ell(\hat{f}_{t},x_i) - \sum_{i=t+1}^{t+k} \ell(\hat{f}_{t+k},x_i) \right\} \ .
\end{align*}
Hence, we see that it suffices to consider localizations
$$
\F_{r(k;x_1,\ldots,x_t)} = \left\{ f \in \F ~:~ \sum_{i=1}^t \ell(f,x_i) -\sum_{i=1}^t \ell(\hat{f}_{t},x_i) \le \sup_{x_{t+1} ,\ldots,x_{t+k}}\left\{ \sum_{i=t+1}^{t+k} \ell(\hat{f}_{t},x_i) - \sum_{i=t+1}^{t+k} \ell(f,x_i) \right\} \right\} \ .
$$
If we consider online convex Lipschitz learning problems where $\F = \{f : \norm{f} \le 1\}$ and loss is convex in $f$ and is such that $\norm{\nabla \ell(f,x)}_* \le 1$ in the dual norm $\|\cdot\|_*$, using the above argument we can use localization
\begin{align}\label{eq:datadep1}
\F_{r(k;x_1,\ldots,x_t)} = \left\{ f \in \F ~:~ \sum_{i=1}^t \ell(f,x_i) -\sum_{i=1}^t \ell(\hat{f}_{t},x_i) \le k \norm{f - \hat{f}_t}  \right\} \ .
\end{align}
Further, using Taylor approximation we can pass to the localization
\begin{align}\label{eq:datadep2}
\F_{r(k;x_1,\ldots,x_t)} = \left\{ f \in \F ~:~ \frac{1}{2}\norm{f - \hat{f}_t}^2_{x_1,\ldots,x_T} \le k \norm{f - \hat{f}_t}  \right\}
\end{align}
where $\norm{f}^2_{x_1,\ldots,x_T} = f^\top H_t f$, and $H_t$ is the Hessian of the function $g(f) = \sum_{i=1}^t \ell(f,x_i)$. Notice that the earlier example where we adapt to strong convexity of the loss is a special case of the above localization where we lower bound the data-dependent norm (Hessian-based norm) by the $\ell_2$ norm times the smallest eigenvalue. If for instance we are faced with $\eta$-exp-concave losses, such as the squared loss,  the data-dependent norm can be again lower bounded by 
$$\norm{f}^2_{x_1,\ldots,x_T} \ge \eta f^\top \left(\sum_{i=1}^t \nabla_i \right) \left(\sum_{i=1}^t \nabla_i \right)^\top f$$ 
and so we can use localization based on outer products of sum of gradients. We then do not ``pay'' for those directions in which the adversary has not played, thus adapting to the \emph{effective dimension} of the sequence of plays. 

In general, for online convex optimization problems one can use localizations given in Equations \eqref{eq:datadep1} or \eqref{eq:datadep2}. The localization in Equation \eqref{eq:datadep1} is applicable even in the linear setting, and if it so happens that the adversary mainly plays in a one dimensional sub-space, then the algorithm automatically adapts to the adversary and yields faster rates for regret. As already mentioned, the example of adaptive gradient descent is a special case of localization in Equation \eqref{eq:datadep2}. Of course, one needs to provide also an appropriate blocking strategy. A possible general blocking strategy could be :
$$
\mbf{Block}(\{x_1,\ldots,x_t\} , k) = (j,k-j), ~~~\mbox{where}~~~ j = \argmin{j \in \{0,\ldots,k\}}\left\{ \Rel{j}{\F_{r(x_1,\ldots,x_t)}} + \sup_{x_{t+1},\ldots,x_{t+j}}\Rel{k-j}{\F_{r(x_1,\ldots,x_{t+k})}} \right\} \ .$$

\vspace{1cm}
\begin{center}
\line(1,0){470}
\end{center}

In the remainder of the paper, we develop new algorithms to show the versatility of our approach. One could try to argue that the introduction of the notion of a relaxation has not alleviated the burden of algorithm development, as we simply pushed the work into magically coming up with a relaxation. We would like to stress that this is not so. A key observation is that a relaxation does not appear out of thin air, but rather as an upper bound on the sequential Rademacher complexity. Thus, a general recipe is to start with a problem at hand and develop a sequence of upper bounds until one obtains a computationally feasible one, or until other desired properties are satisfied. Exactly for this purpose, the proofs in the appendix \emph{derive} the relaxations rather than just present them as something given. Since one would follow the same upper bounding steps to prove an upper bound on the value of the game, \emph{the derivation of the relaxation and the proof of the regret bound go hand-in-hand}. For this reason, we sometimes omit the explicit mention of a regret bound for the sake of conciseness: the algorithms enjoy the same regret bound as that obtained by the corresponding non-constructive proof of the upper bound. 
\begin{center}
\line(1,0){470}
\end{center}

\section{Classification}
\label{sec:binary}

We start by considering the problem of supervised learning, where $\X$ is the space of instances and $\Y$ the space of responses (labels). There are two closely related protocols for the online interaction between the learner and Nature, so let us outline them. The ``proper'' version of supervised learning follows the protocol presented in Section~\ref{sec:value}: at time $t$, the learner selects $f_t\in\F$, Nature simultaneously selects $(x_t,y_t)\in\X\times\Y$, and the learner suffers the loss $\ell(f(x_t),y_t)$. The ``improper'' version is as follows: at time $t$, Nature chooses $x_t\in\X$ and presents it to the learner as ``side information'', the learner then picks $\hat{y}_t\in\Y$ and Nature simultaneously chooses $y_t\in\Y$. In the improper version, the loss of the learner is $\ell(\hat{y}_t,y_t)$, and it is easy to see that we may equivalently state this protocol as the learner choosing any function $f_t\in \Y^\X$ (not necessarily in $\F$), and Nature simultaneously choosing $(x_t,y_t)$. We mostly focus on the ``improper'' version of supervised learning, as the distinction does not make any difference in any of the bounds.

For the improper version of supervised learning, we may write the value in \eqref{eq:val} as
\begin{align*}
	\Val_T(\F) = \sup_{x_1\in\X} \inf_{q_1 \in \Delta(\Y)} \sup_{y_1 \in \X} \Eunderone{\hat{y}_1 \sim q_1} \ldots \sup_{x_T\in\X}\inf_{q_T \in \Delta(\Y)} \sup_{y_T \in \X} \Eunderone{\hat{y}_T \sim q_T}\left[\sum_{t=1}^T \ell(\hat{y}_t,y_t) - \inf_{f \in \F} \sum_{t=1}^T \ell(f(x_t),y_t) \right]
\end{align*}
and a relaxation $\Rel{}{}$ is admissible if for any $(x_1,y_1)\ldots,(x_T,y_T) \in \X\times \Y$, 
\begin{align}
	\label{eq:sup_relax}
 \sup_{x\in\X}\inf_{q \in \Delta(\Y)} \sup_{y \in \Y} \left\{ \underset{\hat{y} \sim q}{\En} \ell(\hat{y},y)  + \Relax{T}{\F}{\{(x_{i},y_i)\}_{i=1}^t,(x,y)}\right\} \leq \Relax{T}{\F}{\{(x_{i},y_i)\}_{i=1}^t} 
\end{align}
and 
$$ \Relax{T}{\F}{\{(x_{i},y_i)\}_{i=1}^T} \ge - \inf_{f \in \F} \sum_{t=1}^T \ell(f(x_t),y_t) .$$

Let us now focus on binary label prediction, that is $\Y=\{\pm1\}$. In this case, the supremum over $y$ in \eqref{eq:sup_relax} becomes a maximum over two values. Let us now take the absolute loss $\ell(\hat{y},y)=|\hat{y}-y| = 1-\hat{y}y$. We can see that the optimal randomized strategy, given the side information $x$, is given by \eqref{eq:sup_relax} as 
$$\argmin{q \in \Delta(\Y)} \max \left\{ 1-q  + \Relax{T}{\F}{\{(x_{i},y_i)\}_{i=1}^t,(x,1)}, 1+q+\Relax{T}{\F}{\{(x_{i},y_i)\}_{i=1}^t,(x,-1)} \right\}$$
which is achieved by setting the two expressions equal to each other:
\begin{align} 
	\label{eq:opt_distribution}
	q = \frac{1}{2}\left\{\Relax{T}{\F}{\{(x_{i},y_i)\}_{i=1}^t,(x,1)}-\Relax{T}{\F}{\{(x_{i},y_i)\}_{i=1}^t,(x,-1)} \right\}
\end{align}

This result will be specialized in the latter sections for particular relaxations $\Rel{}{}$ and extended beyond absolute loss. We remark that the extension to $k$-class prediction is immediate and involves taking a maximum over $k$ terms in \eqref{eq:sup_relax}.

\subsection{Algorithms Based on the Littlestone's Dimension}

Consider the problem of binary prediction, as described above. Further, assume that $\F$ has a finite Littlestone's dimension $\ldim(\F)$ \citep{Lit88,BenPalSSS09agnostic}. Suppose the loss function is $\ell(\hat{y},y) = |\hat{y}-y|$, and consider the ``mixed''  conditional Rademacher complexity 
\begin{align}
	\label{eq:littlestone_relaxation}
	\sup_{\x}  \En_{\epsilon}  \sup_{f \in \F} \left\{ 2 \sum_{i=1}^{T-t} \epsilon_i f(\x_i(\epsilon))  - \sum_{i=1}^{t} |f(x_i)-y_i|\right\}
\end{align}
as a possible relaxation. Observe that the above complexity is defined with the loss function removed (in a contraction-style argument  \cite{RakSriTew10nips}) in the terms involving the ``future'', in contrast with the definition \eqref{eq:def_cond_seq_rad}. The latter is defined with loss functions on both the ``future'' and the ``past'' terms. In general, if we can pass from the sequential Rademacher complexity over the loss class $\ell(\F)$ to the sequential Rademacher complexity of the base class $\F$, we may attempt to do so step-by-step by using the ``mixed'' type of sequential Rademacher complexity as in \eqref{eq:littlestone_relaxation}. This idea shall be used several times later in this paper. 

The admissibility condition \eqref{eq:sup_relax} with the conditional sequential Rademacher \eqref{eq:littlestone_relaxation} as a relaxation would require us to upper bound
\begin{align}
	\label{eq:littlestone_admissibility}
\sup_{x_t}\inf_{q_t \in [-1,1]}  \max_{y_t \in \{\pm1\}} \left\{ \Eunderone{\hat{y}_t \sim q_t} |\hat{y}_t-y_t| + \sup_{\x}  \En_{\epsilon}  \sup_{f \in \F} \left\{ 2 \sum_{i=1}^{T-t} \epsilon_i f(\x_i(\epsilon))  - \sum_{i=1}^{t} |f(x_i)-y_i|\right\}  \right\}
\end{align}
We observe that the supremum over $\x$ is preventing us from obtaining a concise algorithm. We need to further ``relax'' this supremum, and the idea is to pass to a finite cover of $\F$ on the given tree $\x$ and then proceed as in the Exponential Weights example for a finite collection of experts. This leads to an upper bound on \eqref{eq:littlestone_relaxation} and gives rise to algorithms similar in spirit to those developed in \cite{BenPalSSS09agnostic}, but with more attractive computational properties and defined more concisely. 

Define the function $g(d,t) = \sum_{i=0}^d {t \choose i}$, which is shown in \citep{RakSriTew10nips} to be the maximum size of an exact (zero) cover for a function class with the Littlestone's dimension $\ldim=d$. Given $\{(x_1,y_t),\ldots,(x_t,y_t)\}$ and $\sigma=(\sigma_1,\ldots,\sigma_t)\in\{\pm1\}^t$, let 
$$\F_t(\sigma) = \{f\in\F: f(x_i)=\sigma_i ~~\forall i\leq t\},$$
the subset of functions that agree with the signs given by $\sigma$ on the ``past'' data and let $$\F|_{x_1,\ldots,x_t}\deq\F|_{x^t}\deq\{(f(x_1),\ldots,f(x_t)): f\in\F\}$$
be the projection of $\F$ onto $x_1,\ldots,x_t$. Denote $L_t(f) = \sum_{i=1}^{t} |f(x_i)-y_i|$ and $L_t(\sigma) = \sum_{i=1}^{t} |\sigma_i-y_i|$ for $\sigma\in\{\pm1\}^{t}$. The following proposition gives a relaxation and two algorithms, both of which achieve the $O(\sqrt{\ldim(\F)T\log T})$ regret bound proved in \cite{BenPalSSS09agnostic}, yet both different from the algorithm in that paper.
\begin{proposition}
	\label{prop:relax_littlestone1}
	The relaxation
	\begin{align*}
		\Relax{T}{\F}{(x^t,y^t)} = \frac{1}{\lambda}\log \left( \sum_{\sigma \in \F|_{x^t}} g(\ldim(\F_t(\sigma)),T-t) \exp\left\{- \lambda L_{t}(\sigma) \right\}   \right) + 2\lambda (T-t) \ .
	\end{align*}
	is admissible and leads to an admissible algorithm
	\begin{align}
		\label{eq:littlestone_algo1}
		q_t (+1)= \frac{ \sum_{(\sigma,+1) \in \F|_{x^t}} g(\ldim(\F_t(\sigma,+1)),T-t) \exp\left\{- \lambda L_{t-1}(\sigma) \right\} }{\sum_{(\sigma,\sigma_t) \in \F|_{x^t}} g(\ldim(\F_t(\sigma,\sigma_t)),T-t) \exp\left\{- \lambda L_{t-1}(\sigma) \right\}},
	\end{align}
	with $q_t(-1) = 1- q_t (+1)$.
	An alternative method for the same relaxation and the same regret guarantee is to predict the label $y_t$ according to a distribution with mean
	\begin{align}
		\label{eq:littlestone_algo2}
		q_t = \frac{1}{2\lambda}\log \frac{   \sum_{(\sigma,\sigma_t) \in \F|_{x^t}}  g(\ldim(\F_t(\sigma,\sigma_t)),T-t) \exp\left\{- \lambda L_{t-1}(\sigma) \right\} \exp\left\{- \lambda (1-\sigma_t) \right\} }{ \sum_{(\sigma,\sigma_t) \in \F|_{x^t}}  g(\ldim(\F_t(\sigma,\sigma_t)),T-t) \exp\left\{- \lambda L_{t-1}(\sigma) \right\} \exp\left\{- \lambda (1+\sigma_t) \right\}}
	\end{align}
\end{proposition}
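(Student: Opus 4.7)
My plan is to verify the admissibility condition \eqref{eq:sup_relax} directly, and then extract both algorithms as different admissible choices of conditional distribution. Fix a prefix $(x^{t-1},y^{t-1})$ and newly revealed $x_t\in\X$; write $R=\Relax{T}{\F}{(x^{t-1},y^{t-1})}$ and $V_\pm=\Relax{T}{\F}{(x^{t-1},y^{t-1}),(x_t,\pm 1)}$. Grouping terms in the exponential sum defining $V_s$ by the prefix $\sigma\in\F|_{x^{t-1}}$ and using $|\sigma_t-s|\in\{0,2\}$, the inner sum factors as $A_s+e^{-2\lambda}A_{-s}$ with $A_s:=\sum_{(\sigma,s)\in\F|_{x^t}}g(\ldim(\F_t(\sigma,s)),T-t)e^{-\lambda L_{t-1}(\sigma)}$. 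The excerpt's derivation \eqref{eq:opt_distribution} for absolute loss says that $\inf_q\max_y$ is attained at the mean $q=(V_+-V_-)/2$ with common value $1+(V_++V_-)/2$; substituting the explicit form of $V_\pm$ into this mean and simplifying produces formula \eqref{eq:littlestone_algo2}.

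The heart of the argument is the inequality $1+(V_++V_-)/2\le R$, uniform in $x_t$. AM-GM gives $\sqrt{(A_++e^{-2\lambda}A_-)(A_-+e^{-2\lambda}A_+)}\le\tfrac12(1+e^{-2\lambda})(A_++A_-)$, and combined with $\log\cosh\lambda\le\lambda^2/2$ this yields
\[
\tfrac12(V_++V_-)\;\le\;\tfrac{\lambda}{2}-1+\tfrac1\lambda\log(A_++A_-)+2\lambda(T-t).
\]
The decisive combinatorial fact is
\[
\sum_{s\,:\,(\sigma,s)\in\F|_{x^t}}g\bigl(\ldim(\F_t(\sigma,s)),T-t\bigr)\;\le\;g\bigl(\ldim(\F_{t-1}(\sigma)),T-t+1\bigr),
\]
which follows from Pascal's identity $g(d,n)=g(d,n-1)+g(d-1,n-1)$ together with the defining property of Littlestone dimension: setting $d=\ldim(\F_{t-1}(\sigma))$, the two restrictions at $x_t$ have Littlestone dimensions both $\le d$, and at least one of them is $\le d-1$. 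Summing over $\sigma$ gives $A_++A_-\le Z$, where $Z$ is the exponential sum defining $R$, so that $1+(V_++V_-)/2\le R-3\lambda/2\le R$. The terminal condition holds because $g(\cdot,0)=1$ and the log-sum-exp is at least $-\min_\sigma L_T(\sigma)=-\inf_f\sum_t|f(x_t)-y_t|$.

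Algorithm \eqref{eq:littlestone_algo1} is then verified by a parallel Hoeffding-type argument rather than the exact saddle point: treating $\{(\sigma,s)\}$ as experts with weights $w\propto g(\ldim(\F_t(\sigma,s)),T-t)e^{-\lambda L_{t-1}(\sigma)}$ and applying Hoeffding's lemma to the bounded ``loss'' $|\sigma_t-y_t|\in[0,2]$, one obtains $\En_w|\sigma_t-y_t|\le-\tfrac1\lambda\log\En_w e^{-\lambda|\sigma_t-y_t|}+\lambda/2$. With $p=A_+/(A_++A_-)$, the left-hand side equals $2(1-p)$ when $y_t=+1$ and $2p$ when $y_t=-1$, i.e. the incremental cost of the algorithm, while expanding the right-hand side in terms of $V_{y_t}$ and using $A_++A_-\le Z$ yields exactly the admissibility inequality. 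The $O(\sqrt{\ldim(\F)T\log T})$ regret follows from $\Rel{T}{\F}=\frac1\lambda\log g(\ldim(\F),T)+2\lambda T$ with Sauer's bound $g(d,T)\le(eT/d)^d$ and optimization in $\lambda$. The main obstacle is the combinatorial lemma relating $g$-values at adjacent $(\ldim,T-t)$ pairs, which is what allows the $\F|_{x^t}$-sum to collapse back into the $\F|_{x^{t-1}}$-sum of $R$.
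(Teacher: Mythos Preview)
Your proof is correct and follows essentially the same route as the paper's: for \eqref{eq:littlestone_algo1} you apply Hoeffding's lemma to the ``expert'' variable $\sigma_t$ under the weights $w$ (the paper invokes the same inequality as Lemma~A.1 of \cite{PLG}), and for \eqref{eq:littlestone_algo2} your AM--GM step $\sqrt{ab}\le (a+b)/2$ together with $\log\cosh\lambda\le\lambda^2/2$ is exactly the paper's Jensen-plus-Rademacher-MGF bound $\En_\epsilon e^{-\lambda(1-\epsilon\sigma_t)}\le e^{-\lambda+\lambda^2/2}$ in different notation. The decisive combinatorial recursion $g(d,T-t)+g(d-1,T-t)\le g(d,T-t+1)$ and the Littlestone-dimension dichotomy you state are identical to the paper's; your Hoeffding constant $\lambda/2$ is slightly tighter than the paper's $2\lambda$, which only creates extra slack.
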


There is a very close correspondence between the proof of Proposition~\ref{prop:relax_littlestone1} and the proof of the combinatorial lemma of \citep{RakSriTew10nips}, the analogue of the Vapnik-Chervonenkis-Sauer-Shelah result.

The two algorithms presented above show two alternatives: one through employing the properties of exponential weights, and the other is through the solution in \eqref{eq:opt_distribution}. The merits of the two approaches remain to be explored. In particular, it appears that the method based on \eqref{eq:opt_distribution} can lead to some non-trivial new algorithms, distinct from the more common exponential weighting technique.

\section{Randomized Algorithms and Follow the Perturbed Leader}
\label{sec:randomized_algos}

We now develop a class of admissible randomized methods that arise through sampling. Consider the objective 
$$\inf_{q \in \Delta(\F)} \sup_{x \in \X} \left\{ \Es{f \sim q}{\ell(f,x)}  + \Relax{T}{\F}{ x_1,\ldots,x_{t-1},x}\right\}$$
given by a relaxation $\Rel{}{}$. If $\Rel{}{}$ is the sequential (or classical) Rademacher complexity, it involves an expectation over sequences of coin flips, and this computation (coupled with optimization for each sequence) can be prohibitively expensive. More generally, $\Rel{}{}$ might involve an expectation over possible ways in which the future might be realized. In such cases, we may consider a rather simple ``random playout'' strategy: draw the random sequence and solve only one optimization problem for that random sequence. The ideas of random playout have been discussed previously in the literature for estimating the utility of a move in a game  (see also \cite{AbeWarYel08}). In this section we show that, in fact, the random playout strategy has a solid basis: for the examples we consider, it satisfies admissibility. Furthermore, we show that Follow the Perturbed Leader is an example of such a randomized strategy. 

Let us informally describe the general idea, as the key steps might be hard to trace in the proofs. Suppose our objective is of the form
$$ S(q) = \sup_x \left( \En_{f\sim q} \Psi(f,x) + \En_{w\sim p} \Phi(w,x)\right)$$
for some functions $\Psi$ and $\Phi$, and $q$ a mixed strategy. We have in mind the situation where the first term is the instantaneous loss at the present round, and the second term is the expected cost for the future. Consider a randomized strategy $\tilde{q}$ which is defined by first randomly drawing $w\sim p$ and then computing 
$$f(w) \deq \argmin{f} \sup_x(\Psi(f,x) + \Phi(w,x)) $$
for the random draw $w$. We then verify that
\begin{align*}
	S(\tilde{q}) &= \sup_x\left( \En_{f\sim \tilde{q}} \Psi(f,x) + \En_{w\sim p} \Phi(w,x)\right) 
	= \sup_x\left( \En_{w\sim p} \Psi(f(w),x) + \En_{w\sim p} \Phi(w,x)\right) \\
	&~~~~~\leq \En_{w\sim p} \sup_x\left( \Psi(f(w),x) +  \Phi(w,x)\right) 
	= \En_{w\sim p} \inf_{f} \sup_x\left( \Psi(f,x) + \Phi(w,x)\right) \ .
\end{align*}
What makes the proof of admissibility possible is that the infimum in the last expression is inside the expectation over $w$ rather than outside. We can then appeal to the minimax theorem to prove admissibility. 

In our examples, $\Psi$ is the loss at round $t$ and $\Phi$ is the relaxation term, such as the sequential Rademacher complexity. In Section~\ref{sec:random_walks_trees} we show that, if we can compute the ``worst'' tree $\x$, we can randomly draw a path and use it for our randomized strategy. Note that the worst-case trees are closely related to random walks of maximal variation, and our method thus points to an intriguing connection between regret minimization and random walks (see also \cite{AbeWarYel08,HarRak10} for related ideas).
 
Interestingly, in many learning problems it turns out that the sequential Rademacher complexity and the classical Rademacher complexity are within a constant factor of each other. In such cases, the function $\Phi$ does not involve the supremum over a tree, and the randomized method only needs to draw a sequence of coin flips and compute a solution to an optimization problem slightly more complicated than ERM. 

In particular, the sequential and classical Rademacher complexities can be related for linear classes in finite-dimensional spaces. Online linear optimization is then a natural application of the randomized method we propose. Indeed, we show that Follow the Perturbed Leader (FPL) algorithm \cite{KalVem05} arises in this way. We note that FPL has been previously  considered as a rather unorthodox algorithm providing some kind of regularization via randomization. Our analysis shows that it arises through a natural relaxation based on the sequential (and thus the classical) Rademacher complexity, coupled with the random playout idea. As a new algorithmic contribution, we provide a version of the FPL algorithm for the case of the decision sets being $\ell_2$ balls, with a regret bound that is \emph{independent of the dimension}. We also provide an FPL-style method for the combination of $\ell_1$ and $\ell_\infty$ balls. To the best of our knowledge, these results are novel.

In the later sections, we provide a novel randomized method for the Trace Norm Completion problem, and a novel randomized method for the setting of static experts and transductive learning. In general, the techniques we develop might in future provide computationally feasible randomized algorithms where deterministic ones are too computationally demanding.

\subsection{When Sequential and Classical Rademacher Complexities are Related}

The assumption below implies that the sequential Rademacher complexity and the classical Rademacher complexity are within constant factor $C$ of each other. We will later verify that this assumption holds in the examples we consider.

\begin{assumption}
	\label{asm:fpl}
 There exists a distribution $D \in \Delta(\X)$ and constant $C \ge 2$ such that for any $t \in [T]$ and given any $x_1,\ldots,x_{t-1}, x_{t+1},\ldots,x_T \in \X$ and any $\epsilon_{t+1},\ldots,\epsilon_{T} \in \{\pm 1\}$,
\begin{align}
	\label{eq:assumption_fpl}
	\sup_{p \in \Delta(\X)} & \Eunderone{x_t \sim p}\sup_{f \in \F} \left[ \ C \sum_{i=t+1}^{T} \epsilon_i \ell(f,x_i) - L_{t-1}(f) + \Es{x \sim p}{\ell(f,x)} - \ell(f,x_t) \right] \notag\\
& ~~~~~~~\le \Eunderone{\epsilon_{t}, x_{t} \sim D }\sup_{f \in \F} \left[ \ C \sum_{i=t}^{T} \epsilon_i \ell(f,x_i) - L_{t-1}(f) \right]
\end{align}
where $\epsilon_t$ is an independent Rademacher random variable and $L_{t-1}(f)=\sum_{i=1}^{t-1} \ell(f,x_i)$.
\end{assumption}

Under the above assumption one can use the following relaxation 
\begin{align}
	\label{eq:fplrel}
	\Relax{T}{\F}{x_1,\ldots,x_t} = \Eunderone{x_{t+1},\ldots x_T \sim D}\En_{\epsilon}  \sup_{f \in \F} \left[C \sum_{i=t+1}^T \epsilon_i \ell(f,x_i) - \sum_{i=1}^t \ell(f,x_i) \right]
\end{align}
which is a partially symmetrized version of the classical Rademacher averages.

The proof of admissibility for the randomized methods based on this relaxation is quite curious -- the forecaster can be seen as mimicking the sequential Rademacher complexity by sampling from the ``equivalently bad'' classical Rademacher complexity under the specific distribution $D$ given by the above assumption.

\begin{lemma}
	\label{lem:fpl}
	Under the Assumption~\ref{asm:fpl}, the relaxation in Eq.~\eqref{eq:fplrel} is admissible and a randomized strategy that ensures admissibility is given by: at time $t$, draw $x_{t+1},\ldots,x_{T} \sim D$ and  Rademacher random variables $\epsilon=(\epsilon_{t+1},\ldots,\epsilon_T)$ and then :
\begin{enumerate}
\item In the case the loss $\ell$ is convex in its first argument and the set $\F$ is convex and compact, define
\begin{align}
	\label{eq:def_general_fpl}
	f_t = \argmin{g \in \F} \sup_{x \in \X} \left\{\ell(g,x) + \sup_{f \in \F} \left\{ C \sum_{i=t+1}^T \epsilon_i \ell(f,x_i)  - \sum_{i=1}^{t-1} \ell(f,x_i) - \ell(f,x) \right\} \right\}
\end{align}
\item In the case of non-convex loss, sample $f_t$ from the distribution 
\begin{align}
	\label{eq:def_general_rand_fpl}
	\hat{q}_t = \argmin{\hat{q} \in \Delta(\F)} \sup_{x \in \X} \left\{\Es{f \sim \hat{q}}{\ell(f,x)} + \sup_{f \in \F} \left\{ C \sum_{i=t+1}^T \epsilon_i \ell(f,x_i)  - \sum_{i=1}^{t-1} \ell(f,x_i) - \ell(f,x) \right\} \right\}
\end{align}
\end{enumerate}
The expected regret for the method is bounded by the classical Rademacher complexity:
$$
\E{\Reg_T} \le C\ \En_{x_{1:T} \sim D} \Es{\epsilon}{\sup_{f \in \F} \sum_{t=1}^T \epsilon_t \ell(f,x_t)} ,
$$
\end{lemma}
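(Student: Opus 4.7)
Proof plan for Lemma~\ref{lem:fpl}.

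The terminal admissibility condition at $t=T$ is immediate since $\Relax{T}{\F}{x_{1:T}}$ reduces to $-\inf_{f\in\F}\sum_{t=1}^T\ell(f,x_t)$ (the Rademacher sum over an empty set is $0$), so I would focus on the recursive inequality. The plan is to verify that the randomized strategy defined in \eqref{eq:def_general_rand_fpl} is admissible, and then recover the convex case by Jensen's inequality (since when $\ell(\cdot,x)$ is convex and $\F$ convex, the deterministic $f_t$ in \eqref{eq:def_general_fpl} only decreases $\En_{f\sim\hat q_t}\ell(f,x)$).

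Let $\hat{q}_t^{\mathrm{rand}}$ denote the mixed strategy obtained by first drawing $(x_{t+1:T},\epsilon)\sim D^{T-t}\otimes\{\pm1\}^{T-t}$ and then sampling $f_t\sim \hat{q}_t(x_{t+1:T},\epsilon)$ from the minimizer in \eqref{eq:def_general_rand_fpl}. First I would bound
\begin{align*}
\sup_{x\in\X}\Big\{\En_{f\sim \hat q_t^{\mathrm{rand}}}\ell(f,x) + \Relax{T}{\F}{x_{1:t-1},x}\Big\}
\le \En_{x_{t+1:T}\sim D,\epsilon}\sup_{x\in\X}\Big\{\En_{f\sim\hat q_t}\ell(f,x) + \sup_{f\in\F}\big[C\textstyle\sum_{i=t+1}^T\epsilon_i\ell(f,x_i)-L_{t-1}(f)-\ell(f,x)\big]\Big\}
\end{align*}
by pulling $\sup_x$ inside the expectation. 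For each fixed realization of $(x_{t+1:T},\epsilon)$, the bracketed quantity equals $\inf_{\hat q\in\Delta(\F)}\sup_{x}\{\cdot\}$ by the very definition of $\hat q_t$.

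Next, for each fixed realization I would invoke the minimax theorem (this is where the hypothesis that $\ell,\F,\X$ admit a minimax swap is used) to exchange $\inf_{\hat q}\sup_x$ with $\sup_p\inf_{\hat q}\En_{x\sim p}$, reducing the inner infimum over $\hat q$ to $\inf_{f'}\En_{x\sim p}\ell(f',x)$. The key trick, as in sequential symmetrization, is then the pointwise bound
\begin{align*}
\inf_{f'}\En_{x\sim p}\ell(f',x) + \sup_{f\in\F}\big[h(f)-\ell(f,x)\big]
\le \sup_{f\in\F}\big[h(f)+\En_{x'\sim p}\ell(f,x')-\ell(f,x)\big],
\end{align*}
which holds because for the $f^\star=f^\star(x)$ achieving the sup on the left, the right side is at least $h(f^\star)+\inf_{f'}\En_{x'}\ell(f',x')-\ell(f^\star,x)$. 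Taking expectation over $x\sim p$ and then the $\sup_p$ yields exactly the left-hand side of Assumption~\ref{asm:fpl} for each realization $(x_{t+1:T},\epsilon)$.

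At this point I would simply apply Assumption~\ref{asm:fpl} pointwise and average over $(x_{t+1:T},\epsilon)\sim D^{T-t}\otimes\{\pm1\}^{T-t}$, which produces $\En_{x_{t:T}\sim D,\epsilon_{t:T}}\sup_f[C\sum_{i=t}^T\epsilon_i\ell(f,x_i)-L_{t-1}(f)]=\Relax{T}{\F}{x_{1:t-1}}$, establishing admissibility. The regret bound then follows from Proposition~\ref{prop:main} applied to this admissible algorithm, evaluating $\Rel{T}{\F}$ at $t=0$. The main obstacle is the algebraic step combining the minimax swap with the $\inf_{f'}\mapsto\En_{x'\sim p}\ell(f,x')$ trick: care is needed because the sup over $f$ is taken pointwise in $x$ after the swap, and one must select $f^\star(x)$ correctly to slip the infimum inside. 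Everything else is bookkeeping of expectations.
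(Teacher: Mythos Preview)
Your proposal is correct and follows essentially the same approach as the paper's proof: pull the $\sup_x$ inside the expectation over $(x_{t+1:T},\epsilon)$, use the definition of the minimizer, apply the minimax swap, replace $\inf_{f'}\En_{x\sim p}\ell(f',x)$ by $\En_{x'\sim p}\ell(f,x')$ inside the $\sup_f$, and invoke Assumption~\ref{asm:fpl}. The only organizational difference is that the paper treats the convex case first (using convexity of $\ell$ in $g$ and compactness of $\F$ to justify the minimax theorem directly) and then remarks that the non-convex case is handled by the extra layer of randomization $\hat q_t$, whereas you do the non-convex case and reduce the convex case via Jensen; both routes are valid.
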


Of particular interest are the settings of static experts and transductive learning, which we consider in Section~\ref{sec:static}. In the transductive case, the $x_t$'s are pre-specified before the game, and in the static expert case -- effectively absent. In these cases, as we show below, there is no explicit distribution $D$ and we only need to sample the random signs $\epsilon$'s. We easily see that in these cases, the expected regret bound is simply two times the transductive Rademacher complexity.

\subsection{Linear Loss}

The idea of sampling from a fixed distribution is particularly appealing in the case of linear loss, $\ell(f,x) = \inner{f,x}$. Suppose $\X$ is a unit ball in some norm $\|\cdot\|$ in a vector space $B$, and $\F$ is a unit ball in the dual norm $\|\cdot\|_*$. Assumption~\ref{asm:fpl} then becomes 
\begin{assumption}
	\label{asm:fpl-linear}
	 There exists a distribution $D \in \Delta(\X)$ and constant $C \ge 2$ such that for any $t \in [T]$ and given any $x_1,\ldots,x_{t-1}, x_{t+1},\ldots,x_T \in \X$ and any $\epsilon_{t+1},\ldots,\epsilon_{T} \in \{\pm 1\}$,
	\begin{align}
		\label{eq:assumption_fpl_linear}
		\sup_{p \in \Delta(\X)} \Eunderone{x_t \sim p} \norm{ C \sum_{i=t+1}^{T} \epsilon_i x_i - \sum_{i=1}^{t-1} x_i + \Eunderone{x \sim p} [x] - x_t } \le \Eunderone{\epsilon_{t}, x_{t} \sim D }\norm{ C \sum_{i=t}^{T} \epsilon_i x_i - \sum_{i=1}^{t-1} x_i  }
	\end{align}
	For \eqref{eq:assumption_fpl_linear} to hold it is enough to ensure that
	\begin{align}
		\label{eq:assumption_fpl_linear_simpler}
		\sup_{p \in \Delta(\X)} \Eunderone{x_t \sim p} \norm{ w + \Eunderone{x \sim p} [x] - x_t } \le \Eunderone{\epsilon_{t}, x_{t} \sim D }\norm{ w+  C\epsilon_t x_t  }
	\end{align}	
	for any $w\in B$.
\end{assumption}

At round $t$, the generic algorithm specified by Lemma~\ref{eq:def_general_rand_fpl} draws fresh Rademacher random variables $\epsilon$ and $x_{t+1},\ldots,x_T \sim D$ and picks 
\begin{align}\label{eq:fplgenup}
f_t = \argmin{f \in \F} \sup_{x \in \X} \left\{\ip{f}{x} +  \norm{ C \sum_{i=t+1}^T \epsilon_i x_i  - \sum_{i=1}^{t-1} x_i - x } \right\}
\end{align}
We now look at specific examples of $\ell_2/\ell_2$ and $\ell_1/\ell_\infty$ cases and provide closed form solution of the randomized algorithms.\\

\noindent {\bf Example :  $\ell_1/\ell_\infty$ Follow the Perturbed Leader: } 

Here, we consider the setting similar to that in \cite{KalVem05}. Let $\F\subset\reals^N$ be the $\ell_1$ unit ball and $\X$ the (dual) $\ell_{\infty}$ unit ball in $\reals^N$. In \cite{KalVem05}, $\F$ is the probability simplex and $\X = [0,1]^N$ but these are subsumed by the $\ell_1/\ell_\infty$ case. We claim that:
\begin{lemma}
	\label{lem:fpl-l1linfty}
	Assumption~\ref{asm:fpl-linear} is satisfied with a distribution $D$ that is uniform on the vertices of the cube $\{\pm1\}^N$ and $C=6$.
\end{lemma}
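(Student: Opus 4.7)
By the sufficiency remark following \eqref{eq:assumption_fpl_linear_simpler} and the symmetry $\epsilon_t x_t\stackrel{d}{=}x_t$ for $x_t\sim D=\mathrm{Unif}(\{\pm1\}^N)$, it suffices to show that for every $w\in\reals^N$ and every $p\in\Delta([-1,1]^N)$,
\[
\En_{x_t\sim p}\|w+\mu-x_t\|_\infty \ \le\ \En_{\sigma\sim D}\|w+6\sigma\|_\infty, \qquad \mu\deq\En_{x\sim p}[x].
\]
The plan is to derive matching bounds on each side.

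For the LHS I would apply Jensen twice. Introducing an independent copy $x'\sim p$ so that $\mu=\En_{x'}[x']$, convexity of the norm together with the triangle inequality yield
\[
\En_{x_t}\|w+\mu-x_t\|_\infty \ \le\ \En_{x_t,x'}\|w+x'-x_t\|_\infty \ \le\ \En_{x'}\|w+x'\|_\infty + \En_{x_t}\|x_t\|_\infty \ \le\ \|w\|_\infty+2,
\]
using $\|x\|_\infty\le 1$ for $x\in[-1,1]^N$. For the RHS, restricting the coordinate-wise max to any single index $i$ gives the identity $\En_{\sigma_i}|w_i+6\sigma_i|=\max(|w_i|,6)$, hence $\En_{\sigma\sim D}\|w+6\sigma\|_\infty\ge\max(\|w\|_\infty,6)$. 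When $\|w\|_\infty\le 4$ these two bounds close the inequality at once, since $\|w\|_\infty+2\le 6\le\text{RHS}$.

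The main obstacle is the regime $\|w\|_\infty>4$, where the single-coordinate RHS lower bound is only $\|w\|_\infty$ and no longer dominates the LHS bound $\|w\|_\infty+2$. In this regime I would refine both sides. On the LHS the bound $\|w\|_\infty+2$ is loose: for any coordinate $i^*$ with $|w_{i^*}|\ge 2$ the sign of $w_{i^*}+\mu_{i^*}-x_{t,i^*}$ is deterministic and $\En|w_{i^*}+\mu_{i^*}-x_{t,i^*}|=|w_{i^*}|$ exactly, so the LHS can exceed $\|w\|_\infty$ only through the max over the remaining coordinates, which is controlled by $\En\max_i(\mu_i-x_{t,i})\le 2$. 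On the RHS, an analogous two-coordinate enumeration over the four equally likely sign patterns of $(\sigma_{i^*},\sigma_j)$ (for $N\ge 2$) shows that the same max-over-coordinates effect, now scaled by $6$ instead of $1$, pushes $\En_{\sigma\sim D}\|w+6\sigma\|_\infty$ above $\|w\|_\infty+2$ whenever the LHS actually overshoots $\|w\|_\infty$. The corner case $N=1$ is handled by the explicit one-dimensional optimization $\sup_p\En|w+\mu-x|=\max(|w|,1+w^2/4)\le\max(|w|,2)\le\max(|w|,6)=\text{RHS}$.

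The hardest part of the proof is the book-keeping in the last step: it is a finite case analysis but requires carefully pairing the LHS excess over $\|w\|_\infty$ with the corresponding RHS gain as a function of the ``spread'' of $w$ across coordinates. The round constant $C=6$ is chosen precisely so that this pairing goes through uniformly; a tighter constant would require a more delicate argument.
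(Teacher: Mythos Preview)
Your plan has the right ingredients---an easy regime handled by crude bounds and a hard regime requiring a two-coordinate enumeration on the RHS---but the case split you chose (on $\|w\|_\infty$) does not close, and the hard case is left as unspecified ``book-keeping.'' Concretely: when $\|w\|_\infty>4$ but all other $|w_j|$ are far below $|w_{i^*}|$, the RHS equals exactly $|w_{i^*}|$ (the max is always at $i^*$), so to finish you must show the LHS is also at most $|w_{i^*}|$ in that sub-case---your bound $\|w\|_\infty+2$ is too loose there, and your verbal refinement (``the LHS can exceed $\|w\|_\infty$ only through the remaining coordinates'') is not yet an inequality. Conversely, when the second-largest $|w_{j^*}|$ is within $4$ of $|w_{i^*}|$, you need the two-coordinate RHS enumeration you allude to but do not carry out. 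So the proof as written has a genuine gap.

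The paper's proof differs in two ways that make the analysis clean. First, it upper bounds the LHS not by the crude triangle inequality but by the tighter symmetrized quantity $\sup_{x\in\{\pm1\}^N}\En_\epsilon\max_i|w_i+2\epsilon x_i|$, which retains enough structure to match the RHS exactly in the large-gap case. Second---and this is the key idea you are missing---it splits on the \emph{gap} $|w_{i^*}|-|w_{j^*}|$ rather than on $\|w\|_\infty$. If the gap is at least $4$, the max on both sides is always realized at $i^*$ and both equal $|w_{i^*}|$. If the gap is below $4$, the paper lower-bounds the RHS by enumerating the four equiprobable sign patterns of $(\sigma_{i^*},\sigma_{j^*})$, obtaining $\tfrac{1}{2}(|w_{i^*}|+6)+\tfrac{1}{4}(|w_{i^*}|-6)+\tfrac{1}{4}(|w_{j^*}|+6)\ge|w_{i^*}|+2$, which dominates the LHS bound. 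Reorganizing your argument around this gap-based dichotomy, and replacing the triangle-inequality step with the symmetrization, would give a complete proof.
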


In fact, one can pick any symmetric distribution $D$ on the real line and use $D^N$ for the perturbation. Assumption~\ref{asm:fpl-linear} is then satisfied, as we show in the following lemma.
\begin{lemma}\label{lem:genlinffpl}
If $D$ is any symmetric distribution over the real line, then Assumption~\ref{asm:fpl-linear} is satisfied by using the product distribution $D^N$. The constant $C$ required is any $ C \ge 6/\En_{x \sim D} |x|$.
\end{lemma}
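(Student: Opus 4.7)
The plan is to reduce this lemma directly to the already-proved Lemma~\ref{lem:fpl-l1linfty} via a single Jensen step. Recall that it suffices to verify the simpler sufficient condition \eqref{eq:assumption_fpl_linear_simpler}: for every $w\in\reals^N$,
\[
\sup_{p\in\Delta(\X)}\En_{x_t\sim p}\|w+\En_{x\sim p} x - x_t\|_\infty \;\le\; \En_{\epsilon_t,\,x_t\sim D^N}\|w+C\epsilon_t x_t\|_\infty.
\]
Because $D$ is symmetric, so is $D^N$ coordinatewise, so the independent Rademacher $\epsilon_t$ can be absorbed and the right-hand side equals $\En_{y\sim D^N}\|w+Cy\|_\infty$.

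The key observation is that if $D$ is symmetric on $\reals$, then each coordinate admits the independent factorization $y_i\eqdist|y_i|\sigma_i$ with $\sigma_i$ a Rademacher sign and $|y_i|$ distributed as $|y|$ under $D$, with $\sigma_i$ and $|y_i|$ independent. Consequently
\[
\En_{y\sim D^N}\|w+Cy\|_\infty \;=\; \En_\sigma\,\En_{|y|}\,\|w + C\,|y|\odot\sigma\|_\infty.
\]
For $\sigma\in\{\pm 1\}^N$ fixed, the inner integrand $r\mapsto\|w+C(r\odot\sigma)\|_\infty$ is a convex function of $r\in\reals^N_{\ge 0}$ (a maximum of absolute values of affine functions). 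Jensen's inequality, combined with $\En|y_i|=\En_{y\sim D}|y|$ in each coordinate, gives
\[
\En_{y\sim D^N}\|w+Cy\|_\infty \;\ge\; \En_\sigma\,\|w + C\cdot \En_{y\sim D}|y|\cdot\sigma\|_\infty \;=\; \En_\sigma\,\|w+6\sigma\|_\infty
\]
whenever $C=6/\En_{y\sim D}|y|$. For larger $C$, the right-hand side only grows, because $\alpha\mapsto\En_\sigma\|w+\alpha\sigma\|_\infty$ is convex in $\alpha$ and even (by symmetry $\sigma\eqdist-\sigma$), hence non-decreasing on $[0,\infty)$.

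The last expression is exactly the right-hand side of \eqref{eq:assumption_fpl_linear_simpler} when $D$ is uniform on $\{\pm1\}$ and the constant is $6$; by Lemma~\ref{lem:fpl-l1linfty} it already upper-bounds $\sup_{p}\En_{x_t\sim p}\|w+\En x - x_t\|_\infty$. Chaining the two inequalities establishes \eqref{eq:assumption_fpl_linear_simpler}, hence Assumption~\ref{asm:fpl-linear}, for the product distribution $D^N$. There is no real obstacle: the constant $C=6/\En|y|$ is calibrated precisely so that the per-coordinate magnitude $C\,\En|y|$ of the general symmetric perturbation matches the unit magnitude of a Rademacher sign, making the reduction from arbitrary symmetric $D$ to the sign case automatic once Lemma~\ref{lem:fpl-l1linfty} is in hand.
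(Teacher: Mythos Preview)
Your proof is correct and takes a genuinely different route from the paper's. The paper essentially repeats the two-case analysis of Lemma~\ref{lem:fpl-l1linfty}: it again splits on whether the gap $|w_{i^*}|-|w_{j^*}|$ exceeds $4$, and in the small-gap case carries out a direct conditional computation on the signs of the two dominant coordinates of the perturbation, arriving at the same threshold $C\ge 6/\En|x|$.

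Your argument avoids re-entering that case split entirely. By factoring each symmetric coordinate as $|y_i|\,\sigma_i$ and applying Jensen in the magnitudes, you collapse the general symmetric perturbation down to $\En_\sigma\|w+C\mu\,\sigma\|_\infty$ with $\mu=\En|y|$, which for $C\mu\ge 6$ dominates $\En_\sigma\|w+6\sigma\|_\infty$; the latter is exactly what Lemma~\ref{lem:fpl-l1linfty} already showed suffices. This is shorter and more conceptual: it makes transparent \emph{why} the constant scales as $6/\En|y|$ (it is precisely what calibrates the mean magnitude of the general perturbation to that of a Rademacher sign), and it turns Lemma~\ref{lem:genlinffpl} into a one-line corollary of Lemma~\ref{lem:fpl-l1linfty}. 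The paper's direct computation, on the other hand, is self-contained and would survive even if Lemma~\ref{lem:fpl-l1linfty} were not stated separately.
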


The above lemma is especially attractive when used with standard normal distribution because in that case as sum of normal random variables is again normal. Hence, instead of drawing $x_{t+1},\ldots,x_{T} \sim N(0,1)$ on round $t$, one can simply draw just one vector $X_t \sim N(0,\sqrt{T-t})$ and use it for perturbation. In this case constant $C$ is bounded by $8$. 

While we have provided simple distributions to use for perturbation, the form of update in Equation \eqref{eq:fplgenup} is not in a convenient form. The following lemma shows a simple Follow the Perturbed Leader type algorithm with the associated regret bound.

\begin{lemma}\label{lem:l1fplmain}
Suppose $\F$ is the $\ell^N_1$ unit ball and $\X$ is the dual $\ell_\infty^N$ unit ball, and let $D$ be any symmetric distribution. Consider the randomized algorithm that at each round  $t$ freshly draws Rademacher random variables $\epsilon_{t+1} , \ldots, \epsilon_T$ and freshly draws $x_{t+1},\ldots,x_T \sim D^N$ (each co-ordinate drawn independently from $D$) and picks 
$$
f_t = \argmin{f \in \F}\ip{f}{\sum_{i=1}^{t-1} x_i - C \sum_{i=t+1}^T \epsilon_i x_i } 
$$ 
where $C = 6/\Es{x \sim D}{|x|}$. The randomized algorithm enjoys a bound on the expected regret given by
$$
\E{\Reg_T} \le C\ \Eunderone{x_{1:T} \sim D^N}\En_{\epsilon} \norm{\sum_{t=1}^T \epsilon_t x_t}_\infty +  4 \sum_{t=1}^T  \P_{y_{t+1:T} \sim D}\left( C \left| \sum_{i=t+1}^T  y_i\right|  \le 4 \right)
$$
\end{lemma}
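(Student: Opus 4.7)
The plan is to derive both the algorithm and the regret bound from Lemma~\ref{lem:fpl} and Lemma~\ref{lem:genlinffpl}. Lemma~\ref{lem:genlinffpl} verifies Assumption~\ref{asm:fpl-linear} for the $\ell_1^N/\ell_\infty^N$ pair with any symmetric product distribution $D^N$ and constant $C = 6/\En_{x\sim D}|x|$; invoking Lemma~\ref{lem:fpl} then certifies admissibility of the relaxation
\begin{align*}
\Relax{T}{\F}{x_1,\ldots,x_t} = \Eunderone{x_{t+1:T}\sim D^N,\,\epsilon} \norm{C\sum_{i=t+1}^T \epsilon_i x_i - \sum_{i=1}^t x_i}_\infty ,
\end{align*}
and gives $\Rel{T}{\F} = C\,\Eunderone{x_{1:T}\sim D^N,\,\epsilon}\norm{\sum_{t=1}^T \epsilon_t x_t}_\infty$ for any admissible randomized strategy.

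The main work is to show that the explicit, computationally cheap FPL rule stated in the lemma---which avoids the inner $\sup_{x\in\X}$ appearing in the optimal rule~\eqref{eq:def_general_fpl}---is \emph{approximately} admissible for this relaxation with a controlled per-round slack. Setting $W = -\sum_{i=1}^{t-1}x_i + C\sum_{i=t+1}^T \epsilon_i x_i$, the rule evaluates in closed form to $f_t = \mathrm{sign}(W_{i^*})\,e_{i^*}$ with $i^* = \argmax{j}|W_j|$. I would establish the per-round inequality
\begin{align*}
\sup_{x_t\in\X}\!\left\{\En\inner{f_t,x_t} + \En\norm{W-x_t}_\infty\right\} \le \En\norm{W + C\widetilde{\epsilon}_t\widetilde{x}_t}_\infty + \delta_t ,
\end{align*}
with $\widetilde{\epsilon}_t,\widetilde{x}_t$ fresh independent samples from the Rademacher and $D^N$ distributions and
\begin{align*}
\delta_t = 4\,\Prob_{y_{t+1:T}\sim D}\!\left(C\bigg|\sum_{i=t+1}^T y_i\bigg| \le 4 \right).
\end{align*}
A direct induction mirroring Proposition~\ref{prop:main} then gives $\En[\Reg_T] \le \Rel{T}{\F} + \sum_{t=1}^T \delta_t$, which is the advertised bound.

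The per-round inequality hinges on the identity
\begin{align*}
\mathrm{sign}(W_{i^*})\,x_{t,i^*} + \norm{W-x_t}_\infty = \norm{W}_\infty ,
\end{align*}
valid on the ``stable'' event that $i^*$ remains the argmax of $|W-x_t|$ and $|W_{i^*}|\ge|x_{t,i^*}|$; a sufficient condition is that the top two values of $|W_j|$ differ by at least $2$. On the stable event the LHS contribution equals $\En\norm{W}_\infty$, which is dominated by $\En\norm{W + C\widetilde{\epsilon}_t\widetilde{x}_t}_\infty$ by symmetry of $\widetilde{\epsilon}_t$ and Jensen. On the complementary ``unstable'' event, the trivial bounds $|x_{t,i^*}|\le 1$ and $\norm{W-x_t}_\infty \le \norm{W}_\infty+1$ give excess at most $2$, so the slack is bounded by $2\,\Prob(\text{unstable})$. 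Since the coordinates of $W$ are independent under $D^N$ and symmetry of $D$ makes $C\sum_{i=t+1}^T\epsilon_i x_{i,j}$ equal in law to $C\sum_{i=t+1}^T y_i$, the probability of instability reduces to a single-coordinate small-perturbation event, bounded by $2\,\Prob(C|\sum_i y_i|\le 4)$ (the additional $2$ accounting for the two adversarial sign choices on the dominant coordinate), producing the constant $4$.

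The main obstacle is the stability analysis: bounding the instability probability without a coordinate-wise union bound that would scale with the ambient dimension $N$. The crucial point is that FPL's output depends on $W$ only through the dominant coordinate $i^*$, so an adversarial shift $x_t$ can destabilize the choice only by shrinking $|W_{i^*}|$ below the cushion needed to remain the argmax; by anti-concentration of $C\sum_{i=t+1}^T y_i$, which must hold uniformly over the deterministic history term $\sum_{i=1}^{t-1}x_{i,j}$ in each coordinate, this collapses to the single-coordinate probability above, independent of $N$.
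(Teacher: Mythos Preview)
Your overall scaffold is right: show the explicit FPL rule is approximately admissible for the relaxation with a per-round slack $\delta_t$, then unroll as in Proposition~\ref{prop:main}. Your stable/unstable decomposition is also correct as far as it goes: writing $W=-\sum_{i<t}x_i+C\sum_{i>t}\epsilon_i x_i$, on the event that the gap between the two largest $|W_j|$ is at least $2$ one indeed has $\ip{f_t}{x_t}+\|W-x_t\|_\infty=\|W\|_\infty$ for every $x_t\in\X$, and on the complement the trivial bound gives excess at most $2$.

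The gap is in the last step, where you claim the instability probability ``collapses to a single-coordinate event.'' It does not. The instability event is $\{|W_{(1)}|-|W_{(2)}|<2\}$, a statement about the top two order statistics, and its probability can be close to $1$ for large $N$ even with independent symmetric coordinates. Concretely, take history zero so the $W_j$ are i.i.d.; the spacing between the top two of $N$ i.i.d.\ samples shrinks with $N$, so your slack term grows with $N$ rather than being bounded by the advertised single-coordinate probability $\Prob(C|\sum_i y_i|\le 4)$. Anti-concentration of a single coordinate says nothing about the gap between two order statistics.

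The paper avoids this entirely by taking a different route: instead of a probabilistic stability argument, it proves a \emph{deterministic} per-realization comparison (Lemma~\ref{lem:l1linffpl}). Using the dual identity $\sup_{x\in\X}\{\ip{f'}{x}+\|R+x\|_\infty\}=\sup_{f\in\F}\{\|f'+f\|_1+\ip{f}{R}\}$ and the $\ell_1$-vertex structure of $f^*$, it shows
\[
\sup_{x}\bigl\{\ip{f^*}{x}+\|R+x\|_\infty\bigr\}\ \le\ \|R\|_\infty+4\,\ind{\|R\|_\infty\le 4},
\qquad
\inf_{f'}\sup_x\bigl\{\ip{f'}{x}+\|R+x\|_\infty\bigr\}\ \ge\ \|R\|_\infty,
\]
so the slack event is $\{\|R\|_\infty\le 4\}$, i.e.\ \emph{every} coordinate is small. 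That event is contained in $\{|R_j|\le 4\}$ for any fixed $j$, which immediately gives the dimension-free single-coordinate bound. The point your stability argument misses is that the FPL rule remains near-optimal even when the top two $|W_j|$ are tied, as long as $\|W\|_\infty$ is large; the paper's algebraic computation captures exactly this.
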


Notice that for $D$ being the $\{\pm1\}$ coin flips or standard normal distribution, the probability $$\P_{y_{t+1},\ldots,y_T \sim D}\left( C \left| \sum_{i=t+1}^T  y_i\right|  \le 4 \right)$$ is exponentially small in $T-t$ and so $\sum_{t=1}^T  \P_{y_{t+1},\ldots,y_T \sim D}\left( C \left| \sum_{i=t+1}^T  y_i\right|  \le 4 \right)$ is bounded by a constant. For these cases, we have
$$
\E{\Reg_T} \le O\left( \Eunderone{x_{1:T} \sim D^N}\En_{\epsilon} \norm{\sum_{t=1}^T \epsilon_t x_t}_\infty \right) = O\left(\sqrt{T \log N}\right)
$$
This yields the logarithmic dependence on the dimension, matching that of the Exponential Weights algorithm.

\noindent {\bf Example :  $\ell_2/\ell_2$ Follow the Perturbed Leader: } 

We now consider the case when $\F$ and $\X$ are both the unit $\ell_2$ ball. We can use as perturbation the uniform distribution on the surface of unit sphere, as the following lemma shows. This result was already hinted at in \cite{AbeBarRakTew08colt}, as the random draw from the unit sphere is likely to produce an orthogonal direction, yielding a strategy close to optimal. However, we do not require dimensionality to be high for the result to hold.

\begin{lemma}
	\label{lem:fpl-l2l2}
Let $\X$ and $\F$ be unit balls in Euclidean norm.  Then Assumption~\ref{asm:fpl-linear} is satisfied with a uniform distribution $D$ on the surface of the unit sphere with constant $C=4\sqrt{2}$.
\end{lemma}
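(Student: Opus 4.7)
The plan is to verify the sufficient condition \eqref{eq:assumption_fpl_linear_simpler}. Since $D$ is antipodally symmetric on the sphere, the random pair $(\epsilon_t, x_t)$ with $x_t \sim D$ satisfies $\epsilon_t x_t \sim D$ as well, so the right-hand side of \eqref{eq:assumption_fpl_linear_simpler} simplifies to $\En_{x \sim D}\|w+Cx\|$. I will sandwich both sides of \eqref{eq:assumption_fpl_linear_simpler} around $\sqrt{\|w\|^2+1}$.

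For the left-hand side, take any $p \in \Delta(\X)$ with mean $\mu = \En_{x\sim p}[x]$. Setting $z = x_t - \mu$ gives $\En z = 0$ and $\En\|z\|^2 = \En\|x_t\|^2 - \|\mu\|^2 \le 1$, since $\|x_t\| \le 1$. Applying Jensen's inequality to the convex function $\|\cdot\|^2$ yields
\[
\En_{x_t \sim p}\|w + \mu - x_t\| \le \sqrt{\En\|w - z\|^2} = \sqrt{\|w\|^2 + \En\|z\|^2} \le \sqrt{\|w\|^2 + 1}.
\]

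For the right-hand side, a direct expansion using $\En x = 0$ and $\|x\|=1$ gives $\En\|w+Cx\|^2 = \|w\|^2 + C^2$. Writing $Y = \|w+Cx\|$, we have $(\En Y)^2 = \|w\|^2 + C^2 - \mathrm{Var}(Y)$, so it suffices to prove $\mathrm{Var}(Y) \le C^2 - 1$. I invoke the Poincar\'e inequality on $S^{n-1}$, whose smallest non-zero Laplace--Beltrami eigenvalue is $n-1$: since the ambient gradient $\nabla f = C(w+Cx)/\|w+Cx\|$ of $f(x) = \|w+Cx\|$ has norm exactly $C$, one obtains $\mathrm{Var}(Y) \le \frac{1}{n-1}\En\|\nabla_{S^{n-1}} f\|^2 \le C^2/(n-1)$. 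For $n \ge 3$ this is already $\le C^2/2$.

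The main obstacle is the case $n = 2$, where the Lipschitz--Poincar\'e bound yields only $\mathrm{Var}(Y) \le C^2$ and is insufficient. This is patched by directly evaluating the tangential gradient on $S^1$: parametrizing $x = (\cos\theta, \sin\theta)$ with $w$ along the first axis gives $(df/d\theta)^2 = a^2 C^2 \sin^2\theta / \|w+Cx\|^2$, and the standard integral $\int_0^\pi d\theta/(A + B\cos\theta) = \pi/\sqrt{A^2-B^2}$ (for $A>|B|$) reduces the expectation to $\En(df/d\theta)^2 = a^2 C^2/(2\max(a^2, C^2)) \le C^2/2$. Thus $\mathrm{Var}(Y) \le C^2/2$ uniformly in $n \ge 2$, and with $C = 4\sqrt{2}$ we obtain $(\En Y)^2 \ge \|w\|^2 + C^2/2 = \|w\|^2 + 16 \ge \|w\|^2 + 1$, closing the argument.
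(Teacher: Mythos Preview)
Your proof is correct (for ambient dimension $n\ge 2$, which is also what the paper's argument implicitly assumes), but it follows a genuinely different route from the paper.

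The paper first bounds the left-hand side of \eqref{eq:assumption_fpl_linear_simpler} via symmetrization by $\sup_{x}\En_\epsilon\|w+2\epsilon x\|\le\sqrt{\|w\|^2+4}$, and then matches this with a purely geometric argument: it decomposes the uniform measure on $S^{n-1}$ into planar ``quadruplets'' $\{\pm z_1,\pm z_2\}$ with $z_1\perp z_2$ in the plane spanned by $w$ and an orthogonal direction, and verifies by an explicit two-dimensional calculation that the average of $\|w+r z\|$ over each quadruplet is at least $\sqrt{\|w\|^2+4}$ once $r\ge 4\sqrt{2}$. This is entirely elementary and self-contained, and works uniformly in $n\ge 2$.

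Your approach is more analytic: you get a sharper left-hand bound $\sqrt{\|w\|^2+1}$ directly (no symmetrization), and for the right-hand side you use $(\En Y)^2=\En Y^2-\mrm{Var}(Y)=\|w\|^2+C^2-\mrm{Var}(Y)$ together with the spherical Poincar\'e inequality to control $\mrm{Var}(Y)$. This is cleaner and in fact shows the inequality holds for any $C\ge\sqrt{2}$ (subject to the standing requirement $C\ge 2$ in Assumption~\ref{asm:fpl-linear}), so it reveals slack in the constant $4\sqrt{2}$. The trade-off is that you invoke the Poincar\'e inequality as a black box, and the case $n=2$ still requires a separate hands-on integral computation (which you carry out correctly), whereas the paper's quadruplet argument needs no such case split.
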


Again as in the previous example the form of update in Equation \eqref{eq:fplgenup} is not in a convenient form and this is addressed in the following lemma. 

\begin{lemma}\label{lem:l2fplmain}
Let $\X$ and $\F$ be unit balls in Euclidean norm, and $D$ be the uniform distribution on the surface of the unit sphere. Consider the randomized algorithm that at each round (say round $t$) freshly draws  $x_{t+1},\ldots,x_T \sim D$ and picks 
$$
f_t = \frac{- \sum_{i=1}^{t-1} x_i + C \sum_{i=t+1}^T x_i }{\sqrt{\norm{- \sum_{i=1}^{t-1} x_i + C \sum_{i=t+1}^T \epsilon_i x_i }_2^2 + 1}}
$$ 
where $C=4 \sqrt{2}$.
The randomized algorithm enjoys a bound on the expected regret given by
$$
\E{\Reg_T} \le C\ \En_{x_1,\ldots,x_T \sim D} \norm{\sum_{t=1}^T x_t}_2 \le 4 \sqrt{2 T} 
$$
\end{lemma}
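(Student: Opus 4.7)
The plan is to chain Lemmas~\ref{lem:fpl-l2l2} and~\ref{lem:fpl} and then explicitly solve the per-round saddle point that \eqref{eq:def_general_fpl} specifies. Lemma~\ref{lem:fpl-l2l2} already guarantees that Assumption~\ref{asm:fpl-linear} is satisfied for the uniform distribution $D$ on the Euclidean unit sphere with $C = 4\sqrt{2}$, so the relaxation \eqref{eq:fplrel} is admissible. Since $\ell(f,x) = \ip{f}{x}$ is convex in $f$ and $\F$ is convex and compact, Lemma~\ref{lem:fpl} then delivers the deterministic update
$$f_t \;=\; \argmin{f \in \F}\; \sup_{\norm{x}_2 \le 1} \left\{ \ip{f}{x} + \norm{v - x}_2\right\}, \qquad v \;\deq\; C\sum_{i=t+1}^T \epsilon_i x_i - \sum_{i=1}^{t-1} x_i,$$
so all that remains is (i) to solve this minimax in closed form and (ii) to read off the stated regret bound from Lemma~\ref{lem:fpl}.

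For step (i) I would exploit rotational symmetry. Both constraint sets and the objective are invariant under orthogonal transformations that fix $v$, so the optimal $f$ is a nonnegative multiple of $v$. Writing $f = c\,v/\norm{v}_2$ with $c\in[0,1]$ and decomposing $x = \alpha\,v/\norm{v}_2 + \beta u$ with $u\perp v$ and $\alpha^2+\beta^2\le 1$ reduces the inner supremum to a one-parameter problem in $\alpha\in[-1,1]$. Solving stationarity for the worst-case $\alpha$, substituting back, and then minimizing the resulting function over $c\in[0,1]$ gives the stationary value $c^* = \norm{v}_2/\sqrt{\norm{v}_2^2+1}$, so
$$f_t \;=\; \frac{v}{\sqrt{\norm{v}_2^2 + 1}},$$
which matches the statement (up to an apparent typo about whether the $\epsilon_i$ sit inside the numerator sum; since $D$ is symmetric, $\epsilon_i x_i \eqdist x_i$, so the two versions agree in distribution).

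For step (ii), Lemma~\ref{lem:fpl} directly bounds $\E{\Reg_T}$ by $C\,\En_{x_{1:T}\sim D}\En_\epsilon \norm{\sum_{t=1}^T \epsilon_t x_t}_2$. Symmetry of $D$ again removes the $\epsilon_t$, and since the $x_t$ are i.i.d.\ with $\En x_t = 0$ and $\norm{x_t}_2 = 1$, Jensen's inequality gives $\En\norm{\sum_t x_t}_2 \le \sqrt{\En\norm{\sum_t x_t}_2^2} = \sqrt{T}$; multiplying by $C = 4\sqrt 2$ yields the announced $4\sqrt{2T}$.

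The main obstacle I anticipate is the closed-form computation in step (i): one has to verify that the candidate critical points in both the inner maximization (in $\alpha$) and the outer minimization (in $c$) are genuine global extrema over the compact feasible sets, and to handle cleanly the degenerate regime of very small $\norm{v}_2$ in which the direction of $f$ is nearly immaterial. Everything else is a direct application of previously established lemmas together with standard facts about the classical Rademacher average of the unit sphere.
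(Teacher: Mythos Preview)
Your proposal is correct and follows essentially the same route as the paper: invoke Lemma~\ref{lem:fpl-l2l2} to verify Assumption~\ref{asm:fpl-linear}, apply Lemma~\ref{lem:fpl} to obtain both the update \eqref{eq:fplgenup} and the regret bound, then solve the per-round minimax in closed form and bound the classical Rademacher term by $\sqrt{T}$ via Jensen. The only cosmetic difference is in step~(i): the paper rewrites the inner supremum so that the argmin coincides exactly with the one already computed in the proof of Proposition~\ref{prop:mirror_relax} (with $C=1$, $T-t=0$) and simply cites that result, whereas you redo the same decomposition-along-$v$ calculation from scratch via rotational symmetry.
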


Importantly, the bound does not depend on the dimensionality of the space. To the best of our knowledge, this is the first such result for Follow the Perturbed Leader style algorithms.

\begin{remark}
	The FPL methods developed in \cite{KalVem05,PLG} assume that the adversary is oblivious. With this simplification, the algorithms can reuse the same random perturbation drawn at the beginning of the game. It is then argued in \cite{PLG} that the methods also work for non-oblivious opponents since the FPL strategy is fully determined by the outcomes played by the adversary \cite[Remark 4.2]{PLG}. In contrast, our proofs directly deal with the adaptive adversary.
\end{remark}

\subsection{Supervised Learning}

For completeness, let us state a version of Assumption~\ref{asm:fpl} for the case of supervised  learning. That is, the side information $x_t$ is presented to the learner, who then picks $\hat{y}_t$ and observes the outcome $y_t$.

\begin{assumption}\label{asm:fpl2}
There exists a distribution $D \in \Delta(\X\times \Y)$ and constant $C \ge 2$ such that for any $t \in [T]$ and given any $(x_1,y_1),\ldots,(x_{t-1},y_{t-1}), (x_{t+1},y_{t+1}),\ldots,(x_T,y_T) \in \X \times \Y$ and any $\epsilon_{t+1},\ldots,\epsilon_{T} \in \{\pm 1\}$,
\begin{align*}
\sup_{x_t \in \X}\sup_{p_t \in \Delta(\Y)} & \Eunderone{y_t \sim p_t} \sup_{f \in \F} \left[ \ C \sum_{i=t+1}^{T} \epsilon_i \ell(f(x_i),y_i) -L_{t-1}(f) + \Es{y \sim p_t}{\ell(f(x_t),y)} - \ell(f(x_t),y_t) \right] \\
& \le \Eunderone{\epsilon_{t}, (x_{t},y_t) \sim D }\sup_{f \in \F} \left[ \ C \sum_{i=t}^{T} \epsilon_i \ell(f(x_i),y_i) - L_{t-1}(f) \right],
\end{align*}
where $\epsilon_t$ is an independent Rademacher random variable and   $L_{t-1}(f)=\sum_{i=1}^{t-1} \ell(f(x_i),y_i)$.
\end{assumption}

Under the Assumption~\ref{asm:fpl2}, we can use the following relaxation:
\begin{align}\label{eq:fplrel2}
\Relax{T}{\F}{(x_1,y_1),\ldots,(x_t,y_t)} = \Eunder{(x_{t+1},y_{t+1}),\ldots (x_T,y_T) \sim D}{\epsilon_{t+1:T}}  \sup_{f \in \F} \left[ C \sum_{i=t+1}^T \epsilon_i \ell(f(x_i),y_i) - \sum_{i=1}^t \ell(f(x_i),y_i) \right]
\end{align}

\begin{lemma}\label{lem:fpl2}
	Under the Assumption \ref{asm:fpl2}, the relaxation in Eq.~\eqref{eq:fplrel2} is admissible and a randomized strategy that ensures admissibility is given by: at time $t$, draw $(x_{t+1},y_{t+1}),\ldots,(x_{T},y_T) \sim D$ and Rademacher random variables $\epsilon_{t+1},\ldots,\epsilon_T$ and then :
\begin{enumerate}
\item In the case the loss $\ell$ is convex in its first argument, define 
\begin{align}
	\label{eq:def_general_fpl2}
	\hat{y}_t = \argmin{\hat{y} \in [-B,B]} \sup_{y_t \in \Y} \left\{ \ell(\hat{y},y_t)+ \sup_{f \in \F} \left[ C \sum_{i=t+1}^T \epsilon_i \ell(f(x_i),y_i)  - \sum_{i=1}^{t} \ell(f(x_i),y_i) \right] \right\}
\end{align}
and
\item In the case of non-convex loss, pick $\hat{y}_t$ from the distribution 
\begin{align}
	\label{eq:def_general_rand_fpl2}
	\hat{q}_t = \argmin{\hat{q} \in \Delta([-B,B])} \sup_{y_t \in \Y} \left\{ \Es{\hat{y} \sim \hat{q}}{\ell(\hat{y},y_t)}+ \sup_{f \in \F} \left[ C \sum_{i=t+1}^T \epsilon_i \ell(f(x_i),y_i)  - \sum_{i=1}^{t} \ell(f(x_i),y_i) \right] \right\}
\end{align}
\end{enumerate}
The expected regret bound of the method (in both cases) is
$$
\E{\Reg_T} \le C\ \Eunderone{(x_1,y_1),\ldots,(x_T,y_T) \sim D} \Es{\epsilon}{\sup_{f \in \F} \sum_{t=1}^T \epsilon_t \ell(f(x_t),y_t)}
$$
\end{lemma}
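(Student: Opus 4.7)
The plan is to prove admissibility of the relaxation in \eqref{eq:fplrel2} and then invoke Proposition~\ref{prop:main} (in its supervised form) to obtain the expected regret bound. The terminal case $t=T$ holds by inspection since $\Relax{T}{\F}{(x_i,y_i)_{1}^{T}}=-\inf_{f\in\F}\sum_{i=1}^{T}\ell(f(x_i),y_i)$. The work is in the recursive step: fixing a history $(x_1,y_1),\ldots,(x_{t-1},y_{t-1})$ and an arbitrary side-information $x_t$ chosen by Nature, I need to verify
$$\inf_{q_t\in\Delta(\Y)}\sup_{y_t\in\Y}\Big\{\En_{\hat{y}_t\sim q_t}\ell(\hat{y}_t,y_t)+\Relax{T}{\F}{(x_i,y_i)_{1}^{t}}\Big\}\ \le\ \Relax{T}{\F}{(x_i,y_i)_{1}^{t-1}}.$$

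The key move is the random playout described informally before Assumption~\ref{asm:fpl}: take the mixture strategy $q_t=\En_{(x,y)_{t+1:T}\sim D,\,\epsilon_{t+1:T}}[\hat{q}_t]$ and abbreviate $G_0(f)\deq C\sum_{i=t+1}^{T}\epsilon_i\ell(f(x_i),y_i)-L_{t-1}(f)$. Using linearity to pull $\En_{D,\epsilon}$ outside both terms and then Jensen's inequality to interchange $\sup_{y_t}$ with $\En_{D,\epsilon}$, the inner expression is bounded by
$$\En_{D,\epsilon}\sup_{y_t}\Big\{\En_{\hat{y}_t\sim\hat{q}_t}\ell(\hat{y}_t,y_t)+\sup_{f\in\F}\big[G_0(f)-\ell(f(x_t),y_t)\big]\Big\},$$
and by the defining property of $\hat{q}_t$ in \eqref{eq:def_general_rand_fpl2} this inner sup equals the corresponding $\inf_{\hat{q}\in\Delta([-B,B])}\sup_{y_t}\{\cdots\}$. (In the convex case \eqref{eq:def_general_fpl2}, convexity of $\hat{y}\mapsto\ell(\hat{y},y)$ collapses the optimizing $\hat{q}$ to a point mass, so the two statements coincide.)

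Next I invoke the minimax theorem (applicable under the paper's standing assumptions) to swap $\inf_{\hat{q}}$ with $\sup_{y_t}$, replacing the sup by $\sup_{p_t\in\Delta(\Y)}$ and moving the $\En_{y_t\sim p_t}$ inside. The $\inf_{\hat{q}}$ then becomes $\inf_{\hat{y}\in[-B,B]}\En_{y_t\sim p_t}\ell(\hat{y},y_t)$, which is at most $\inf_{f\in\F}\En_{y_t\sim p_t}\ell(f(x_t),y_t)$ since $f(x_t)\in[-B,B]$. Combining this with the elementary inequality $\sup_{f}[A(f)+c(f)]\ge\sup_{f}A(f)+\inf_{f}c(f)$, applied with $A(f)=G_0(f)-\ell(f(x_t),y_t)$ and $c(f)=\En_{y\sim p_t}\ell(f(x_t),y)$, absorbs the leftover infimum into the supremum, yielding the bound
$$\En_{D,\epsilon}\sup_{p_t}\En_{y_t\sim p_t}\sup_{f}\big[G_0(f)+\En_{y\sim p_t}\ell(f(x_t),y)-\ell(f(x_t),y_t)\big].$$

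Now I take $\sup_{x_t}$ outside, pull it inside the expectation $\En_{D,\epsilon_{t+1:T}}$ (which is valid for any fixed realization of the random playout), and apply Assumption~\ref{asm:fpl2} pointwise: the display above is at most $\En_{D,\epsilon_{t+1:T}}\En_{(x_t,y_t)\sim D,\,\epsilon_t}\sup_{f}[C\sum_{i=t}^{T}\epsilon_i\ell(f(x_i),y_i)-L_{t-1}(f)]$, which after merging the two expectations is exactly $\Relax{T}{\F}{(x_i,y_i)_{1}^{t-1}}$. This closes the admissibility induction. The regret bound follows from Proposition~\ref{prop:main}: $\E[\Reg_T]\le\Rel{T}{\F}=C\,\En_{(x,y)_{1:T}\sim D}\En_{\epsilon}\sup_{f}\sum_{t=1}^{T}\epsilon_t\ell(f(x_t),y_t)$. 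The main obstacle is orchestrating the Jensen step, the $\hat{q}_t$-optimality step, minimax, and the $\inf$-transfer into precisely the LHS template of Assumption~\ref{asm:fpl2}, so that the assumption's conclusion reassembles the right conditional Rademacher-style relaxation; once the algebra is arranged this way the rest is bookkeeping.
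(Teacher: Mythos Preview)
Your proposal is correct and follows essentially the same route as the paper's own proof: push the expectation over the random playout outside via Jensen, use the optimality of $\hat q_t$ (or $\hat y_t$), apply the minimax theorem, replace $\inf_{\hat y\in[-B,B]}$ by $\inf_{g\in\F}$ using $g(x_t)\in[-B,B]$, absorb this infimum into the $\sup_f$, and invoke Assumption~\ref{asm:fpl2} pointwise in the random draw to recover $\Relax{T}{\F}{(x_i,y_i)_1^{t-1}}$. The only presentational difference is that you treat both the convex and non-convex cases uniformly through $\hat q_t\in\Delta([-B,B])$ (noting the optimum collapses to a point mass when $\ell$ is convex), whereas the paper proves the convex case with a deterministic $\hat y_t$ and then remarks that the non-convex case is analogous.
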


\subsection{Random Walks with Trees}
\label{sec:random_walks_trees}

We can also define randomized algorithms without the assumption that the classical and the sequential Rademacher complexities are close. Instead, we assume that we have a black-box access to a procedure that on round $t$ returns the ``worst-case'' tree $\x^t$ of depth $T-t$. 

\begin{lemma} \label{lem:treewalk}
Given any $x_1,\ldots,x_{t-1}$ let 
\begin{align}
	\label{eq:worst_case_tree_fpl}
	\x^t \deq \argmax{\x}\En_{\epsilon} \sup_{f \in \F} \left[ 2 \sum_{i=t+1}^T \epsilon_i \ell(f,\x_i(\epsilon)) - \sum_{i=1}^t \ell(f,x_i) \right] \ .
\end{align} 
Consider the randomized strategy where at round $t$ we first draw $\epsilon_{t+1},\ldots,\epsilon_T$ uniformly at random and then further draw our move $f_t$ according to the distribution
\begin{align}
	\label{eq:treewalk_mixed}
q_t(\epsilon) = \argmin{q \in \Delta(\F)} \sup_{x_t}\left\{ \Es{f_t \sim q}{\ell(f_t,x_t)} +  \sup_{f \in \F} \left[ 2 \sum_{i=t+1}^T \epsilon_i \ell(f,\x^t_i(\epsilon)) - \sum_{i=1}^t \ell(f,x_i) \right] \right\}
\end{align}
The expected regret of this randomized strategy is bounded by sequential Rademacher complexity:
\begin{align*}
\E{\Reg_T} \le \Rad_T(\F) \ .
\end{align*}
 \end{lemma}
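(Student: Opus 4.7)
The plan is to deduce the result from Propositions~\ref{prop:main} and~\ref{prop:rad_admissible} by showing that the marginal mixed strategy $\tilde q_t \deq \En_\epsilon[q_t(\epsilon)]$ is admissible with respect to the conditional sequential Rademacher relaxation $\mbf{Rel}(x_{1:t}) \deq \Rad_T(\F \mid x_{1:t})$. Since Proposition~\ref{prop:rad_admissible} establishes admissibility of this relaxation and Proposition~\ref{prop:main} converts admissibility into a regret bound, once the recursive inequality
\begin{align*}
\sup_{x_t}\bigl\{\En_{f_t \sim \tilde q_t}\ell(f_t, x_t) + \Rad_T(\F \mid x_{1:t})\bigr\} \leq \Rad_T(\F \mid x_{1:t-1})
\end{align*}
is verified, the bound $\E{\Reg_T} \leq \Rad_T(\F)$ follows immediately.

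To verify this inequality I would follow the template described in the informal discussion at the beginning of this section. Using $\En_{f_t \sim \tilde q_t}\ell(f_t, x_t) = \En_\epsilon \En_{f_t \sim q_t(\epsilon)}\ell(f_t, x_t)$ and the Jensen-type inequality $\sup \En \leq \En \sup$, the left-hand side is bounded above by
\begin{align*}
\En_\epsilon \sup_{x_t}\Bigl\{\En_{f_t \sim q_t(\epsilon)}\ell(f_t,x_t) + \sup_{f \in \F}\Bigl[2\sum_{i=t+1}^T \epsilon_i \ell(f,\x^t_i(\epsilon)) - \sum_{i=1}^t \ell(f, x_i)\Bigr]\Bigr\},
\end{align*}
after first controlling $\Rad_T(\F \mid x_{1:t})$ by the corresponding expression involving the specific tree $\x^t$; this one-sided bound is where the maximality of $\x^t$ enters, via the observation that the subtrees of $\x^t$ conditioned on $\epsilon_t = \pm 1$ are valid candidate trees for the supremum defining $\Rad_T(\F \mid x_{1:t})$. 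By the definition of $q_t(\epsilon)$ in~\eqref{eq:treewalk_mixed} as the per-$\epsilon$ minimax solution, the inner brace coincides with
\begin{align*}
\inf_{q \in \Delta(\F)} \sup_{x_t}\bigl\{\En_{f_t \sim q}\ell(f_t, x_t) + \Phi(\epsilon, x_t)\bigr\},
\end{align*}
where $\Phi(\epsilon, x_t)$ abbreviates the inner sup-over-$f$ term.

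The last step is to show that
\begin{align*}
\En_\epsilon \inf_{q \in \Delta(\F)} \sup_{x_t}\bigl\{\En_{f_t \sim q}\ell(f_t, x_t) + \Phi(\epsilon, x_t)\bigr\} \leq \Rad_T(\F \mid x_{1:t-1}).
\end{align*}
This is exactly what the minimax-plus-symmetrization manipulations underlying the proof of Proposition~\ref{prop:rad_admissible} deliver: swap the inner $\inf_q$ with a sup over distributions $p_t \in \Delta(\X)$ (the minimax theorem applies under the standing assumptions of Section~\ref{sec:value}), bring the expectation over $\epsilon$ inside, and recognize the resulting object as the sup-over-trees representation of $\Rad_T(\F \mid x_{1:t-1})$.

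I expect the main obstacle to be the bookkeeping of tree indices — precisely matching the tree $\x^t$ (chosen to be the maximizer given the history $x_{1:t-1}$) with the subtrees that appear inside $\Rad_T(\F \mid x_{1:t})$ as $x_t$ varies, so that the one-sided bound used in the second paragraph is rigorously justified. Once this matching is in place, the minimax swap and the random-playout identity assemble cleanly, and the admissibility inequality, hence the lemma, follows.
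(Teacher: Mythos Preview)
Your approach is essentially the paper's: verify admissibility of the marginal strategy $\tilde q_t$ with respect to the conditional sequential Rademacher relaxation, using Jensen, the definition of $q_t(\epsilon)$, the minimax theorem, and one symmetrization step --- exactly the ingredients of Proposition~\ref{prop:rad_admissible}.

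One point in your justification goes astray, though. To combine the two terms under a single $\En_\epsilon$ and then apply Jensen, you must replace $\Rad_T(\F\mid x_{1:t}) = \sup_{\x} \En_\epsilon\sup_f[\ldots]$ by $\En_\epsilon\sup_f[\ldots \x^t_i(\epsilon)\ldots]$ for the fixed tree $\x^t$. You describe this as a one-sided bound coming from the fact that ``subtrees of $\x^t$ conditioned on $\epsilon_t=\pm 1$ are valid candidate trees'' for the supremum defining $\Rad_T(\F\mid x_{1:t})$. That observation yields the inequality in the \emph{wrong} direction: being a candidate in a supremum gives a \emph{lower} bound on $\Rad_T(\F\mid x_{1:t})$, whereas to upper-bound the left-hand side of the admissibility inequality you need an \emph{upper} bound (or equality). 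The paper treats this step as an \emph{equality}: $\x^t$ is by definition the argmax tree for exactly this expression, so the supremum over $\x$ is attained at $\x^t$. Note also that $\x^t$ has depth $T-t$, indexed by $\epsilon_{t+1:T}$; there is no branching on $\epsilon_t$ within $\x^t$, so the ``subtrees conditioned on $\epsilon_t$'' picture does not match the actual indexing. Once this equality is in place, the remainder of your sketch --- Jensen, the $\inf_q$ identification via~\eqref{eq:treewalk_mixed}, minimax, symmetrization, and a final pass to $\sup_{\x}$ to recover $\Rad_T(\F\mid x_{1:t-1})$ --- coincides with the paper's proof.
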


Thus, if for any given history $x_1,\ldots,x_{t-1}$ we can compute $\x^t$ in \eqref{eq:worst_case_tree_fpl}, or even just draw directly a random path $\x^t_1(\epsilon),\ldots,\x^t_{T-t}(\epsilon)$ on each round, then we obtain a randomized strategy that in expectation can guarantee a regret bound equal to sequential Rademacher complexity. Also notice that whenever the optimal strategy in \eqref{eq:treewalk_mixed} is deterministic (e.g. in the online convex optimization scenario), one does not need the double randomization. Instead, in such situations one can directly draw $\epsilon_1,\ldots,\epsilon_{T-t}$ and use
$$
f_t(\epsilon) = \argmin{f_t \in \F} \sup_{x_t}\left\{ \ell(f_t,x_t) +  \sup_{f \in \F}\left\{ 2 \sum_{i=t+1}^T \epsilon_i \ell(f,\x^t_i(\epsilon)) - \sum_{i=1}^t \ell(f,x_i) \right\} \right\}
$$

\section{Static Experts with Convex Losses and Transductive Online Learning}
\label{sec:static}

We show how to recover a variant of the $R^2$ forecaster of \cite{CBShamir11efficient}, for static experts and transductive online learning. At each round, the learner makes a prediction $q_t\in[-1,1]$, observes the outcome $y_t\in[-1,1]$, and suffers convex $L$-Lipschitz loss $\ell(q_t,y_t)$. Regret is defined as the difference between learner's cumulative loss and 
$\inf_{f\in F} \sum_{t=1}^T \ell(f[t],y_t)$, where $F\subset [-1,1]^T$ can be seen as a set of static experts. The transductive setting is equivalent to this: the sequence of $x_t$'s is known before the game starts, and hence the effective function class is once again a subset of $[-1,1]^T$.


It turns out that in the static experts case, sequential Rademacher complexity boils down to the classical Rademacher complexity (see \cite{RakSriTew11nips}), and thus the relaxation in \eqref{eq:opt_distribution} can be taken to be the classical, rather than sequential, Rademacher averages. This is also the reason that an efficient implementation by sampling is possible. Furthermore, for the absolute loss, the factor of $2$ that appears in the sequential Rademacher complexity is not needed. For general convex loss, one possible relaxation is just a conditional version of the classical Rademacher averages:
\begin{align}
	\label{eq:classical_rad_relax_static}
	\Relax{T}{\F}{y_{1},\ldots,y_t} = \En_{\epsilon_{t+1:T}} \sup_{f\in F} \left[ 2L\sum_{s=t+1}^T \epsilon_s f[s] - L_t(f) \right]
\end{align}
where $L_t(f)=\sum_{s=1}^t \ell(f[s],y_s)$. This relaxation can be shown to be admissible.

First, consider the case of absolute loss $\ell(q_t,y_t)=|q_t-y_t|$ and binary-valued outcomes $y_t\in\{\pm1\}$. In this case, 
the solution in \eqref{eq:opt_distribution} yields the algorithm
\begin{align*}
	q_t &= \frac{1}{2}\En_{\epsilon_{t+1:T}}\left[ \sup_{f\in F} \left( \sum_{s=t+1}^T \epsilon_s f[s] - L_{t-1}(f) + f[t] \right) - \sup_{f\in F} \left( \sum_{s=t+1}^T \epsilon_s f[s] - L_{t-1}(f) - f[t] \right)\right]
\end{align*}
which corresponds to the well-known minimax optimal forecaster for static experts with absolute loss \cite{PLG}. Plugging in this value of $q_t$ into Eq.~\eqref{eq:sup_relax} proves admissibility, and thus the regret guarantee of this method is equal to the classical Rademacher complexity.

We now derive two variants of the $R^2$ forecaster for the more general case of $L$-Lipschitz loss and $y_t\in[-1,1]$. 

\paragraph{First Alternative : }
If \eqref{eq:classical_rad_relax_static} is used as a relaxation, the calculation of prediction $\hat{y}_t$ involves a supremum over $f \in F$ with (potentially nonlinear) loss functions of instances seen so far. In some cases this optimization might be hard and it might be  preferable if the supremum only involves terms \emph{linear} in $f$. This is the idea behind he first method we present. To this end we start by noting that by convexity
$$
\sum_{t=1}^T \ell(\hat{y}_t,y_t)  - \inf_{f \in \F} \sum_{t=1}^T \ell(f(x_t),y_t) \le \sum_{t=1}^T \partial \ell(\hat{y}_t,y_t) \cdot \hat{y}_t  - \inf_{f \in \F} \sum_{t=1}^T \partial \ell(\hat{y}_t,y_t) \cdot  f[t]
$$
Now given the above, one can consider an alternative online learning problem which, if we solve, also solves the original problem. That is, consider the online learning problem with the new loss
$$
\ell'(\hat{y},r) = r \cdot \hat{y}
$$
In this alternative game, we first pick prediction $\hat{y}_t$ (deterministically), next the adversary picks $r_t$ (corresponding to $r_t = \partial \ell(\hat{y}_t,y_t)$ for choice of $y_t$ picked by adversary). Now note that $\ell'$ is indeed convex in its first argument and is $L$ Lipschitz because $|\partial \ell(\hat{y}_t,y_t)| \le L$. 
This is a one dimensional convex learning game where we pick $\hat{y}_t$ and regret is given by
\begin{align*}
\Reg_T = \sum_{t=1}^T \partial \ell(\hat{y}_t,y_t) \cdot \hat{y}_t  - \inf_{f \in F} \sum_{t=1}^T \partial \ell(\hat{y}_t,y_t) \cdot f[t]
\end{align*}

One can consider the relaxation 
\begin{align}\label{eq:trans2rel}
\Relax{T}{\F}{  \partial \ell(\hat{y}_1,y_1) , \ldots, \partial \ell(\hat{y}_t,y_t)} = \En_{\epsilon_{t+1:T}} \sup_{f \in F} \left[ 2 L \sum_{i=t+1}^T \epsilon_i f[t] - \sum_{i=1}^t \partial \ell(\hat{y}_i,y_i) \cdot f[i] \right]
\end{align}
as a linearized form of \eqref{eq:classical_rad_relax_static}. At round $t$, the prediction of the algorithm is then 
\begin{align}\label{eq:trans2pred}
\hat{y}_t =   \Es{\epsilon}{\sup_{f \in F} \left\{ \sum_{i=t+1}^{T} \epsilon_i f[i] - \frac{1}{2L} \sum_{i=1}^{t-1} \partial\ell(\hat{y}_i,y_i) f[i] + \frac{1}{2} f[t] \right\} - \sup_{f \in F} \left\{  \sum_{i=t+1}^{T} \epsilon_i f[i] -  \frac{1}{2L} \sum_{i=1}^{t-1} \partial\ell(\hat{y}_i,y_i) f[i] -  \frac{1}{2} f[t] \right\}}
\end{align}

\begin{lemma}
\label{lem:ontrvar2}
	The relaxation in Equation \eqref{eq:trans2rel} is admissible with respect to the prediction strategy specified in Equation \eqref{eq:trans2pred}. Further the regret of the strategy is bounded as
$$
\Reg_T \le 2 L\ \Es{\epsilon}{\sup_{f \in F} \sum_{t=1}^T \epsilon_t f[t]}
$$
\end{lemma}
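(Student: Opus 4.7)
The plan is to verify that the deterministic strategy in~\eqref{eq:trans2pred} satisfies the admissibility condition for the linearized game with loss $\ell'(\hat y, r) = r \hat y$ and adversary moves $r \in [-L, L]$. Writing $\Phi(r) \deq \Rel{T}{F}{r_1,\ldots,r_{t-1},r}$ with $r_i = \partial \ell(\hat y_i, y_i)$, the inequality I need to establish is
\begin{equation*}
\sup_{r \in [-L, L]} \left\{r \hat y_t + \Phi(r)\right\} \;\le\; \Rel{T}{F}{r_1, \ldots, r_{t-1}}.
\end{equation*}
Once this admissibility is in hand, Proposition~\ref{prop:main} applied to this deterministic strategy gives $\Reg_T \le \Rel{T}{F}{} = 2L \, \En_\epsilon \sup_{f \in F} \sum_{t=1}^T \epsilon_t f[t]$, which is the claimed bound.

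First, I would observe that $\Phi$ is convex in $r$, being an expectation of a supremum of functions affine in $r$. Hence $r \mapsto r \hat y_t + \Phi(r)$ is convex on $[-L,L]$ and its maximum is attained at one of the endpoints $\pm L$. A short algebraic rearrangement shows that $2L\,\hat y_t = \Phi(-L) - \Phi(L)$ for the $\hat y_t$ defined in~\eqref{eq:trans2pred}; this is the one-dimensional analogue of the equalizing rule~\eqref{eq:opt_distribution} and it forces the values at $r = L$ and $r = -L$ to coincide, both equal to $\tfrac{1}{2}(\Phi(L) + \Phi(-L))$.

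The main obstacle is the remaining inequality
\begin{equation*}
\tfrac{1}{2}\bigl(\Phi(L) + \Phi(-L)\bigr) \;\le\; \En_{\epsilon} \sup_{f \in F}\!\left[2L \sum_{i=t}^T \epsilon_i f[i] - \sum_{i=1}^{t-1} r_i f[i]\right].
\end{equation*}
I would introduce a fresh Rademacher variable $\epsilon_t$ to rewrite the left-hand side as $\En_{\epsilon,\epsilon_t} \sup_f[2L \sum_{i=t+1}^T \epsilon_i f[i] - \sum_{i=1}^{t-1} r_i f[i] + L \epsilon_t f[t]]$, after which the task reduces to raising the coefficient of $\epsilon_t f[t]$ from $L$ to $2L$. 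This follows from the general observation that $c \mapsto \En_{\epsilon_t} \sup_f[A(f) + c \epsilon_t f[t]] = \tfrac{1}{2}\bigl(\sup_f[A(f) + c f[t]] + \sup_f[A(f) - c f[t]]\bigr)$ is convex and even in $c$, hence nondecreasing in $|c|$. This boosting step is the one place where the factor $2$ built into the relaxation is genuinely used, and it is the step I expect to require the most care in the writeup.
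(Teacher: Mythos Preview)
Your argument is correct and is somewhat more elementary than the paper's. The paper establishes admissibility by first passing from the supremum over $r_t\in\{\pm L\}$ to a supremum over distributions $p_t\in\Delta(\{\pm L\})$, invoking the minimax theorem to swap $\inf_{\hat y_t}$ and $\sup_{p_t}$, and then carrying out the standard symmetrization: replace $\inf_{\hat y_t}$ by $f[t]$, introduce an i.i.d.\ copy $r_t'$ of $r_t$, insert a Rademacher sign $\epsilon_t$ in front of $(r_t'-r_t)$, and split $\epsilon_t(r_t'-r_t)f[t]$ into two pieces to recover the coefficient $2L$. Only afterward does the paper identify the equalizing $\hat y_t$ of~\eqref{eq:trans2pred} as the minimizer.

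You instead plug in $\hat y_t$ directly, observe that convexity of $\Phi$ forces the maximum to the endpoints and that the choice of $\hat y_t$ equalizes them to $\tfrac12(\Phi(L)+\Phi(-L))$, and finish with the clean monotonicity fact that $c\mapsto \En_{\epsilon_t}\sup_f[A(f)+c\,\epsilon_t f[t]]$ is convex and even, hence nondecreasing in $|c|$. This bypasses both the minimax theorem and the i.i.d.-copy symmetrization step. The paper's route has the advantage of fitting the uniform template used throughout (Proposition~\ref{prop:rad_admissible}, Lemma~\ref{lem:fpl}, etc.), while yours is shorter and self-contained for this one-dimensional setting. Both yield the same bound, and the final appeal to Proposition~\ref{prop:main} for a deterministic strategy is the same.
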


The presented algorithm is similar in principle to $R^2$, with the main difference that $R^2$ computes the infima over a sum of absolute losses, while here we have a more manageable linearized objective. Note that while we need to evaluate the expectation over $\epsilon$'s on each round, we can estimate $\hat{y}_t$ by sampling $\epsilon$'s and using McDiarmid's inequality to argue that, with enough draws, our estimate is close to $\hat{y}_t$ with high probability. What is interesting, we can develop a randomized method that only draws one sequence of $\epsilon$'s per step, as shown next.

\paragraph{Second Alternative : }
Consider the non-linearized relaxation 
\begin{align}\label{eq:trans1rel}
\Relax{T}{\F}{y_1,\ldots,y_t} =  \Es{\epsilon}{\sup_{f \in F}  \ 2 L \sum_{i=t+1}^{T} \epsilon_i f[i] - \sum_{i=1}^{t} \ell(f[i],y_i) } 
\end{align}
already given in \eqref{eq:classical_rad_relax_static}. We now present a randomized method based on the ideas of Section~\ref{sec:randomized_algos}: at round $t$ we first draw $\epsilon_{t+1},\ldots,\epsilon_T$ and predict 
\begin{align}\label{eq:trans1pred}
\hat{y}_t(\epsilon) & =  \left(\inf_{f \in F} \left\{  - \sum_{i=t+1}^{T} \epsilon_i f[i] + \frac{1}{2 L} \sum_{i=1}^{t-1} \ell(f[i],y_i) + \frac{1}{2} f[t] \right\} - \inf_{f \in F} \left\{ - \sum_{i=t+1}^{T} \epsilon_i f[i] + \frac{1}{2 L}\sum_{i=1}^{t-1} \ell(f[i],y_i) - \frac{1}{2} f[t] \right\} \right) 
\end{align}
We show that this predictor in expectation enjoys regret bound of the transductive Rademacher complexity. More specifically we have the following lemma.

\begin{lemma}
	\label{lem:transd-assm2-satisfied}
	The relaxation specified in Equation \eqref{eq:trans1rel} is admissible w.r.t. the randomized prediction strategy specified in Equation \eqref{eq:trans1pred}. Further the expected regret of the randomized strategy is bounded as
	$$
	\E{\Reg_T} \le 2 L\ \Es{\epsilon}{\sup_{f \in F} \sum_{t=1}^T \epsilon_t f[t]}
	$$
\end{lemma}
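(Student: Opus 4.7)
My plan is to verify admissibility of the relaxation \eqref{eq:trans1rel} directly, from which the regret bound follows by invoking Proposition~\ref{prop:main} (since $\Rel{T}{\F} = \En_\epsilon \sup_f 2L \sum_{t=1}^T \epsilon_t f[t]$). Fix $\epsilon = \epsilon_{t+1:T}$ and $y_t \in [-1,1]$, abbreviate $h(f) = 2L\sum_{i=t+1}^T \epsilon_i f[i] - \sum_{i=1}^{t-1}\ell(f[i],y_i)$, and rewrite the prediction as
\begin{equation*}
\hat{y}_t(\epsilon) = \tfrac{1}{2L}\Big(\sup_{f\in F}\bigl[h(f) + L f[t]\bigr] - \sup_{f\in F}\bigl[h(f) - L f[t]\bigr]\Big).
\end{equation*}
The quantity $\phi(\sigma) \deq \sup_{f\in F}[h(f) + \sigma f[t]]$ is convex in $\sigma$ (as a supremum of affine functions), and the choice of $\hat{y}_t$ is precisely such that $\phi(L) - L\hat{y}_t = \phi(-L) + L\hat{y}_t = \tfrac{1}{2}(\phi(L)+\phi(-L))$.

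For the admissibility step, I would start from the LHS of \eqref{eq:sup_relax} written with our relaxation, namely $\ell(\hat{y}_t(\epsilon),y_t) + \sup_f[h(f) - \ell(f[t],y_t)]$, and pull the constant $\ell(\hat{y}_t,y_t)$ inside the sup. Using the $L$-Lipschitz property of $\ell$ in its first argument, I get $\ell(\hat{y}_t,y_t) - \ell(f[t],y_t) \leq L|\hat{y}_t - f[t]| = \max(L(\hat{y}_t - f[t]),\, L(f[t]-\hat{y}_t))$, whence
\begin{equation*}
\ell(\hat{y}_t,y_t) + \sup_f[h(f) - \ell(f[t],y_t)] \leq \max\bigl(\phi(-L) + L\hat{y}_t,\, \phi(L) - L\hat{y}_t\bigr) = \tfrac{1}{2}\phi(L) + \tfrac{1}{2}\phi(-L),
\end{equation*}
where the final equality is exactly the balancing property of $\hat{y}_t(\epsilon)$ noted above. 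Thus the two suprema coincide in value by construction, and the max collapses to their common value.

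To close the argument, I need $\tfrac{1}{2}\phi(L) + \tfrac{1}{2}\phi(-L) \leq \tfrac{1}{2}\phi(2L) + \tfrac{1}{2}\phi(-2L)$, because averaging the RHS of the admissibility inequality over an independent Rademacher $\epsilon_t$ yields exactly $\En_\epsilon \tfrac{1}{2}(\phi(2L)+\phi(-2L))$. This follows from convexity of $\phi$: applying $\phi(L) \leq \tfrac{1}{2}\phi(0) + \tfrac{1}{2}\phi(2L)$ and $\phi(-L) \leq \tfrac{1}{2}\phi(0) + \tfrac{1}{2}\phi(-2L)$, then $\phi(0) \leq \tfrac{1}{2}\phi(2L) + \tfrac{1}{2}\phi(-2L)$, and summing. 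Taking $\En_\epsilon$ on both sides and then $\sup_{y_t}$ on the LHS completes admissibility. The expected regret bound $\E[\Reg_T] \le 2L\,\En_\epsilon \sup_{f \in F}\sum_{t=1}^T \epsilon_t f[t]$ is then immediate from Proposition~\ref{prop:main} applied at $t=0$. The only subtle step is recognizing the balancing identity that makes the Lipschitz-max collapse exactly; once this is in hand, the remaining passage from coefficients $\pm L$ to $\pm 2L$ is a clean convexity calculation.
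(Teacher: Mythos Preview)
Your proof is correct and takes a genuinely different route from the paper's. The paper argues via convexity of $\ell$ (not Lipschitzness): it linearizes $\ell(\hat{y}_t,y_t)-\ell(f[t],y_t)\le \partial\ell(\hat{y}_t,y_t)\,(\hat{y}_t-f[t])$, represents the subgradient as the mean of a $\{\pm L\}$-valued random variable $r_t$, passes to the worst case over $r_t$, invokes the minimax theorem to swap $\inf_{\hat{y}_t}$ and $\sup_{p_t}$, and then runs a full symmetrization step (introducing an i.i.d.\ copy $r'_t$ and the sign $\epsilon_t$) to land on the relaxation at $t-1$.

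Your argument is more elementary: you use the Lipschitz bound $\ell(\hat{y}_t,y_t)-\ell(f[t],y_t)\le L|\hat{y}_t-f[t]|$, which immediately reduces the problem to the two linear branches $\phi(\pm L)$; the balancing definition of $\hat{y}_t(\epsilon)$ then collapses the $\max$ exactly, and a straightforward convexity interpolation in the scalar variable $\sigma$ pushes $\tfrac12(\phi(L)+\phi(-L))$ up to $\tfrac12(\phi(2L)+\phi(-2L))=\En_{\epsilon_t}\phi(2L\epsilon_t)$. You avoid the minimax theorem entirely. The paper's approach, by contrast, mirrors the general admissibility machinery (Proposition~\ref{prop:rad_admissible} and Lemma~\ref{lem:fpl}) and so fits the unified narrative of the paper, but for this particular lemma your direct argument is shorter. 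One minor comment: the initial condition $\Relax{T}{\F}{y_{1:T}}\ge -\inf_f L_T(f)$ holds with equality here and is worth stating for completeness.
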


In the next section, we employ both alternatives to develop novel algorithms for matrix completion.

\section{Matrix Completion}

Consider the problem of predicting unknown entries in a matrix (as in collaborative filtering). We focus here on an online formulation, where at each round $t$ the adversary picks an entry in an $m\times n$ matrix and a value $y_t$ for that entry (we shall assume without loss of generality that $n\geq m$). The learner then chooses a predicted value $\hat{y}_t$, and suffers loss $\ell(y_t,\hat{y}_t)$, which we shall assume to be $\rho$-Lipschitz. We define our regret with respect to the class $\F$ which we will take to be the set of all matrices whose trace-norm is at most $B$ (namely, we can use any such matrix to predict just by returning its relevant entry at each round). Usually, one sets $B$ to be on the order of $\sqrt{mn}$.

We consider here a transductive version, where the sequence of entry locations is known in advance, and only the entry values are unknown. We show how to develop an algorithm whose regret is bounded by the (transductive) Rademacher complexity of $\F$. We note that in Theorem 6 of \cite{ShalSahm09}, this complexity was shown to be at most order $B \sqrt{n}$ independent of $T$. Moreover, in \cite{CBShamir11efficient}, it was shown that for algorithms with such guarantees, and whose play each round does not depend on the order of future entries, under mild conditions on the loss function one can get the same regret even in the ``fully'' online case where the set of entry locations is unknown in advance. Algorithmically, all we need to do is pretend we are in a transductive game where the sequence of entries is all $m\times n$ entries, in some arbitrary order. In this section we use the two alternatives provided for transductive learning problem in the previous subsection and provide two alternatives for the matrix completion problem.

We note that both variants proposed here improve on the one provided by the $R^2$ forecaster in \cite{CBShamir11efficient}, since that algorithm
competes against the smaller class $\F'$ of matrices with bounded trace-norm \emph{and} bounded individual entries. In contrast, our algorithm provides similar regret guarantees against the larger class of matrices only whose trace-norm is bounded. Moreover, the variants are also computationally more efficient.

\paragraph{First Alternative : }
The algorithm we now present is obtained by using the first method for online tranductive learning proposed in the previous section. The relaxation in Equation \eqref{eq:trans2rel} for the specific problem at hand is given by, 
\begin{align} \label{eq:tracerel}
\Rel{\F}{y_1,\ldots,y_t} =  B\ \Es{\epsilon}{\norm{ 2 \rho \sum_{i=t+1}^{T} \epsilon_i x_i - \sum_{i=1}^{t} \partial  \ell(\hat{y}_i,y_i) x_i }_{\mrm{\sigma}}}
\end{align}
In the above $\norm{\cdot}_{\sigma}$ stands for the spectral norm and  each $x_i$ is a matrix with a $1$ at some specific position and $0$ elsewhere. That is $x_i$ at round $i$ can be seen as the entry of the matrix which we are asked to fill in at round $i$. 
The prediction at round $t$ returned by the algorithm is given by Equation \eqref{eq:trans2pred} which for this problem is given by 
\begin{align*}\label{eq:tracestra}
\hat{y}_t = B\ \Es{\epsilon}{\left( \norm{ \sum_{i=t+1}^{T} \epsilon_i x_i - \frac{1}{2 \rho} \sum_{i=1}^{t-1} \partial\ell(\hat{y}_i,y_i) x_i  + \frac{1}{2} x_t}_{\sigma} - 
\norm{\sum_{i=t+1}^{T} \epsilon_i x_i - \frac{1}{2\rho}  \sum_{i=1}^{t-1} \partial\ell(\hat{y}_i,y_i) x_i  - \frac{1}{2} x_t}_{\sigma} \right)}
\end{align*}
\noindent Notice that the algorithm only involves calculation of spectral norms on each round which can be done efficiently. Again as mentioned in previous subsection, one can evaluate the expectation over random signs by sampling $\epsilon$'s on each round.

\paragraph{Second Alternative : }
The second algorithm is obtained from the second alternative for online transductive learning with convex losses in the previous section. The relaxation given in Equation \eqref{eq:trans1rel} for the case of matrix completion problem with trace norm constraint is given by:
$$
\Relax{T}{\F}{y_1,\ldots,y_t} =  \Es{\epsilon}{\sup_{f : \norm{f}_{\Sigma} \le B}  \ 2 \rho \sum_{i=t+1}^{T} \epsilon_i \ip{f}{x_i} - \sum_{i=1}^{t} \ell(\ip{f}{x_i},y_i) } 
$$
where $\norm{\cdot}_{\Sigma}$ stands for the race norm of the $m \times n$ matrix $f$ and each $x_i$ is a matrix with a $1$ at some specific position and $0$ elsewhere. That is $x_i$ at round $i$ can be seen as the entry of the matrix which we are asked to fill in at round $i$. We use $\ip{f}{x}$ to represent the generalized inner product of the two matrices. Since we only take inner products with respect to the matrices $x_i$, each $\ip{f}{x_i}$ is simply the value of matrix $f$ at the position specified by $x_i$'s. The prediction at a matrix entry corresponding to position $x_t$ is given by first drawing random $\{\pm 1\}$ valued $\epsilon$'s and then applying Equation \eqref{eq:trans1pred} to the problem at hand, yielding 
{\small \begin{align*}
\hat{y}_t(\epsilon) & =  \inf_{\norm{f}_{\Sigma} \le B} \left\{  - \sum_{i=t+1}^{T} \epsilon_i \ip{f}{x_i} + \frac{1}{2 \rho} \sum_{i=1}^{t-1} \ell(\ip{f}{x_i},y_i) + \frac{1}{2} \ip{f}{x_t} \right\} - \inf_{\norm{f}_{\Sigma} \le B} \left\{ - \sum_{i=t+1}^{T} \epsilon_i \ip{f}{x_i} + \frac{1}{2 \rho}\sum_{i=1}^{t-1} \ell(\ip{f}{x_i},y_i) - \frac{1}{2} \ip{f}{x_t} \right\} 
\end{align*}}
Notice that the above involves solving two trace norm constrained convex optimization problems per round.
As a simple corollary of Lemma \ref{lem:transd-assm2-satisfied} we get the following bound on expected regret of the algorithm. 

\begin{corollary} \label{cor:trace1}
For the randomized prediction strategy specified above, the expected regret is bounded as
\begin{align*}
\E{\Reg_{T}} \le 2 B\ \rho\   \Es{\epsilon}{\norm{ \sum_{t=1}^T \epsilon_t x_t}_{\sigma}}  \le O\left( B\ \rho\ (\sqrt{m} + \sqrt{n}) \right)
\end{align*}
\end{corollary}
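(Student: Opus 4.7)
The plan is to derive the first inequality by directly applying Lemma~\ref{lem:transd-assm2-satisfied} to the matrix-completion instantiation, and then to bound the resulting Rademacher average by $O(\sqrt{m}+\sqrt{n})$ using a standard non-commutative concentration/expectation estimate for random $\pm 1$ sums of coordinate matrices.

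First I would verify the hypotheses of Lemma~\ref{lem:transd-assm2-satisfied}. Here the class is $F=\{f \in \reals^{m\times n} : \norm{f}_\Sigma \le B\}$, the ``experts'' are indexed by time via $f[t]\deq \ip{f}{x_t}$, and the per-round loss $\ell(\cdot,y_t)$ is $\rho$-Lipschitz by assumption. Because each $x_t$ has a single unit entry, $|f[t]|=|\ip{f}{x_t}|\le \norm{f}_\Sigma \le B$, so the predictions lie in a bounded interval and the composed loss class is $\rho$-Lipschitz as required. Lemma~\ref{lem:transd-assm2-satisfied} then yields
\begin{align*}
\E{\Reg_T} \;\le\; 2\rho \, \Es{\epsilon}{\sup_{\norm{f}_\Sigma \le B} \sum_{t=1}^T \epsilon_t \ip{f}{x_t}}.
\end{align*}

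Next I would compute the inner supremum by trace/spectral duality: since $\|\cdot\|_\sigma$ is the dual of $\|\cdot\|_\Sigma$,
\begin{align*}
\sup_{\norm{f}_\Sigma \le B} \sum_{t=1}^T \epsilon_t \ip{f}{x_t}
\;=\; \sup_{\norm{f}_\Sigma \le B} \Bigl\langle f,\, \sum_{t=1}^T \epsilon_t x_t \Bigr\rangle
\;=\; B \,\Bigl\lVert \sum_{t=1}^T \epsilon_t x_t \Bigr\rVert_\sigma.
\end{align*}
Plugging this in gives exactly the first inequality of the corollary.

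Finally, for the second inequality I would invoke the bound on the expected spectral norm of a Rademacher sum of coordinate matrices. Since each $x_t$ is a matrix with a single nonzero entry, the matrix $M=\sum_t \epsilon_t x_t$ is (a zero-extension of) an $m\times n$ matrix whose entries are independent bounded random variables (indeed, sums of $\pm 1$ Rademachers for each entry). Standard non-commutative Khintchine/Seginer-type inequalities (this is also exactly the bound used in Theorem~6 of \cite{ShalSahm09} and \cite{CBShamir11efficient} for transductive matrix completion) give $\En_\epsilon \norm{M}_\sigma = O(\sqrt{m}+\sqrt{n})$, independent of $T$; combining with the prefactor $2B\rho$ yields the stated $O(B\rho(\sqrt{m}+\sqrt{n}))$ bound. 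The only nontrivial step is this last one, which however is a black-box application of a known random-matrix estimate rather than something we need to reprove here.
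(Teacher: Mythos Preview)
Your proposal is correct and matches the paper's own argument: the paper states the corollary as an immediate consequence of Lemma~\ref{lem:transd-assm2-satisfied} (the trace/spectral duality step you spell out is exactly how $\sup_{\|f\|_\Sigma\le B}\sum_t\epsilon_t\ip{f}{x_t}$ becomes $B\|\sum_t\epsilon_t x_t\|_\sigma$), and for the second inequality it invokes Theorem~6 of \cite{ShalSahm09} just as you do. Your additional verification that $|f[t]|\le B$ so that the bounded-prediction hypothesis is met is a nice touch the paper leaves implicit.
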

The last inequality in the above corollary is using Theorem 6 in \cite{ShalSahm09}.

\begin{corollary} \label{cor:trace2}
For the predictions $\hat{y}_t$ specified above, the regret is bounded as
\begin{align*}
\Reg_{T} \le O\left( B\ \rho\ (\sqrt{m} + \sqrt{n}) \right)
\end{align*}
\end{corollary}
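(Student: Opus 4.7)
The plan is to recognize Corollary \ref{cor:trace2} as the deterministic specialization of Lemma \ref{lem:ontrvar2} (the first alternative for transductive online learning) to the trace-norm matrix completion setting, and then invoke the known bound on the Rademacher complexity of the spectral norm. Since the predictions $\hat{y}_t$ defined via Equation \eqref{eq:tracestra} are deterministic (the expectation $\En_\epsilon[\cdot]$ can in principle be computed exactly), no algorithmic randomness appears in the regret, and the resulting bound is a pointwise guarantee rather than an expected one.

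First I would check that the relaxation in Equation \eqref{eq:tracerel} is exactly the specialization of the linearized transductive relaxation \eqref{eq:trans2rel} to the static-expert class $F = \{ f \in \reals^{m\times n} : \norm{f}_\Sigma \le B\}$ restricted to the coordinates $x_1,\dots,x_T$. The key identity is the trace-norm/spectral-norm duality
\[
\sup_{\norm{f}_\Sigma \le B} \ip{f}{M} = B\,\norm{M}_\sigma ,
\]
applied to $M = 2\rho \sum_{i=t+1}^T \epsilon_i x_i - \sum_{i=1}^t \partial \ell(\hat{y}_i,y_i) x_i$. The same identity shows that the prediction rule \eqref{eq:tracestra} is precisely the instantiation of \eqref{eq:trans2pred} in this problem. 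Therefore admissibility of the relaxation and of the prediction strategy is inherited directly from Lemma \ref{lem:ontrvar2}; no separate verification is needed.

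Next I would invoke Lemma \ref{lem:ontrvar2} to conclude that on the \emph{linearized} game with loss $\ell'(\hat{y},r) = r\cdot \hat{y}$ and Lipschitz constant $L = \rho$, the regret of the deterministic strategy is bounded by $2\rho\,\En_\epsilon \sup_{f\in F}\sum_{t=1}^T \epsilon_t f[t]$. By the duality identity above, this supremum equals $B\,\norm{\sum_{t=1}^T \epsilon_t x_t}_\sigma$. Convexity of $\ell(\cdot,y_t)$ gives the standard subgradient inequality $\ell(\hat{y}_t,y_t) - \ell(f[t],y_t) \le \partial\ell(\hat{y}_t,y_t)(\hat{y}_t - f[t])$, so the regret in the original matrix-completion game is upper bounded by the regret in the linearized game. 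Since the strategy is deterministic, the upper bound holds without any outer expectation, yielding
\[
\Reg_T \;\le\; 2B\rho\,\En_\epsilon \norm{\sum_{t=1}^T \epsilon_t x_t}_\sigma .
\]

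Finally I would apply Theorem 6 of \cite{ShalSahm09}, which bounds the expected spectral norm of a Rademacher sum of standard basis matrices by $O(\sqrt{m} + \sqrt{n})$, independent of $T$. Substituting this bound completes the proof. The only mildly subtle step is the linearization/subgradient argument, which needs the predictions to lie in the effective interval where $\partial\ell$ is well-defined and bounded by $\rho$; this is immediate from the Lipschitz assumption on $\ell$. Everything else is a direct reduction to earlier results, so I do not anticipate a serious obstacle.
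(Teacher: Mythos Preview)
Your proposal is correct and matches the paper's intended argument: Corollary~\ref{cor:trace2} is stated without proof precisely because it is an immediate instantiation of Lemma~\ref{lem:ontrvar2} to the trace-norm class via the duality $\sup_{\norm{f}_\Sigma\le B}\ip{f}{M}=B\norm{M}_\sigma$, followed by the spectral-norm Rademacher bound from \cite{ShalSahm09}. Your observation that the predictions are deterministic (so the bound holds pointwise rather than in expectation) is exactly what distinguishes this corollary from Corollary~\ref{cor:trace1}, and your handling of the linearization step is already subsumed by Lemma~\ref{lem:ontrvar2} itself.
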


\section{More Examples}
\subsection{Constrained Adversaries}

We now show that algorithms can be also developed for situations when the adversary is constrained in the choices per step. Such constrained problems have been treated in a general non-algorithmic way in \cite{RakSriTew11nips}, and we picked the case of variation-constrained adversary for illustration. It is shown in \cite{RakSriTew11nips} that the value of the game where the adversary is constrained to keep the next move $x_t$ within $\sigma_t$ from the average of the past moves $\frac{1}{t-1}\sum_{s=1}^{t-1}x_s$ is upper bounded as
\begin{align}
\Val_T & \le 2 \sup_{(\x,\x') \in \mc{T}} \Es{\epsilon}{\sup_{f \in \F} \sum_{t=1}^T \epsilon_t\left(\inner{f,\x_t(\epsilon)} - \frac{1}{t-1} \sum_{\tau=1}^{t-1} \inner{f,\chi_\tau(\epsilon_\tau)} \right)}
\end{align}
where the supremum is over $\x,\x'$ trees satisfying the above mentioned constraint per step, and the selector $\chi_t(\epsilon_t)$ is defined as $\x_t(\epsilon)$ if $\epsilon_t=-1$ and $\x'_t(\epsilon)$ otherwise. In our algorithmic framework, this leads to the following problem that needs to be solved at each step:
\begin{align*}
	\inf_{f_t}\sup_{x_t} \left\{\inner{f_t,x_t}+2 \sup_{(\x,\x') \in \mc{T}} \Es{\epsilon}{\sup_{f \in \F}\inner{f, \sum_{s=t+1}^T \epsilon_s\left(\x_s(\epsilon) - \frac{1}{s-t} \sum_{\tau=t+1}^{s-1} \chi_\tau(\epsilon_\tau) \right) - \sum_{r=1}^t x_r}} \right\}
\end{align*}
where the supremum is taken over $x_t$ such that the constraint $C(x_1,\ldots,x_t)$ is satisfied and $\mc{T}$ is the set of trees that satisfy the constraints as continuation of the prefix $x_1,\ldots,x_t$. While this expression gives rise to an algorithm, we are aiming for a more computationally feasible method. In fact, passing to an upper bound on the sequential Rademacher complexity yields the following result.
\begin{lemma}
	\label{lem:constrained}
	The following relaxation is admissible and upper bounds the constrained sequential complexity
	\begin{align*}
		\Relax{T}{\F}{ x_{1},\ldots,x_t}=  \frac{2\sqrt{2}R}{\sqrt{\lambda}} \sqrt{ \left\|\sum_{r=1}^t x_r \right\|^2 +  C\sum_{s=t+1}^T\sigma_s^2 }
	\end{align*}
	Furthermore, the an admissible algorithm for this relaxation is Mirror Descent with a step size given at time $t\geq 2$ by 
	$$\frac{\left(1+\frac{1}{t-1}\right)^2}{2\sqrt{\|\tilde{x}_{t-1}\|^2 + C\sum_{s=t}^T\sigma_s^2}}$$
\end{lemma}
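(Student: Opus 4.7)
The plan is to establish the two parts of the admissibility condition \eqref{eq:relax_admissibility} for the stated relaxation, then to read off the minimizing $f_t$ in the one-step supremum and recognize it as Mirror Descent with the announced step size. The claim that the relaxation upper bounds the constrained sequential Rademacher complexity is then an immediate consequence of Proposition~\ref{prop:main}: any admissible relaxation dominates the value $\Val_T(\F)$, and the constrained complexity displayed just above the lemma is (up to the factor $2$) exactly an upper bound on that value, so the inequality propagates once admissibility is verified.

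Writing $\tilde{x}_t=\sum_{r=1}^t x_r$ and $A_t \deq C\sum_{s=t+1}^T \sigma_s^2$, the final-round condition $\Relax{T}{\F}{x_1,\ldots,x_T}\ge -\inf_{f\in\F}\langle f,\tilde{x}_T\rangle$ is immediate after calibrating $R$ and $\lambda$ so that $2\sqrt{2}R/\sqrt{\lambda}$ dominates the dual-norm radius of $\F$, since $A_T=0$. The substantive work is the one-step condition. Using $(2,C)$-smoothness of $\|\cdot\|^2$ I would write $\|\tilde{x}_{t-1}+x_t\|^2 \le \|\tilde{x}_{t-1}\|^2+\langle\nabla\|\tilde{x}_{t-1}\|^2,x_t\rangle+C\|x_t\|^2$, and then apply the concavity bound $\sqrt{a+u}\le\sqrt{a}+u/(2\sqrt{a})$ with $a=\|\tilde{x}_{t-1}\|^2+A_{t-1}$, picking up a denominator that matches the one in the announced step size.

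With those two inequalities in hand, the inner supremum collapses to
$$\tfrac{2\sqrt{2}R}{\sqrt{\lambda}}\sqrt{a}+\sup_{x_t}\Big\{\langle f_t,x_t\rangle+\tfrac{2\sqrt{2}R}{\sqrt{\lambda}}\cdot\tfrac{\langle\nabla\|\tilde{x}_{t-1}\|^2,x_t\rangle+C\|x_t\|^2-C\sigma_t^2}{2\sqrt{a}}\Big\}.$$
Infimizing over $f_t\in\F$ cancels the linear piece $\langle\nabla\|\tilde{x}_{t-1}\|^2,x_t\rangle/(2\sqrt{a})$ through the choice $f_t\propto -\nabla\|\tilde{x}_{t-1}\|^2$ rescaled by the step size hidden in the square-root denominator, which is exactly the Mirror Descent update. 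The remaining $C(\|x_t\|^2-\sigma_t^2)/(2\sqrt{a})$ residual is handled by the variation constraint $\|x_t-\tilde{x}_{t-1}/(t-1)\|\le \sigma_t$ together with the triangle inequality, which yields $\|x_t\|\le \sigma_t+\|\tilde{x}_{t-1}\|/(t-1)$ and hence $\|x_t\|^2\le (1+1/(t-1))^2\sigma_t^2$ after rerouting the $\|\tilde{x}_{t-1}\|^2/(t-1)^2$ piece into $a$. This rerouting is what produces the $(1+1/(t-1))^2$ factor in the step size formula.

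The main obstacle will be carrying out this rerouting cleanly: we have to absorb the cross and quadratic terms from $(\sigma_t+\|\tilde{x}_{t-1}\|/(t-1))^2$ into the $A_{t-1}-A_t=C\sigma_t^2$ slack while keeping the constant $2\sqrt{2}$ intact. I expect the clean way to do this is an AM-GM split with weight $1+1/(t-1)$, which both produces the exact form of the step size and makes the residual nonpositive, certifying admissibility and thereby yielding, via Proposition~\ref{prop:main}, the Mirror Descent algorithm with the stated step size together with the claimed upper bound on the constrained sequential Rademacher complexity.
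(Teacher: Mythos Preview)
Your argument has a real gap at the ``rerouting'' step. After expanding $\|\tilde{x}_{t-1}+x_t\|^2$ by smoothness at $\tilde{x}_{t-1}$, the quadratic remainder is $C\|x_t\|^2$, and the constraint $\|x_t-\mu_{t-1}\|\le\sigma_t$ (with $\mu_{t-1}=\tilde{x}_{t-1}/(t-1)$) only gives $\|x_t\|\le \sigma_t+\|\tilde{x}_{t-1}\|/(t-1)$. Your asserted inequality $\|x_t\|^2\le (1+1/(t-1))^2\sigma_t^2$ is simply false: take $\sigma_t$ tiny and $\|\tilde{x}_{t-1}\|$ large. Once the linear term is cancelled by your choice of $f_t$, the residual $\tfrac{C}{2\sqrt{a}}(\|x_t\|^2-\sigma_t^2)$ can be strictly positive for admissible $x_t$, so admissibility fails. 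An AM--GM ``rerouting'' of the cross term $2\sigma_t\|\tilde{x}_{t-1}\|/(t-1)$ and the square $\|\tilde{x}_{t-1}\|^2/(t-1)^2$ into $a$ does not help: $a=\|\tilde{x}_{t-1}\|^2+A_{t-1}$ is already fixed by the relaxation at step $t-1$, and you cannot retroactively enlarge it. Nor does a different step size rescue the linearized bound, since leaving part of the linear term uncancelled produces a contribution of order $\|\tilde{x}_{t-1}\|^2/(t-1)$ from $x_t=\mu_{t-1}$ that is not absorbed by the available slack $C\sigma_t^2$.

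The paper's device is to apply smoothness at the \emph{shifted} point $\tfrac{t}{t-1}\tilde{x}_{t-1}$, using the decomposition $\tilde{x}_t=\tfrac{t}{t-1}\tilde{x}_{t-1}+(x_t-\mu_{t-1})$. Then the increment is $x_t-\mu_{t-1}$, whose norm is bounded by $\sigma_t$ \emph{directly}, so the quadratic term is exactly $C\sigma_t^2$ and matches the slack $A_{t-1}-A_t$. After this recentering, the argument reduces to the exact inf--sup computation of Proposition~\ref{prop:mirror_relax} (not a concavity linearization): writing $x_t=\beta\tilde{x}_{t-1}+\gamma y$ with $y$ in the kernel of $\nabla\|\tilde{x}_{t-1}\|^2$ and solving for the $\alpha$ that forces the effective $\beta'$ to vanish is precisely what produces the factor $(1+1/(t-1))^2$ in the step size.

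A smaller point: you cannot deduce ``the relaxation upper bounds the constrained sequential complexity'' from admissibility via Proposition~\ref{prop:main}. That proposition gives $\Rel{T}{\F}\ge \Val_T(\F)$, but the constrained sequential Rademacher complexity is itself only an upper bound on $\Val_T$, so the desired inequality $\Rel{}\!\ge$ (constrained Rademacher) does not follow. The paper proves that inequality separately by a direct chain of upper bounds on the constrained Rademacher expression.
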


\subsection{Universal Mirror Descent}
In \cite{SreSriTew11} it is shown that for the problem of general online convex optimization, the Mirror Descent algorithm is universal and near optimal (up to poly-log factors). Specifically, it is shown that there always exists an appropriate function $\Psi$ such that the Mirror Descent algorithm using this function, along with an appropriate step size, gives the near optimal rate. Moreover, it is shown in \cite{SreSriTew11} that one can use function $\Psi$ whose convex conjugate is given by
\begin{align}
	\Psi^* (x) = \sup_{\x}  \Es{\epsilon}{\left\|x + \sum_{i=1}^{T-t} \epsilon_i \x_i(\epsilon) \right\|^p - C \sum_{i=1}^{T-t} \Es{\epsilon}{\left\|\x_i(\epsilon)\right\|^p}},
\end{align}
as the ``universal regularizer" for the Mirror Descent algorithm. We now show that this function arises rather naturally from the sequential Rademacher relaxation and, moreover, the Mirror Descent  algorithm itself arises from this relaxation.

Let us denote the convex cost functions chosen by the adversary as $\ell_t$, and let $x_t$ be the subgradients $x_t = \nabla \ell_t(f_t)$ of the convex functions.  
\begin{lemma}
	\label{lem:get_mirror}
	The relaxation
\begin{align*}
	\Relax{T}{\F}{x_{1},\ldots,x_t} = \left( \Psi^*\left(\sum_{i=1}^{t-1} x_i \right)  +\ip{\nabla \Psi^*\left(\sum_{i=1}^{t-1} x_i\right)}{x_t} + C (T-t+1) \right)^{1/p}
\end{align*}
is an upper bound on the conditional sequential Rademacher complexity. Further, whenever for some $p' > p$ we have that $\Val_T(\F) \le (C T)^{1/p'}$, then the relaxation is admissible and leads to a form of Mirror Descent algorithm with regret bounded as 
$$
\Reg_T \le (C T)^{1/p}
$$
\end{lemma}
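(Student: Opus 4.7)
The plan is to mirror the structure of Proposition~\ref{prop:mirror_relax} (the $(2,C)$-smooth Mirror Descent case) with $\Psi^{*}$ replacing $\|\cdot\|^{2}$ and $p$ replacing $2$. The argument naturally splits into (i) showing that the stated $\mbf{Rel}$ upper-bounds the conditional sequential Rademacher complexity, and (ii) verifying admissibility and reading off the Mirror Descent update. Throughout, I would use the standard subgradient linearization $\ell_{t}(f_{t})-\ell_{t}(f)\le \ip{x_{t}}{f_{t}-f}$ with $x_{t}=\nabla\ell_{t}(f_{t})$ so that the game is effectively linear in $f$ against $\X$ the dual unit ball.

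For the upper-bound part, I would start from the identity
\[
\Rad_{T}(\F\,|\,x_{1:t}) \;=\; \sup_{\x}\En_{\epsilon}\Big\|2\sum_{s=t+1}^{T}\epsilon_{s}\x_{s-t}(\epsilon)\;-\;\sum_{s=1}^{t}x_{s}\Big\|,
\]
which uses $\sup_{f\in\F}\ip{f}{z}=\|z\|$. Jensen's inequality ($z\mapsto z^{p}$, $p\ge 1$) and symmetry of the Rademacher signs (which lets us rescale the factor of $2$ into the tree) identify the right-hand side with $\bigl(\Psi^{*}(-\sum_{s=1}^{t}x_{s})+C(T-t)\bigr)^{1/p}$ by the very definition of $\Psi^{*}$. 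To reach the linearized form in the statement, I would then invoke a smoothness-type inequality for $\Psi^{*}$, namely $\Psi^{*}(\tilde x_{t})\le \Psi^{*}(\tilde x_{t-1})+\ip{\nabla\Psi^{*}(\tilde x_{t-1})}{x_{t}}+C$ whenever $\|x_{t}\|\le 1$. This is the analogue of $(2,C)$-smoothness used for the Euclidean Mirror Descent relaxation, and it is the subtraction of $C\sum\En\|\x_{i}\|^{p}$ built into the definition of $\Psi^{*}$ that makes this inequality hold: appending one more step to an optimizing tree cannot increase the supremum by more than $C$.

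For the admissibility part, I need to exhibit an $f_{t}$ satisfying
\[
\sup_{x_{t}\in\X}\Bigl\{\ip{f_{t}}{x_{t}}+\bigl(\Psi^{*}(\tilde x_{t-1})+\ip{\nabla\Psi^{*}(\tilde x_{t-1})}{x_{t}}+C(T-t+1)\bigr)^{1/p}\Bigr\}\;\le\;\Relax{T}{\F}{x_{1:t-1}}.
\]
Concavity of $z\mapsto z^{1/p}$ linearizes the outer root and reduces the $\sup_{x_{t}}$ to a one-dimensional minimax problem whose saddle point picks out $f_{t}=-\eta_{t}\nabla\Psi^{*}(\tilde x_{t-1})$ for a step size $\eta_{t}$ determined by differentiating; this is precisely Mirror Descent with the ``universal regularizer'' whose conjugate is $\Psi^{*}$. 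Chaining admissibility via Proposition~\ref{prop:main} then gives $\Reg_{T}\le \Rel{T}{\F}=(\Psi^{*}(0)+CT)^{1/p}$, and the hypothesis $\Val_{T}(\F)\le (CT)^{1/p'}$ for some $p'>p$ is what controls $\Psi^{*}(0)^{1/p}$ so that the bound collapses to $(CT)^{1/p}$ up to a constant depending on $p,p'$.

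The main obstacle I anticipate is establishing the smoothness-type inequality $\Psi^{*}(x+y)\le \Psi^{*}(x)+\ip{\nabla\Psi^{*}(x)}{y}+C\|y\|^{p}$ in sufficient generality, since this does not follow from convexity alone. I expect the argument to proceed by taking a near-optimal tree $\x$ for $\Psi^{*}(x+y)$, prepending (or perturbing) a single node encoding $y$, and using the $-C\sum\En\|\x_{i}\|^{p}$ penalty in the definition of $\Psi^{*}$ to absorb the single-step cost; a companion step uses the equality case of the gradient inequality to identify the linear term $\ip{\nabla\Psi^{*}(x)}{y}$. A secondary but more mechanical hurdle is the one-dimensional minimax computation in the admissibility step, which parallels but slightly generalizes what is done for $p=2$.
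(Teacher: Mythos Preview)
Your proposal is correct and follows essentially the same route as the paper: Jensen to pass to the $p$-th moment, recognition of $\Psi^{*}$ via its defining supremum (with the $-C\sum\En\|\x_{i}\|^{p}$ penalty absorbed into $C(T-t)$), a smoothness inequality for $\Psi^{*}$ to reach the linearized form, and then the one-dimensional minimax computation of Proposition~\ref{prop:mirror_relax} for admissibility and the Mirror Descent update. The only substantive difference is that the paper does not attempt to prove the smoothness inequality $\Psi^{*}(\tilde x_{t})\le \Psi^{*}(\tilde x_{t-1})+\ip{\nabla\Psi^{*}(\tilde x_{t-1})}{x_{t}}+C$ from scratch via the tree-appending argument you sketch; it simply imports the $p$-smoothness of $\Psi^{*}$ from \cite{SreSriTew11}, so your ``main obstacle'' is handled by citation rather than a direct construction.
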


It is remarkable that the universal regularizer and the Mirror Descent algorithm arise naturally, in a few steps of algebra, as upper bounds on the sequential Rademacher complexity.

\newpage
\appendix

\section{PROOFS}

\begin{proof}[\textbf{Proof of Proposition~\ref{prop:main}}]
	By definition, 
	$$\sum_{t=1}^T \En_{f_t\sim q_t}\ell(f_t,x_t) - \inf_{f\in\F} \sum_{t=1}^T \ell(f,x_t) \leq \sum_{t=1}^T \En_{f_t\sim q_t}\ell(f_t,x_t) + \Relax{T}{\F}{x_{1},\ldots,x_{T}} \ .$$
	Peeling off the $T$-th expected loss, we have
	\begin{align*}
		\sum_{t=1}^T \En_{f_t\sim q_t}\ell(f_t,x_t) + \Relax{T}{\F}{x_{1},\ldots,x_{T}} &\leq \sum_{t=1}^{T-1} \En_{f_t\sim q_t}\ell(f_t,x_t) + \left\{\En_{f_t\sim q_t}\ell(f_t,x_t) + \Relax{T}{\F}{x_{1},\ldots,x_{T}} \right\} \\
		&\leq \sum_{t=1}^{T-1} \En_{f_t\sim q_t}\ell(f_t,x_t) +  \Relax{T}{\F}{x_{1},\ldots,x_{T-1}} 
		\end{align*}
		where we used the fact that $q_T$ is an admissible algorithm for this relaxation, and thus the last inequality holds for any choice $x_T$ of the opponent. Repeating the process, we obtain
		$$\sum_{t=1}^T \En_{f_t\sim q_t}\ell(f_t,x_t) - \inf_{f\in\F} \sum_{t=1}^T \ell(f,x_t) \leq \Rel{T}{\F} \ .$$
		We remark that the left-hand side of this inequality is random, while the right-hand side is not. Since the inequality holds for any realization of the process, it also holds in expectation. The inequality $$\Val_T(\F) \le \Rel{T}{\F}$$ holds by unwinding the value recursively and using admissibility of the relaxation. The high-probability bound is an immediate consequences of \eqref{eq:sum_cond_exp_bdd_by_relax} and the Hoeffding-Azuma inequality for bounded martingales. The last statement is immediate.
\end{proof}

\begin{proof}[\textbf{Proof of Proposition~\ref{prop:rad_admissible}}]
	Denote $L_t(f) = \sum_{s=1}^t \ell(f,x_s)$. The first step of the proof is an application of the minimax theorem (we assume the necessary conditions hold):
	\begin{align*}
		&\inf_{q_t \in \Delta(\F)} \sup_{x_t \in \X} \left\{ \Eunderone{f_t \sim q_t}\left[\ell(f_t,x_t)\right] + \sup_{\x} \En_{\epsilon_{t+1:T}} \sup_{f\in\F} \left[ 2\sum_{s=t+1}^T \epsilon_s\ell(f,\x_{s-t}(\epsilon_{t+1:s-1})) - L_t(f) \right]\right\} \\
		&=\sup_{p_t \in \Delta(\X)} \inf_{f_t \in \F} \left\{ \Eunderone{x_t \sim p_t}\left[\ell(f_t,x_t)\right] + \Eunderone{x_t \sim p_t}\sup_{\x} \En_{\epsilon_{t+1:T}} \sup_{f\in\F} \left[ 2\sum_{s=t+1}^T \epsilon_s\ell(f,\x_{s-t}(\epsilon_{t+1:s-1})) - L_t(f) \right]\right\} 
	\end{align*}
	For any $p_t\in \Delta(\X)$, the infimum over $f_t$ of the above expression is equal to
	\begin{align*} 
		&\Eunderone{x_t \sim p_t}\sup_{\x} \En_{\epsilon_{t+1:T}} \sup_{f\in\F} \left[ 2\sum_{s=t+1}^T \epsilon_s\ell(f,\x_{s-t}(\epsilon_{t+1:s-1})) - L_{t-1}(f) + \inf_{f_t \in \F}\Eunderone{x_t \sim p_t}\left[\ell(f_t,x_t)\right] - \ell(f,x_t) \right] \\
		&\leq  \Eunderone{x_t \sim p_t}\sup_{\x} \En_{\epsilon_{t+1:T}} \sup_{f\in\F} \left[ 2\sum_{s=t+1}^T \epsilon_s\ell(f,\x_{s-t}(\epsilon_{t+1:s-1})) - L_{t-1}(f) + \Eunderone{x_t \sim p_t}\left[\ell(f,x_t)\right] - \ell(f,x_t) \right] \\
		&\leq  \Eunderone{x_t,x'_t \sim p_t}\sup_{\x} \En_{\epsilon_{t+1:T}} \sup_{f\in\F} \left[ 2\sum_{s=t+1}^T \epsilon_s\ell(f,\x_{s-t}(\epsilon_{t+1:s-1})) - L_{t-1}(f) + \ell(f,x'_t) - \ell(f,x_t) \right] 
	\end{align*}
	We now argue that the independent $x_t$ and $x'_t$ have the same distribution $p_t$, and thus we can introduce a random sign $\epsilon_t$. The above expression then equals to
	\begin{align*}
		&\Eunderone{x_t,x'_t \sim p_t}\Eunderone{\epsilon_t} \sup_{\x} \En_{\epsilon_{t+1:T}} \sup_{f\in\F} \left[ 2\sum_{s=t+1}^T \epsilon_s\ell(f,\x_{s-t}(\epsilon_{t+1:s-1})) - L_{t-1}(f) + \epsilon_t(\ell(f,x'_t) - \ell(f,x_t)) \right] \\
		&\leq \sup_{x_t,x'_t \in\X}\Eunderone{\epsilon_t} \sup_{\x} \En_{\epsilon_{t+1:T}} \sup_{f\in\F} \left[ 2\sum_{s=t+1}^T \epsilon_s\ell(f,\x_{s-t}(\epsilon_{t+1:s-1})) - L_{t-1}(f) + \epsilon_t(\ell(f,x'_t) - \ell(f,x_t)) \right] 
		\end{align*}
		where we upper bounded the expectation by the supremum. Splitting the resulting expression into two parts, we arrive at the upper bound of 
		\begin{align*}
		&2\sup_{x_t\in\X}\Eunderone{\epsilon_t} \sup_{\x} \En_{\epsilon_{t+1:T}} \sup_{f\in\F} \left[ \sum_{s=t+1}^T \epsilon_s\ell(f,\x_{s-t}(\epsilon_{t+1:s-1})) - \frac{1}{2}L_{t-1}(f) + \epsilon_t \ell(f,x_t) \right] 
		= \Rad_T (\F | x_1,\ldots,x_{t-1}) \ .
	\end{align*}
	The last equality is easy to verify, as we are effectively adding a root $x_t$ to the two subtrees, for $\epsilon_t=+1$ and $\epsilon_t=-1$, respectively. 
	
	One can see that the proof of admissibility corresponds to one step minimax swap and symmetrization in the proof of \cite{RakSriTew10nips}. In contrast, in the latter paper, all $T$ minimax swaps are performed at once, followed by $T$ symmetrization steps.
\end{proof}

\begin{proof}[\textbf{Proof of Proposition~\ref{prop:exp_weights_relax}}]
	Let us first prove that the relaxation is admissible with the Exponential Weights algorithm as an admissible algorithm. Let $L_t(f) = \sum_{i=1}^t \ell(f,x_i)$. Let $\lambda^*$ be the optimal value in the definition of $\Relax{T}{\F}{x_{1},\ldots,x_{t-1}}$. Then
\begin{align*}
	&\inf_{q_t \in \Delta(\F)} \sup_{x_t \in \X} \left\{ \underset{f \sim q_t}{\En}\left[\ell(f,x_t)\right] + \Relax{T}{\F}{x_{1},\ldots,x_t}\right\} \\
	&\leq\inf_{q_t \in \Delta(\F)} \sup_{x_t \in \X} \left\{ \underset{f \sim q_t}{\En}\left[\ell(f,x_t)\right] + \frac{1}{\lambda^*}\log\left( \sum_{f \in \F}  \exp\left(   - \lambda^* L_t(f)   \right) \right)  +  2\lambda^* (T-t) \right\}
\end{align*}
Let us upper bound the infimum by a particular choice of $q$ which is the exponential weights distribution 
$$q_t(f) = \exp(-\lambda^* L_{t-1}(f))/Z_{t-1}$$ 
where $Z_{t-1} = \sum_{f \in \F}  \exp\left(   - \lambda^* L_{t-1}(f)   \right) $. By \cite[Lemma A.1]{PLG}, 
\begin{align*}
	\frac{1}{\lambda^*}\log\left( \sum_{f \in \F}  \exp\left(   - \lambda^* L_t(f)   \right) \right) &= \frac{1}{\lambda^*}\log\left( \En_{f\sim q_t} \exp\left(   - \lambda^* \ell(f,x_t)   \right) \right) + \frac{1}{\lambda^*}\log Z_{t-1} \\
	&\leq -\En_{f\sim q_t} \ell(f,x_t) + \frac{\lambda^*}{2} + \frac{1}{\lambda^*}\log Z_{t-1}
\end{align*}
Hence,
\begin{align*}
	\inf_{q_t \in \Delta(\F)} \sup_{x_t \in \X} \left\{ \underset{f \sim q_t}{\En}\left[\ell(f,x_t)\right] + \Relax{T}{\F}{x_{1},\ldots,x_t}\right\} 
	&\leq \frac{1}{\lambda^*}\log\left( \sum_{f \in \F}  \exp\left(   - \lambda^* L_{t-1}(f)   \right) \right)  +  2\lambda^* (T-t+1) \\
	&~~~ = \Relax{T}{\F}{x_{1},\ldots,x_{t-1}}
\end{align*}
by the optimality of $\lambda^*$. The bound can be improved by a factor of $2$ for some loss functions, since it will disappear from the definition of sequential Rademacher complexity.

We conclude that the Exponential Weights algorithm is an admissible strategy for the relaxation \eqref{rel:expweights}.

\paragraph{Arriving at the relaxation}
We now show that the Exponential Weights relaxation arises naturally as an upper bound on sequential Rademacher complexity of a finite class. For any $\lambda>0$, 
	\begin{align*}
\Es{\epsilon}{ \sup_{f \in \F} \left\{ 2 \sum_{i=1}^{T-t} \epsilon_i \ell(f,\x_i(\epsilon))  - L_t(f)\right\}   } 
 &\le \frac{1}{\lambda}\log\left(\Es{\epsilon}{ \sup_{f \in \F}  \exp\left(    2\lambda \sum_{i=1}^{T-t} \epsilon_i \ell(f,\x_i(\epsilon))  - \lambda L_t(f)    \right)  } \right)\\
& \le \frac{1}{\lambda}\log\left(\Es{\epsilon}{ \sum_{f \in \F}  \exp\left(    2\lambda \sum_{i=1}^{T-t} \epsilon_i \ell(f,\x_i(\epsilon))  - \lambda L_t(f)    \right)  } \right)\\
&= \frac{1}{\lambda}\log\left( \sum_{f \in \F}  \exp\left(   - \lambda L_t(f)    \right)  \Es{\epsilon}{\prod_{i=1}^{T-t} \exp\left( 2 \lambda  \epsilon_i \ell(f,\x_i(\epsilon)) \right)  } \right)
\end{align*}
We now upper bound the expectation over the ``future'' tree by the worst-case path, resulting in the upper bound
\begin{align*}
& \frac{1}{\lambda}\log\left( \sum_{f \in \F}  \exp\left(   - \lambda L_t(f)    \right) \times  \exp\left( 2 \lambda^2  \max_{\epsilon_1,\ldots\epsilon_{T-t} \in \{\pm1\}}\sum_{i=1}^{T-t}  \ell(f,\x_i(\epsilon))^2 \right)   \right)\\
&\le \frac{1}{\lambda}\log\left( \sum_{f \in \F}  \exp\left(   - \lambda L_t(f)    + 2 \lambda^2  \max_{\epsilon_1,\ldots\epsilon_{T-t} \in \{\pm1\}}\sum_{i=1}^{T-t} \ell(f,\x_i(\epsilon))^2 \right)   \right)\\
& \le \frac{1}{\lambda}\log\left( \sum_{f \in \F}  \exp\left(   - \lambda L_t(f)   \right) \right)  +  2 \lambda \sup_{\x} \sup_{f \in \F} \max_{\epsilon_1,\ldots\epsilon_{T-t} \in \{\pm1\}}\sum_{i=1}^{T-t}  \ell(f,\x_i(\epsilon))^2  
\end{align*}
The last term, representing the ``worst future'', is upper bounded by $2\lambda(T-t)$. This removes the $\x$ tree and leads to the relaxation \eqref{rel:expweights} and a computationally tractable algorithm.
\end{proof}

\begin{proof}[\textbf{Proof of Proposition~\ref{prop:mirror_relax}}]

The argument can be seen as a generalization of the Euclidean proof in \cite{AbeBarRakTew08colt} to general smooth norms. The proof below not only shows that the Mirror Descent algorithm is admissible for the relaxation \eqref{rel:mirror}, but in fact shows that it coincides with the optimal algorithm for the relaxation, i.e. the one that attains the infimum over strategies.

Let $\tilde{x}_{t-1} = \sum_{i=1}^{t-1} x_i$. The optimal algorithm for the relaxation \eqref{rel:mirror} is 
$$
f_t = \argmin{f \in \F}\left\{ \sup_{x_t \in \X} \left\{ \ip{f}{x_t} +   \sqrt{ \norm{ \tilde{x}_{t-1}}^2 + \ip{\nabla \norm{\tilde{x}_{t-1}}^2}{x_t} + C (T - t + 1) } \right\} \right\}
$$
Now write any $f_t$ as $f_t = -\alpha \nabla \|\tilde{x}_{t-1}\|^2 +  g$ for some $g \in \mrm{Kernel}(\nabla \|\tilde{x}_{t-1}\|^2) \deq \left\{h: \ip{\nabla \|\tilde{x}_{t-1}\|^2}{h}=0\right\}$, and any $x_t$ as 
$x_t = \beta \tilde{x}_{t-1} + \gamma y$
for some $y \in \mrm{Kernel}(\nabla \|\tilde{x}_{t-1}\|^2)$.
Hence we can write:
\begin{align}
	\label{eq:grad_desc_expansion}
	& \ip{f_t}{x_t} +   \left( \|\tilde{x}_{t-1} \|^2  +\ip{\nabla \|\tilde{x}_{t-1}\|^2}{x_t} + C (T-t+1) \right)^{1/2}    \notag\\
	& = -\alpha \beta \|\tilde{x}_{t-1}\|^2 + \gamma \ip{g}{y} +   \left( \|\tilde{x}_{t-1}\|^2 + \beta  \|\tilde{x}_{t-1}\|^2 + C (T - t + 1) \right)^{1/2}  
\end{align}
Given any $f_t = -\alpha \nabla \norm{\tilde{x}_{t-1}}^2 +  g$, $x$ can be picked with $y \in \mrm{Kernel}(\nabla \norm{\tilde{x}_{t-1}}^2)$ that satisfies $\ip{g}{y} \ge 0$. One can always do this because if for some $y'$, $\ip{g}{y'} < 0$ by picking $y = -y'$ we can ensure that $\ip{g}{y} \ge 0$. Hence the minimizer $f_t$ must be once such that $f_t = -\alpha \nabla \norm{\tilde{x}_{t-1}}^2$ and thus $\ip{g}{y}=0$. Now, it must be that $\alpha \ge 0$ so that $x_t$ either increases the first term or second term but not both. Hence we have that  
$f_t = - \alpha \nabla \norm{\tilde{x}_{t-1}}^2$ for some $\alpha \ge 0$.
Now given such an $f_t$, the sup over $x_t$ can be written as supremum over $\beta$ of a concave function, which gives rise to the derivative condition 
$$
-\alpha  \norm{\tilde{x}_{t-1}}^2  +   \frac{\norm{\tilde{x}_{t-1}}^2}{2 \sqrt{ \norm{ \tilde{x}_{t-1}}^2 + \beta  \norm{\tilde{x}_{t-1}}^2 + C (T - t + 1) }} = 0
$$ 
At this point it is clear that the value of 
\begin{align}
	\label{eq:optalpha}
	\alpha = \frac{1}{2 \sqrt{\norm{ \tilde{x}_{t-1}}^2 +  C (T - t + 1) }}
\end{align}
forces $\beta=0$. Let us in fact show that this value is optimal. We have
$$
\frac{1}{4 \alpha^2  } = \norm{ \tilde{x}_{t-1}}^2 + \beta  \norm{\tilde{x}_{t-1}}^2 + C (T - t + 1)
$$ 
Plugging this value of $\beta$ back, we now aim to optimize
$$
\frac{1}{4 \alpha} + \alpha \norm{ \tilde{x}_{t-1}}^2 + \alpha C (T - t + 1)
$$
over $\alpha$. We then obtain the value given in \eqref{eq:optalpha}. With this value, we have the familiar update
\begin{align}
	\label{eq:gradient_descent}
	f_t = -\frac{\nabla \norm{\tilde{x}_{t-1}}^2}{2 \sqrt{\norm{ \tilde{x}_{t-1}}^2 +  C (T - t + 1)}} \ .
\end{align}
Plugging back the value of $\alpha$, we find that $\beta = 0$. With these values, 
\begin{align*}
\inf_{f \in \F}&\left\{ \sup_{x \in \X} \left\{ \ip{f}{x} +   \left( \|\tilde{x}_{t-1} \|^2  +\ip{\nabla \|\tilde{x}_{t-1}\|^2}{x} + C (T-t+1) \right)^{1/2}  \right\} \right\} = \left( \|\tilde{x}_{t-1}\|^2 + C (T - t + 1) \right)^{1/2}  \\
&~~~~~~~~~~~~~\leq \left( \|\bar{x}_{t-2}\|^2 + \ip{\nabla\|\bar{x}_{t-2}\|^2}{x_{t-1}} + C (T - t + 2) \right)^{1/2} =\Relax{T}{\F}{x_{1},\ldots,x_{t-1}}
\end{align*}
We have shown that \eqref{eq:gradient_descent} is an optimal algorithm for the relaxation, and it is admissible.

\paragraph{Arriving at the Relaxation} The derivation of the relaxation is immediate:
\begin{align}
	\Rad_T (\F | x_1,\ldots,x_t) &= \sup_{\x} \En_{\epsilon_{t+1:T}} \left\| \sum_{s=t+1}^T \epsilon_s \x_{s-t}(\epsilon_{t+1:s-1}) - \sum_{s=1}^t x_s \right\| \\
	&\leq \sup_{\x} \sqrt{\En_{\epsilon_{t+1:T}} \left\| \sum_{s=t+1}^T \epsilon_s \x_{s-t}(\epsilon_{t+1:s-1}) - \sum_{s=1}^t x_s \right\|^2 }\\
	&\leq \sup_{\x} \sqrt{ \left\| \sum_{s=1}^t x_s \right\|^2 + C \En_{\epsilon_{t+1:T}}\sum_{s=t+1}^T\left\| \epsilon_s \x_{s-t}(\epsilon_{t+1:s-1})\right\|^2}
\end{align}
where the last step is due to the smoothness of the norm and the fact that the first-order terms disappear under the expectation. The sum of norms is now upper bounded by $T-t$, thus removing the dependence on the ``future'', and we arrive at
\begin{align*}
 \sqrt{ \left\| \sum_{s=1}^t x_s \right\|^2 + C(T-t)} \leq \sqrt{ \left\| \sum_{s=1}^{t-1} x_s \right\|^2 + \ip{\nabla \norm{\sum_{s=1}^{t-1} x_s}^2}{x_t} +  C(T-t+1)}
\end{align*}
as a relaxation on the sequential Rademacher complexity.
\end{proof}

\begin{proof}[\textbf{Proof of Lemma~\ref{lem:adaptive_gd}}]
We shall first establish the admissibility of the relaxation specified. To show admissibility, let us first check the initial condition:
\begin{align*}
\Relax{k}{\F_{r(k;x_1,\ldots,x_t)}}{y_1,\ldots,y_k} & = - \ip{\hat{f}_{t}}{\tilde{y}_k} + 2\min\left\{1, \frac{k}{\sigma_{1:t}}\right\} \sqrt{ \norm{\sum_{j=1}^{k-1} y_j}^2 +  \ip{\nabla \norm{\sum_{j=1}^{k-1}y_j}^2}{y_k}+ C }\\
& \ge - \ip{\hat{f}_{t}}{\tilde{y}_k} + 2\min\left\{1, \frac{k}{\sigma_{1:t}}\right\} \sqrt{ \norm{\tilde{y}_k}^2}\\
& \ge - \ip{\hat{f}_{t}}{\tilde{y}_k} + \sup_{f : \norm{f - \hat{f}_t} \le 2\min\{1 , \frac{k}{\sigma_{1:t}} \}} \ip{f - \hat{f}_t}{- \tilde{y}_k}\\
& \ge - \inf_{f : \norm{f - \hat{f}_t} \le 2\min\{1 , \frac{k}{\sigma_{1:t}} \}} \sum_{j=1}^{k} \ip{f }{ y_j}
\end{align*}
Now, for the recurrence, we have
\begin{align*}
& \ip{f_i}{y_i} + \sup_{\y} \Es{\epsilon}{\sup_{f : \|f -\hat{f}_{t}\| \le 2\min\left\{1,\frac{k}{\sigma_{1:t}}\right\} }  \ip{f}{\sum_{j=1}^{k-i} \epsilon_j \y_j(\epsilon) - \sum_{j=1}^{i} y_j} } \\
& = \ip{f_i}{y_i} - \ip{\hat{f}_{t}}{\sum_{j=1}^i y_j}+ \sup_{\y} \Es{\epsilon}{\sup_{f : \|f -\hat{f}_{t}\| \le 2\min\left\{1, \frac{k}{\sigma_{1:t}}\right\}}  \ip{f - \hat{f}_t}{\sum_{j=1}^{k-i} \epsilon_j \y_j(\epsilon) - \sum_{j=1}^{i} y_j} } \\
& \le \ip{f_i}{y_i} - \ip{\hat{f}_{t}}{\sum_{j=1}^i y_j} + 2\min\left\{ 1, \frac{k}{\sigma_{1:t}}\right\} \sup_{\y} \Es{\epsilon}{\norm{\sum_{j=1}^{k-i} \epsilon_j \y_j(\epsilon)- \sum_{j=1}^{i} y_j} } \\
\intertext{}
& \le \ip{f_i}{y_i} - \ip{\hat{f}_{t}}{\sum_{j=1}^{i} y_j} + 2\min\left\{1, \frac{k}{\sigma_{1:t}}\right\} \sqrt{ \norm{\tilde{y}_i}^2 + C (k-i) }\\
& \le \ip{f_i}{y_i} - \ip{\hat{f}_{t}}{\tilde{y}_{i}} + 2\min\left\{1, \frac{k}{\sigma_{1:t}}\right\} \sqrt{ \norm{\tilde{y}_i}^2 +  \ip{\nabla \norm{\tilde{y}_{i-1}}^2}{\y_i}+ C (k-i+1) }\\
& = \ip{f_i - \hat{f}_{t}}{y_i} - \ip{\hat{f}_{t}}{\tilde{y}_{i-1}} + 2\min\left\{1, \frac{k}{\sigma_{1:t}}\right\} \sqrt{ \norm{\tilde{y}_{i-1}}^2 +  \ip{\nabla \norm{\tilde{y}_{i-1}}^2}{\x_i}+ C (k-i+1) }
\end{align*}

and we start block at $\hat{f}_t$. For the first block, this value is $0$ but later on it is the empirical risk minimizer. We therefore get a mixture of Follow the Leader (FTL) and Gradient Descent (GD) algorithms. If block size is $1$, we get FTL only, and when the block size is $T$ we get GD only. In general, however, the resulting method is an interesting mixture of the two. 
Using the arguments of Proposition~\ref{rel:mirror}, the update in the block is given by
\begin{align*}
f_{t+i} = \hat{f}_t - \max\left\{ 1 ,\frac{k}{ \sigma_{1:t} }\right\}  \frac{- \nabla\norm{\tilde{y}_{i-1}^2}}{\sqrt{ \norm{\tilde{y}_{i-1}}^2 + C (k-i+1)  }}
\end{align*}

Now that we have shown the admissibility of the relaxation and the form of update obtained by the relaxation we turn to the bounds on the regret specified in the lemma. We shall provide these bounds using Lemma \ref{lem:adaptive}. We will split the analysis to two cases, one when $\alpha > 1/2$ and other when $\alpha \le 1/2$.

{\bf Case $\alpha > \frac{1}{2}$ :}\\
To start note that since we initialize the block lengths with the doubling trick, that is initialize block lengths as $1, 2, 4, \ldots$ hence, after $t$ rounds the maximum length of current block say $k$ can be at most $2t$ and so $\sqrt{k} \le \sqrt{2t}$. Now let us first consider the case when $\alpha > \frac{1}{2}$. In this case, since $\sigma_{1:t} = B t^{\alpha}$, we can conclude that the condition $\sigma_{1:t} \ge \sqrt{k}$ is satisfied as long as $t^{\alpha - \frac{1}{2}} \ge \frac{\sqrt{2}}{B}$. Since we are considering the case when $\alpha > \frac{1}{2}$ we can conclude that for all rounds larger than $\sqrt{2}/B$, the blocking strategy always picks block size of $1$. Hence applying Lemma \ref{lem:adaptive} we conclude that in the case when $1 > \alpha > 1/2$ (or when $\alpha = 1/2$ and $B \ge \sqrt{2}$),
\begin{align*}
\Reg_T \le \sum_{t=1}^T \frac{1}{\sigma_{1:t}} = \sum_{t=1}^T \frac{1}{B t^\alpha} = O(T^{1-\alpha}/B)
\end{align*}
Also note that for the case when $\alpha = 1$, the summation is bounded by $O(\log T)$ and so 
\begin{align*}
\Reg_T \le \sum_{t=1}^T \frac{1}{\sigma_{1:t}} = \sum_{t=1}^T \frac{1}{B t^\alpha} = O(\log T / B)
\end{align*}

{\bf Case $\alpha \le \frac{1}{2}$ :}\\
Now we consider the case when $\alpha < 1/2$. Say we are at start of some block $t = 2^m$. The initial block length then is $2 t$ by the doubling trick initialization. Now within this block, the adaptive algorithm continues with this current block until the point when the square-root of the remaining number of rounds in the block say $k$ becomes smaller than $\sigma_{1 : t + (2t - k)}$. That is until
\begin{align}\label{eq:int1}
\sqrt{k} \le B (3t - k)^{\alpha} 
\end{align}
The regret on this block can be bounded using Lemma \ref{lem:adaptive}  (notice that here we use the lemma for the algorithm within a sub-block initialized by the doubling trick rather than on the entire $T$ rounds). The regret on this block is bounded as :
\begin{align*}
\Rel{2t - k}{\F_{r(x_1,\ldots,x_t)}} + \sum_{i=2t - k+1}^{2t} \Rel{1}{\F_{r(x_1,\ldots,x_{i})}} & \le \sqrt{2t -k} +  \sum_{j=2t -k +1}^{2 t} \frac{1}{B j^{\alpha}} \\
& \le \sqrt{2t} +  \sum_{j=2t -k +1}^{2 t} \frac{1}{B j^{\alpha}} \\
& \le  \sqrt{2t} +  \frac{1}{B}\left( (2t+1)^{1 - \alpha} - (2t - k +1)^{1 - \alpha} \right)\\
& \le  \sqrt{2t} +  \frac{k^{1 - \alpha}}{B}\\
& \le \sqrt{2t} +  \frac{B^{2(1 - \alpha)} (3t)^{2\alpha(1 - \alpha)} }{B} & \textrm{(using Eq. \eqref{eq:int1})}\\
& \le  \sqrt{2t} + B^{2(1 - \alpha)- 1} \sqrt{3t}   \\
& \le  \sqrt{12\ t}  
\end{align*}
Hence overall regret is bounded as 
\begin{align*}
\Reg_T \le \sum_{i=1}^{\lceil \log_2 T \rceil + 1} \sqrt{12\ \times 2^{i-1}}  \le \sqrt{12} \sum_{i=1}^{\lceil \log_2 T \rceil + 1} 2^{(i-1)/2} \le O(\sqrt{T})
\end{align*}
This concludes the proof.
\end{proof}

\begin{proof}[\textbf{Proof of Lemma~\ref{lem:adaptivehedge}. }]
	
Notice that by doubling trick for at most first $2 \tau$ rounds we simply play the experts algorithm, thus suffering a maximum regret that is minimum of $\tau$ and $4 \sqrt{\tau \log|\F|}$. After these initial number of rounds, consider any round $t$ at which we start a new block with the blocking strategy described above. The first sub-block given by the blocking strategy is of length at most $k$, thanks to our assumption about the gap between the leader and the second-best action. Clearly the minimizer of cumulative loss up to $t$ rounds already played, $\argmin{f \in \F} \sum_{i=1}^t \ell(f,x_i)$, is going to be the leader at least for the next $k$ rounds. Hence for this block we suffer no regret. Now when we use the same blocking strategy repeatedly, due to the same reasoning, we end up playing the same leader for the rest of the game only in chunks of size $k$, and thus suffer no regret for the rest of the game.
\end{proof}

\begin{proof}[\textbf{Proof of Proposition~\ref{prop:relax_littlestone1}}]
We would like to show that, with the distribution $q^*_t$ defined in \eqref{eq:littlestone_algo1},
\begin{align*}
	&\max_{y_t \in \{\pm1\}} \left\{ \underset{\hat{y}_t \sim q^*_t}{\En}|\hat{y}_t-y_t| + \Relax{T}{\F}{(x^t,y^t)}\right\} \leq \Relax{T}{\F}{(x^{t-1},y^{t-1})}
\end{align*}
for any $x_t\in\X$. Let $\sigma\in\{\pm1\}^{t-1}$ and $\sigma_t\in\{\pm1\}$. We have
\begin{align*}
	&\Relax{T}{\F}{(x^t,y^t)} - 2\lambda (T-t)  \\
	&=\frac{1}{\lambda}\log \left(   \sum_{(\sigma,\sigma_t) \in \F|_{x^t}}  g(\ldim(\F_t(\sigma,\sigma_t)),T-t) \exp\left\{- \lambda L_{t-1}(\sigma) \right\} \exp\left\{- \lambda |\sigma_t-y_t| \right\} \right) \\
	&\leq \frac{1}{\lambda}\log \left(  \sum_{\sigma_t\in\{\pm1\}} \exp\left\{- \lambda |\sigma_t-y_t| \right\}  \sum_{\sigma: (\sigma,\sigma_t) \in \F|_{x^t}} g(\ldim(\F_t(\sigma,\sigma_t)),T-t) \exp\left\{- \lambda L_{t-1}(\sigma) \right\}  \right) 
\end{align*}
Just as in the proof of Proposition~\ref{prop:exp_weights_relax}, we may think of the two choices $\sigma_t$ as the two experts whose weighting $q_t^*$ is given by the sum involving the Littlestone's dimension of subsets of $\F$. Introducing the normalization term, we arrive at the upper bound
\begin{align*}
	&\frac{1}{\lambda}\log \left(  \En_{\sigma_t\sim q^*_t} \exp\left\{- \lambda |\sigma_t-y_t| \right\}  \right) + \frac{1}{\lambda}\log \left( \sum_{\sigma_t\in\{\pm1\}}\sum_{\sigma: (\sigma,\sigma_t) \in \F|_{x^t}} g(\ldim(\F_t(\sigma,\sigma_t)),T-t) \exp\left\{- \lambda L_{t-1}(\sigma) \right\} \right) \\
	&\leq -  \En_{\sigma_t\sim q^*_t} |\sigma_t-y_t| + 2\lambda + \frac{1}{\lambda}\log \left( \sum_{\sigma_t\in\{\pm1\}}\sum_{\sigma: (\sigma,\sigma_t) \in \F|_{x^t}} g(\ldim(\F_t(\sigma,\sigma_t)),T-t) \exp\left\{- \lambda L_{t-1}(\sigma) \right\} \right) 
\end{align*}
The last step is due to Lemma A.1 in \cite{PLG}. It remains to show that the log normalization term is upper bounded by the relaxation at the previous step:
\begin{align*}
	& \frac{1}{\lambda}\log \left( \sum_{\sigma_t\in\{\pm1\}} \sum_{\sigma: (\sigma,\sigma_t) \in \F|_{x^t}} g(\ldim(\F_t(\sigma,\sigma_t)),T-t) \exp\left\{- \lambda L_{t-1}(\sigma) \right\}  \right)\\
	&\leq \frac{1}{\lambda}\log \left( \sum_{\sigma \in \F|_{x^{t-1}}} \exp\left\{- \lambda L_{t-1}(\sigma) \right\}  \sum_{\sigma_t\in\{\pm1\}} g(\ldim(\F_t(\sigma,\sigma_t)),T-t)  \right)  \\
	&\leq \frac{1}{\lambda}\log \left(  \sum_{\sigma \in \F|_{x^{t-1}}} \exp\left\{- \lambda L_{t-1}(\sigma) \right\}  g(\ldim(\F_{t-1}(\sigma)),T-t+1) \right) \\
	&= \Relax{T}{\F}{(x^{t-1},y^{t-1})}
\end{align*}
To justify the last inequality, note that $\F_{t-1}(\sigma)=\F_{t}(\sigma,+1)\cup\F_t(\sigma,-1)$ and at most one of $\F_t(\sigma,+1)$ or $\F_t(\sigma,-1)$ can have Littlestone's dimension $\ldim(\F_{t-1}(\sigma))$. We now appeal to the recursion $$g(d,T-t)+g(d-1,T-t) \leq g(d, T-t+1)$$ 
where $g(d,T-t)$ is the size of the zero cover for a class with Littlestone's dimension $d$ on the worst-case tree of depth $T-t$  (see \citep{RakSriTew10nips}). This completes the proof of admissibility.


\paragraph{Alternative Method} Let us now derive the algorithm given in \eqref{eq:littlestone_algo2} and prove its admissibility. Once again, consider the optimization problem
\begin{align*}
	&\max_{y_t \in \{\pm1\}} \left\{ \underset{\hat{y}_t \sim q^*_t}{\En}|\hat{y}_t-y_t| + \Relax{T}{\F}{(x^t,y^t)}\right\}
\end{align*}
with the relaxation
\begin{align*}
	\Relax{T}{\F}{(x^t,y^t)} = \frac{1}{\lambda}\log \left( \sum_{\sigma \in \F|_{x^t}} g(\ldim(\F_t(\sigma)),T-t) \exp\left\{- \lambda L_{t}(\sigma) \right\}   \right) + \frac{\lambda}{2} (T-t)
\end{align*}

The maximum can be written explicitly, as in Section~\ref{sec:binary}:
\begin{align*}
	\max&\left\{1-q^*_t + \frac{1}{\lambda}\log \left(   \sum_{(\sigma,\sigma_t) \in \F|_{x^t}}  g(\ldim(\F_t(\sigma,\sigma_t)),T-t) \exp\left\{- \lambda L_{t-1}(\sigma) \right\} \exp\left\{- \lambda (1-\sigma_t) \right\} \right) , \right.\\
&\left. 1+q^*_t + \frac{1}{\lambda}\log \left(   \sum_{(\sigma,\sigma_t) \in \F|_{x^t}}  g(\ldim(\F_t(\sigma,\sigma_t)),T-t) \exp\left\{- \lambda L_{t-1}(\sigma) \right\} \exp\left\{- \lambda (1+\sigma_t) \right\} \right) \right\}
\end{align*}
where we have dropped the $\frac{\lambda}{2}(T-t)$ term from both sides. Equating the two values, we obtain
\begin{align*}
	2q^*_t = \frac{1}{\lambda}\log \frac{   \sum_{(\sigma,\sigma_t) \in \F|_{x^t}}  g(\ldim(\F_t(\sigma,\sigma_t)),T-t) \exp\left\{- \lambda L_{t-1}(\sigma) \right\} \exp\left\{- \lambda (1-\sigma_t) \right\} }{ \sum_{(\sigma,\sigma_t) \in \F|_{x^t}}  g(\ldim(\F_t(\sigma,\sigma_t)),T-t) \exp\left\{- \lambda L_{t-1}(\sigma) \right\} \exp\left\{- \lambda (1+\sigma_t) \right\}}
\end{align*}
The resulting value becomes
\begin{align*}
	&1+\frac{\lambda}{2}(T-t)+\frac{1}{2\lambda}\log \left\{   \sum_{(\sigma,\sigma_t) \in \F|_{x^t}}  g(\ldim(\F_t(\sigma,\sigma_t)),T-t) \exp\left\{- \lambda L_{t-1}(\sigma) \right\} \exp\left\{- \lambda (1-\sigma_t) \right\} \right\}  \\
	&~~~~~~~~~~~~~~~~+ \frac{1}{2\lambda}\log \left\{\sum_{(\sigma,\sigma_t) \in \F|_{x^t}}  g(\ldim(\F_t(\sigma,\sigma_t)),T-t) \exp\left\{- \lambda L_{t-1}(\sigma) \right\} \exp\left\{- \lambda (1+\sigma_t) \right\} \right\} \\
	&= 1+\frac{\lambda}{2}(T-t)+\frac{1}{\lambda}\En_\epsilon \log \left\{   \sum_{(\sigma,\sigma_t) \in \F|_{x^t}}  g(\ldim(\F_t(\sigma,\sigma_t)),T-t) \exp\left\{- \lambda L_{t-1}(\sigma) \right\} \exp\left\{- \lambda (1-\epsilon\sigma_t) \right\} \right\}  \\
	&\leq 1+\frac{\lambda}{2}(T-t)+ \frac{1}{\lambda}\log \left\{ \sum_{(\sigma,\sigma_t) \in \F|_{x^t}}  g(\ldim(\F_t(\sigma,\sigma_t)),T-t) \exp\left\{- \lambda L_{t-1}(\sigma) \right\} \En_\epsilon  \exp\left\{- \lambda (1-\epsilon\sigma_t) \right\} \right\}
\end{align*}
for a Rademacher random variable $\epsilon\in\{\pm1\}$.
Now,
$$\En_\epsilon  \exp\left\{- \lambda (1-\epsilon\sigma_t) \right\} = e^{-\lambda}\En_\epsilon e^{\lambda\epsilon\sigma_t} \leq e^{-\lambda} e^{\lambda^2/2}$$
Substituting this into the above expression, we obtain an upper bound of
\begin{align*}
	\frac{\lambda}{2}(T-t+1)+ \frac{1}{\lambda}\log \left\{ \sum_{(\sigma,\sigma_t) \in \F|_{x^t}}  g(\ldim(\F_t(\sigma,\sigma_t)),T-t) \exp\left\{- \lambda L_{t-1}(\sigma) \right\}  \right\}
\end{align*}
which completes the proof of admissibility using the same combinatorial argument as in the earlier part of the proof.

\paragraph{Arriving at the Relaxation}

Finally, we show that the relaxation we use arises naturally as an upper bound on the sequential Rademacher complexity. Fix a tree $\x$. Let $\sigma \in \{\pm1\}^{t-1}$ be a sequence of signs. Observe that given history $x^t=(x_1,\ldots,x_t)$, the signs $\epsilon\in\{\pm1\}^{T-t}$, and a tree $\x$, the function class $\F$ takes on only a finite number of possible values $(\sigma, \sigma_t, \omega)$ on $(x^t,\x(\epsilon))$. Here, $\x(\epsilon)$ denotes the sequences of values along the path $\epsilon$. We have,
\begin{align*}
	\sup_{\x} \En_{\epsilon}  \sup_{f \in \F} \left\{ 2 \sum_{i=1}^{T-t} \epsilon_i f(\x_i(\epsilon))  - \sum_{i=1}^{t} |f(x_i)-y_i|\right\} 
	&= \sup_{\x} \En_{\epsilon} \max_{\sigma_t\in\{\pm1\}} \max_{(\sigma,\omega): (\sigma,\sigma_t,\omega)\in \F|_{(x^t, \x(\epsilon))}} \left\{ 2 \sum_{i=1}^{T-t} \epsilon_i \omega_i  - \sum_{i=1}^{t} |\sigma_i-y_i|\right\}\\
	&\leq\sup_{\x} \En_{\epsilon}  \max_{\sigma_t\in\{\pm1\}} \max_{\sigma: (\sigma,\sigma_t) \in \F|_{x^t}}\max_{\v\in V(\F(\sigma,\sigma_t),\x)} \left\{ 2 \sum_{i=1}^{T-t} \epsilon_i \v_i(\epsilon)  - \sum_{i=1}^{t} |\sigma_i-y_i|\right\}
\end{align*}
where $\F|_{(x^t,\x(\epsilon))}$ is the projection of $\F$ onto $(x^t, \x(\epsilon))$, $\F(\sigma,\sigma_t)=\{f\in\F: f(x^t)=(\sigma,\sigma_t)\}$, and $V(\F(\sigma,\sigma_t),\x)$ is the zero-cover of the set $\F(\sigma,\sigma_t)$ on the tree $\x$. We then have the following relaxation:
\begin{align*}
	\frac{1}{\lambda}\log \left( \sup_{\x} \En_{\epsilon}  \sum_{\sigma_t\in\{\pm1\}} \sum_{\sigma: (\sigma,\sigma_t) \in \F|_{x^t}}\sum_{\v\in V(\F(\sigma,\sigma_t),\x)} \exp\left\{ 2\lambda \sum_{i=1}^{T-t} \epsilon_i \v_i(\epsilon)  - \lambda L_{t}(\sigma,\sigma_t) \right\}\right)
\end{align*}
where $L_{t}(\sigma,\sigma_t) = \sum_{i=1}^{t} |\sigma_i-y_i|$.
The latter quantity can be factorized:
\begin{align*}
	&\frac{1}{\lambda}\log \left( \sup_{\x}  \sum_{\sigma_t\in\{\pm1\}} \sum_{\sigma: (\sigma,\sigma_t) \in \F|_{x^t}} \exp\left\{ - \lambda L_{t}(\sigma,\sigma_t) \right\} \En_{\epsilon} \sum_{\v\in V(\F(\sigma,\sigma_t),\x)} \exp\left\{ 2\lambda \sum_{i=1}^{T-t} \epsilon_i \v_i(\epsilon)  \right\}\right) \\
	&\leq \frac{1}{\lambda}\log \left(  \sup_{\x} \sum_{\sigma_t\in\{\pm1\}} \sum_{\sigma: (\sigma,\sigma_t) \in \F|_{x^t}} \exp\left\{ - \lambda L_{t}(\sigma,\sigma_t) \right\} \text{card}(V(\F(\sigma,\sigma_t),\x)) \exp\left\{ 2\lambda^2 (T-t) \right\}\right) \\
	&\leq \frac{1}{\lambda}\log \left(  \sum_{\sigma_t\in\{\pm1\}} \exp\left\{- \lambda |\sigma_t-y_t| \right\}  \sum_{\sigma: (\sigma,\sigma_t) \in \F|_{x^t}} g(\ldim(\F(\sigma,\sigma_t)),T-t) \exp\left\{- \lambda L_{t-1}(\sigma) \right\}  \right) + 2\lambda (T-t) \ .
\end{align*}
This concludes the derivation of the relaxation.

\end{proof}

\begin{proof}[\textbf{Proof of Lemma \ref{lem:fpl}}]
We first exhibit the proof for the convex loss case.  To show admissibility using the particular randomized strategy $q_t$ given in the lemma, we need to show that 
\begin{align*}
\sup_{x_t} \left\{ \Es{f \sim q_t}{\ell(f,x_t)} + \Relax{T}{\F}{x_1,\ldots,x_t}\right\} \le \Relax{T}{\F}{x_1,\ldots,x_{t-1}}
\end{align*}
The strategy $q_t$ proposed by the lemma is such that we first draw $x_{t+1},\ldots,x_T \sim D$ and $\epsilon_{t+1},\ldots \epsilon_T$ Rademacher random variables, and then based on this sample pick $f_t=f_t(x_{t+1:T},\epsilon_{t+1:T})$ as in \eqref{eq:def_general_fpl}. Hence,
\begin{align*}
\sup_{x_t} &\left\{ \Es{f \sim q_t}{\ell(f,x_t)} + \Relax{T}{\F}{x_1,\ldots,x_t}\right\} \\
& = \sup_{x_t} \left\{ \Eunder{x_{t+1:T}}{\epsilon_{t+1:T}} \ell(f_t,x) + \Eunder{x_{t+1:T}}{\epsilon_{t+1:T}} \sup_{f \in \F} \left[ C \sum_{i=t+1}^T \epsilon_i \ell(f,x_i) - L_t(f) \right] \right\}\\
& \le \Eunder{x_{t+1:T}}{\epsilon_{t+1:T}} \sup_{x_t} \left\{ \ell(f_t,x) +  \sup_{f \in \F} \left[ C \sum_{i=t+1}^T \epsilon_i \ell(f,x_i) - L_t(f) \right]\right\}
\end{align*}
where $L_t(f) = \sum_{i=1}^t \ell(f,x_i)$.
Observe that our strategy ``matched the randomness'' arising from the relaxation! Now, with $f_t$ defined as 
\begin{align*}
f_t = \argmin{g \in \F} \sup_{x_t\in\X} \left\{ \ell(g,x_t) + \sup_{f \in \F} \left[ C \sum_{i=t+1}^T \epsilon_i \ell(f,x_i) - L_t(f)  \right] \right\}
\end{align*}
for any given $x_{t+1:T},\epsilon_{t+1:T}$, we have
\begin{align*}
\sup_{x_t} & \left\{ \ell(f_t,x_t) + \sup_{f \in \F} \left[ C \sum_{i=t+1}^T \epsilon_i \ell(f,x_i) - L_t(f)  \right] \right\}  = \inf_{g \in \F} \sup_{x_t} \left\{ \ell(g,x_t) + \sup_{f \in \F} \left[ C \sum_{i=t+1}^T \epsilon_i \ell(f,x_i) - L_t(f)  \right] \right\}
\end{align*}
We can conclude that for this choice of $q_t$, 
\begin{align*}
\sup_{x_t} &\left\{ \Es{f \sim q_t}{\ell(f,x_t)} + \Relax{T}{\F}{x_1,\ldots,x_t}\right\}  \le \Eunder{x_{t+1:T}}{\epsilon_{t+1:T}} \inf_{g \in \F} \sup_{x_t} \left\{\ell(g,x_t) +  \sup_{f \in \F} \left[ C \sum_{i=t+1}^T \epsilon_i \ell(f,x_i) - L_t(f) \right]\right\}\\
& =  \Eunder{x_{t+1:T}}{\epsilon_{t+1:T}} \inf_{g \in \F} \sup_{p_t \in \Delta(\X)} \Es{x_t \sim p_t}{\ell(g,x_t) +  \sup_{f \in \F} \left[ C \sum_{i=t+1}^T \epsilon_i \ell(f,x_i) - L_t(f) \right]}\\
& = \Eunder{x_{t+1:T}}{\epsilon_{t+1:T}}\sup_{p \in \Delta(\X) } \inf_{g \in \F}  \left\{\Es{x_t \sim p}{\ell(g,x_t)} +  \Es{x_t \sim p}{\sup_{f \in \F} C \sum_{i=t+1}^T \epsilon_i \ell(f,x_i) - L_t(f)} \right\}
\end{align*}
In the last step we appealed to the minimax theorem which holds as loss is convex in $g$ and $\F$ is a compact convex set and the term in the expectation is linear in $p_t$, as it is an expectation. The last expression can be written as
\begin{align*}
&\Eunder{x_{t+1:T}}{\epsilon_{t+1:T}} \sup_{p \in \Delta(\X) }\En_{x_t \sim p}   \sup_{f \in \F} \left[ C \sum_{i=t+1}^T \epsilon_i \ell(f,x_i) - L_{t-1}(f) + \inf_{g \in \F}  \Es{x_t \sim p}{\ell(g,x_t)} - \ell(f,x_t) \right] \\
& \le \Eunder{x_{t+1:T}}{\epsilon_{t+1:T}} \sup_{p \in \Delta(\X) }\En_{x_t \sim p}   \sup_{f \in \F} \left[ C \sum_{i=t+1}^T \epsilon_i \ell(f,x_i) - L_{t-1}(f) +   \Es{x_t \sim p}{\ell(f,x_t)} - \ell(f,x_t) \right] \\
& \le\Eunder{x_{t+1:T}}{\epsilon_{t+1:T}} \En_{x_t \sim D}\En_{\epsilon_t}   \sup_{f \in \F} \left[ C \sum_{i=t+1}^T \epsilon_i \ell(f,x_i) - L_{t-1}(f) +   C \epsilon_t \ell(f,x_t) \right]\\
& = \Relax{T}{\F}{x_1,\ldots,x_{t-1}}
\end{align*}
Last inequality is by Assumption \ref{asm:fpl}, using which we can replace a draw from supremum over distributions by a draw from the ``equivalently bad'' fixed distribution $D$ by suffering an extra factor of $C$ multiplied to that random instance.

The key step where we needed convexity was to use minimax theorem to swap infimum and supremum inside the expectation. In general the minimax theorem need not hold. In the non-convex scenario this is the reason we add the extra randomization through $\hat{q}_t$. The non-convex case has a similar proof except that we have expectation w.r.t. $\hat{q}_t$ extra on each round which essentially convexifies our loss and thus allows us to appeal to the minimax theorem.
\end{proof}

\begin{proof}[\textbf{Proof of Lemma~\ref{lem:fpl-l1linfty}}]
	Let $w\in\reals^N$ be arbitrary. Throughout this proof, let $\epsilon\in\{\pm1\}$ be a single Rademacher random variable, rather than a vector. To prove \eqref{eq:assumption_fpl_linear_simpler}, observe that
	\begin{align*}
		\sup_{p \in \Delta(\X)} \Eunderone{x_t \sim p} \norm{ w + \Eunderone{x \sim p} [x] - x_t }_\infty &\le \sup_{p \in \Delta(\X)} \Eunderone{x,x' \sim p} \norm{ w + x' - x }_\infty\\
		&= \sup_{p \in \Delta(\X)} \Eunderone{x,x' \sim p}\En_\epsilon \norm{ w + \epsilon(x' - x) }_\infty\\
		&\leq \sup_{x,x'\in \X} \En_\epsilon \norm{ w + \epsilon(x' - x) }_\infty\\
		&\leq \sup_{x'\in \X} \En_\epsilon \norm{ w/2 + \epsilon x'}_\infty+ \sup_{x\in \X} \En_\epsilon \norm{ w/2 - \epsilon x}_\infty\\
		&= \sup_{x\in \X} \En_\epsilon \max_{i\in[N]} \left| w_i + 2\epsilon x_i\right|
	\end{align*}	
	The supremum over $x\in\X$ is achieved at the vertices of $\X$ since the expected maximum is a convex function. It remains to prove the identity
	\begin{align}
		\label{eq:coord_max}
		\max_{x\in \{\pm1\}^N} \En_\epsilon \max_{i\in[N]} \left| w_i + 2\epsilon x_i\right| \leq \Eunderone{x \sim D }\Eunderone{\epsilon}\max_{i\in[N]} \left| w_i+  6\epsilon x_i  \right|
	\end{align}
	Let $i^*=\argmax{i} |w_i|$ and $j^*=\argmax{i\neq i^*} |w_i|$ be the coordinates with largest and second-largest magnitude. If $|w_{i^*}|-|w_{j^*}| \geq 4$, the statement follows since, for any $x\in\{\pm1\}^N$ and $\epsilon\in\{\pm1\}$, 
	$$ \max_{i\neq i^*} \left| w_i + 2\epsilon x_i\right|\leq  \max_{i\neq i^*}\left| w_{i} \right| + 2 \leq   \left| w_{i^*} \right| -2 \leq |w_{i^*}+2\epsilon x_{i^*}|,$$ and thus
	$$\max_{x\in \{\pm1\}^N} \En_\epsilon \max_{i\in[N]} \left| w_i + 2\epsilon x_i\right| = \max_{x\in \{\pm1\}^N} \En_\epsilon  \left| w_{i^*} + 2\epsilon x_{i^*}\right| = |w_{i^*}| = \En_{x,\epsilon} |w_{i^*}+6\epsilon x_{i^*}| \leq \En_{x,\epsilon} \max_{i}|w_{i}+6\epsilon x_{i}|.$$
	 It remains to consider the case when $|w_{i^*}|- |w_{j^*}| < 4$. We have that
	\begin{align}
		\En_{x ,\epsilon}\max_{i\in[N]} \left| w_i+  6\epsilon  x_i  \right| \geq \En_{x ,\epsilon}\max_{i\in\{i^*,j^*\}} \left| w_i+  6\epsilon  x_i  \right| &\geq \frac{1}{2} (|w_{i^*}|+6)+ \frac{1}{4} (|w_{i^*}|-6) + \frac{1}{4} (|w_{j^*}|+6) \geq |w_{i^*}|+2 \\
		&\geq \max_{x\in \{\pm1\}^N} \En_\epsilon \max_{i\in[N]} \left| w_i + 2\epsilon  x_i\right|,
	\end{align}
	where $1/2$ is the probability that $\epsilon x_{i^*} = sign(w_{i^*})$, the second event of probability $1/4$ is the event that $\epsilon x_{i^*} \neq sign(w_{i^*})$ and $\epsilon x_{j^*} \neq sign(w_{j^*})$, while the third event of probability $1/4$ is that $\epsilon x_{i^*} \neq sign(w_{i^*})$ and $\epsilon x_{j^*} = sign(w_{j^*})$. 
\end{proof}

\begin{proof}[\textbf{Proof of Lemma \ref{lem:genlinffpl}}]
	Let $w\in\reals^N$ be arbitrary. Just as in the proof of Lemma~\ref{lem:fpl-l1linfty}, we need to show
	\begin{align}
		\label{eq:coord_max2}
		\max_{x\in \{\pm1\}^N} \En_\epsilon \max_{i\in[N]} \left| w_i + 2\epsilon  x_i\right| \leq \Eunderone{x \sim D }\Eunderone{\epsilon}\max_{i\in[N]} \left| w_i+  C \epsilon  x_i  \right|
	\end{align}
	Let $i^*=\argmax{i} |w_i|$ and $j^*=\argmax{i\neq i^*} |w_i|$ be the coordinates with largest and second-largest magnitude. If $|w_{i^*}|-|w_{j^*}| \geq 4$, the statement follows
	exactly as in Lemma~\ref{lem:fpl-l1linfty}.
It remains to consider the case when $|w_{i^*}|- |w_{j^*}| < 4$. In this case first note that,
$$
\max_{x\in \{\pm1\}^N} \En_\epsilon \max_{i\in[N]} \left| w_i + 2\epsilon  x_i\right| \le |w_{i^*}|+2 
$$
On the other hand, since the distribution we consider is symmetric, with probability $1/2$ its sign is negative and with remaining probability positive. Define $\sigma_{i^*} = \sign(x_{i^*})$, $\sigma_{j^*} = \sign(x_{j^*})$, $\tau_{i^*} = \sign(w_{i^*})$, and $\tau_{j^*} = \sign(w_{j^*})$. Since each coordinate is drawn i.i.d., using conditional expectations we have,

\begin{align*}
 &\En_{x,\epsilon} \max_{i}|w_{i}+C \epsilon x_{i}|  =  \En_{x} \max_{i}|w_{i}+C  x_{i}| \\
 & \ge \frac{\Es{x}{|w_{i^*} + C x_{i^*}|\ \middle|\ \sigma_{i^*} =\tau_{i^*}}}{2}  + \frac{\Es{x}{|w_{j^*} + C x_{j^*}|\ \middle|\ \sigma_{i^*} \ne \tau_{i^*}, \sigma_{j^*} = \tau_{j^*}}}{4} + \frac{\E{|w_{i^*} + C x_{i^*}|\ \middle|\ \sigma_{i^*} \ne \tau_{i^*}, \sigma_{j^*} \ne \tau_{j^*}}}{4}\\
  &\ge \frac{\Es{x}{|w_{i^*}| + C |x_{i^*}|\ \middle|\ \sigma_{i^*} =\tau_{i^*}}}{2} + \frac{\Es{x}{|w_{j^*}| + C |x_{j^*}|\ \middle|\ \sigma_{i^*} \ne \tau_{i^*}, \sigma_{j^*} = \tau_{j^*}}}{4} + \frac{\E{|w_{i^*}| - C |x_{i^*}|\ \middle|\ \sigma_{i^*} \ne \tau_{i^*}, \sigma_{j^*} \ne \tau_{j^*}}}{4}\\
& = \frac{\E{|w_{i^*}| + C |x_{i^*}|\ \middle|\ \sigma_{i^*} =\tau_{i^*}}}{2}  + \frac{\E{|w_{j^*}| + C |x_{j^*}| \  \middle| \  \sigma_{j^*} = \tau_{j^*}}}{4} + \frac{\E{|w_{i^*}| - C |x_{i^*}| \ \middle| \ \sigma_{i^*} \ne \tau_{i^*}}}{4}\\
& = \frac{|w_{i^*}| + C \E{|x_{i^*}| \ \middle|\  \sigma_{i^*} =\tau_{i^*}}}{2}  + \frac{|w_{j^*}| + C \E{|x_{j^*}| \ ~\middle|~ \  \sigma_{j^*} = \tau_{j^*}}}{4}  + \frac{|w_{i^*}| - C \E{|x_{i^*}| \ ~\middle|~\  \sigma_{i^*} \ne \tau_{i^*}}}{4}\\
& = \frac{2 |w_{i^*}| + |w_{j^*}| + 3 C \E{|x_{i^*}| \ \middle|\  \sigma_{i^*} =\tau_{i^*}}}{4}   + \frac{|w_{i^*}| - C \E{|x_{i^*}| \ \middle|\  \sigma_{i^*} \ne \tau_{i^*}}}{4}\\
& = \frac{3 |w_{i^*}| + |w_{j^*}| + 2 C \E{|x_{i^*}| \ \middle|\  \sigma_{i^*} =\tau_{i^*}}}{4} 
\end{align*}
Now since we are in the case when $|w_{i^*}|- |w_{j^*}| < 4$ we see that
$$
 \En_{x,\epsilon} \max_{i}|w_{i}+C \epsilon x_{i}| \ge \frac{3 |w_{i^*}| + |w_{j^*}| + 2 C \E{|x_{i^*}| ~\middle|~ \sigma_{i^*} =\tau_{i^*}}}{4}  \ge \frac{4 |w_{i^*}| + 2 C \E{|x_{i^*}| ~\middle|~ \sigma_{i^*} =\tau_{i^*}} - 4}{4} 
$$
On the other hand, as we already argued,
\begin{align*}
\max_{x\in \{\pm1\}^N} \En_\epsilon \max_{i\in[N]} \left| w_i + 2\epsilon  x_i\right| \le |w_{i^*}|+2 
\end{align*}
Hence, as long as 
\begin{align*}
\frac{ C\ \E{|x_{i^*}|\ \middle|\ \sigma_{i^*} =\tau_{i^*}} - 2}{2} \ge 2 
\end{align*}
or, in other words, as  long as
$$
C   \ge 6 / \E{|x_i|\ \middle|\ \sign(x_i) =\sign(w_i)} = 6/ \Es{x}{|x|}~,
$$ 
we have that
$$
\max_{x\in \{\pm1\}^N} \En_\epsilon \max_{i\in[N]} \left| w_i + 2\epsilon  x_i\right|  \le  \En_{x,\epsilon} \max_{i}|w_{i}+C \epsilon x_{i}| \ .
$$
This concludes the proof.
\end{proof}

\begin{lemma}\label{lem:l1linffpl}
Consider the case when $\X$ is the $\ell_\infty^N$ ball and $\F$ is the $\ell_1^N$ unit ball. Let $f^* =  \argmin{f \in \F} \ip{f}{R}$, then for any random vector $R$, 
\begin{align*}
\Es{R}{ \sup_{x \in \X} \left\{\ip{f^*}{x} + \norm{R + x}_\infty \right\}} & \le  \Es{R}{\inf_{f \in \F} \sup_{x} \left\{\ip{f}{x} + \norm{R + x}_\infty \right\}} + 4\ \P\left( \norm{R}_{\infty}  \le 4 \right)
\end{align*} 
\end{lemma}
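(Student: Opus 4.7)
The approach is to compute $V(f;R) := \sup_{x \in \X}\{\ip{f}{x} + \norm{R+x}_\infty\}$ at $f=f^*$ explicitly and compare it with a lower bound on $\inf_{f \in \F} V(f;R)$. Since $\X = [-1,1]^N$ and $\norm{z}_\infty = \sup_{\norm{y}_1 \le 1}\ip{z}{y}$, dualizing the $\ell_\infty$ norm and swapping the two suprema gives
$$V(f;R) \;=\; \sup_{\norm{y}_1 \le 1}\bigl\{\ip{R}{y} + \norm{f+y}_1\bigr\},$$
because $\sup_{x \in [-1,1]^N}\ip{f+y}{x} = \norm{f+y}_1$. This dual form is convenient since $f^* = -\sign(R_{i^*})\,e_{i^*}$, with $i^* = \argmax{i}|R_i|$, is a vertex of $\F$, and so $\norm{f^*+y}_1$ decomposes coordinate-wise.

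Next I would evaluate $V(f^*;R)$ by testing the two natural dual points: $y = \sign(R_{i^*})e_{i^*}$, which kills $\norm{f^*+y}_1$, and $y = \sign(R_{j^*})e_{j^*}$ with $j^*$ the index of the second-largest $|R_j|$, which makes $\norm{f^*+y}_1 = 2$. These produce candidate values $\norm{R}_\infty$ and $|R_{j^*}|+2$, and a short piecewise-linear analysis of the dual shows these are the only extremizers, yielding
$$V(f^*;R) \;=\; \max\bigl\{\norm{R}_\infty,\; |R_{j^*}|+2\bigr\}.$$
On the other hand, plugging the single dual point $y = \sign(R_{i^*})e_{i^*}$ into $V(f;R)$ for a generic $f$ gives $V(f;R) \ge \norm{R}_\infty + \norm{f+\sign(R_{i^*})e_{i^*}}_1 \ge \norm{R}_\infty$, so $\inf_f V(f;R) \ge \norm{R}_\infty$. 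Combining, the pointwise gap obeys
$$V(f^*;R) - \inf_f V(f;R)\;\le\;\max\bigl\{0,\;|R_{j^*}|+2-\norm{R}_\infty\bigr\}\;\le\;2.$$

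The final step splits on the event $\{\norm{R}_\infty \le 4\}$: there the gap is trivially bounded by $4$, yielding the $4\,\P(\norm{R}_\infty \le 4)$ term. On the complement, I would sharpen the lower bound $\inf_f V(f;R) \ge \norm{R}_\infty$ using the two-coordinate competitor $f = -\alpha\sign(R_{i^*})e_{i^*} - \beta\sign(R_{j^*})e_{j^*}$ combined with the pair of dual $y$'s above; solving the resulting minimax in $(\alpha,\beta)$ gives $\inf_f V(f;R) \ge \tfrac{1}{2}(\norm{R}_\infty + |R_{j^*}|)+1$ whenever $\norm{R}_\infty - |R_{j^*}|\le 2$, which together with the formula for $V(f^*;R)$ absorbs the residual contribution on $\{\norm{R}_\infty>4\}$. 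Taking expectations in $R$ then yields the stated bound.

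The hard part is the refined step on $\{\norm{R}_\infty>4\}$: the naive lower bound $\inf_f V \ge \norm{R}_\infty$ only controls the pointwise gap by $2$, not by $0$, so one must genuinely exploit multi-coordinate $f$ competitors (and a convex combination of corresponding dual $y$'s) to push $\inf_f V$ up to the level of $V(f^*;R)$ in the large-norm regime. This minimax computation is the delicate combinatorial core of the argument; everything else is either the explicit formula for $V(f^*;R)$ or a straightforward case split.
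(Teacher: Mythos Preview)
Your dualization $V(f;R)=\sup_{\|y\|_1\le 1}\{\langle R,y\rangle+\|f+y\|_1\}$ is the same as the paper's, and your formula $V(f^*;R)=\max\{\|R\|_\infty,\,|R_{j^*}|+2\}$ is correct --- in fact sharper than what the paper obtains, because the paper's argument erroneously drops the term $\alpha\langle h,R\rangle$ (it is not zero: take $h=\mathrm{sign}(R_{j^*})e_{j^*}$) and arrives at the incorrect bound $V(f^*;R)\le\max\{\|R\|_\infty,\,4-\|R\|_\infty\}$. Your lower bound $\inf_f V(f;R)\ge\|R\|_\infty$ and the ensuing pointwise gap estimate $\le 2$ are likewise correct.

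The genuine gap is in the ``hard part'' you yourself flag: you cannot push $\inf_f V(f;R)$ up to $V(f^*;R)$ on $\{\|R\|_\infty>4\}$, because the lemma as stated is actually false. Take $N=2$ and the deterministic vector $R=(10,9)$. Then $f^*=-e_1$, and checking the four vertices of $[-1,1]^2$ gives $V(f^*;R)=11$ (attained at $x=(-1,1)$). But for $f=(-\tfrac34,-\tfrac14)$ the same vertex check gives $V(f;R)=10.5$, so $\inf_f V(f;R)\le 10.5$; in fact your own refined lower bound $\inf_f V\ge\tfrac12(\|R\|_\infty+|R_{j^*}|)+1=10.5$ is tight here. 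Since $\|R\|_\infty=10>4$, the lemma would require $11\le 10.5+4\cdot 0$, which fails. More generally, for $N=2$ and $0\le \|R\|_\infty-|R_{j^*}|\le 2$ the pointwise gap is exactly $1-\tfrac12(\|R\|_\infty-|R_{j^*}|)\in[0,1]$, independently of how large $\|R\|_\infty$ is. So the step you correctly identified as delicate is in fact impossible; both your proposal and the paper's proof break at this point, and the stated inequality does not hold for arbitrary random $R$.
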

\begin{proof}
Let $f^* =  \argmin{f \in \F} \ip{f}{R}$. We start by noting that for any $f' \in \F$,
\begin{align*}
\sup_{x \in \X} \left\{\ip{f'}{x} + \norm{R + x}_\infty \right\} & = \sup_{x \in \X} \left\{\ip{f'}{x} + \sup_{f \in \F}\ip{f}{R + x}\right\}\\
& = \sup_{f \in \F} \sup_{x \in \X} \left\{\ip{f'}{x} + \ip{f}{R + x}\right\}\\
& = \sup_{f \in \F}  \left\{\sup_{x \in \X}\ip{f' + f}{x} + \ip{f}{R }\right\}\\
& = \sup_{f \in \F}  \left\{\norm{f' + f}_{1} + \ip{f}{R }\right\}
\end{align*}
Hence note that 
\begin{align} \label{eq:lowR}
\inf_{f' \in \F}\sup_{x \in \X} \left\{\ip{f'}{x} + \norm{R + x}_\infty \right\} & =  \inf_{f' \in \F}\sup_{f \in \F}  \left\{\norm{f' + f}_1 + \ip{f}{R }\right\} \\
&\ge  \inf_{f' \in \F}\left\{\norm{f' - f^*}_1 - \ip{f^*}{R }\right\} \ge  \inf_{f' \in \F} \left\{\norm{f' - f^*}_1 + \norm{R }_{\infty}\right\} = \norm{R}_\infty
\end{align}
On  the other hand note that, $f^*$ is the vertex of the $\ell_1$ ball (any one which given by  $\argmin{i \in [d]} |R[i]|$ with sign opposite as sign of $R[i]$ on that vertex). Since the $\ell_1$ ball is the convex hull of the $2d$ vertices, any vector $f \in \F$ can be written as $f = \alpha h-\beta f^*$ 
some $h \in \F$ such that $\norm{h}_1 = 1$ and 
$\ip{h}{R} = 0$ (which means that $h$ is $0$ on the maximal co-ordinate of $R$ specified by $f^*$) and for some $\beta \in [-1,1]$, $\alpha \in [0,1]$ s.t. $\norm{\alpha h - \beta f^*}_1 \le 1$. Further note that the constraint on $\alpha, \beta$ imposed by requiring that $\norm{\alpha h - \beta f^*}_1 \le 1$ can be written as $\alpha + |\beta| \le 1$. Hence,
\begin{align*}
 \sup_{x \in \X} \left\{\ip{f^*}{x} + \norm{R + x}_\infty \right\} & = \sup_{f \in \F}  \left\{\norm{f^* + f}_1 + \ip{f}{R}\right\}\\
& = \sup_{\alpha \in [0,1]}\sup_{h \perp f^*, \norm{h}_1 = 1} \sup_{\beta \in [-1,1] , \norm{\alpha h - \beta f^*}_1 \le 1}  \left\{\norm{ (1 - \beta)f^* + \alpha h}_1 + \beta \ip{f^*}{R } + \alpha \ip{h}{R} \right\}\\
& = \sup_{\alpha \in [0,1]}\sup_{h \perp f^*, \norm{h}_1 = 1} \sup_{\beta \in [-1,1] , \norm{\alpha h - \beta f^*}_1 \le 1} \left\{ |1 - \beta| \norm{ f^*}_1 + \alpha \norm{h}_1 + \beta \norm{R}_{\infty}  \right\}\\
& = \sup_{\alpha \in [0,1]} \sup_{\beta \in [-1,1] : |\beta| + \alpha \le 1}\left\{ |1 - \beta|  + \alpha  + \beta \norm{R}_{\infty}  \right\}\\
& \le \sup_{\beta \in [-1,1] }\left\{ |1 - \beta|  + 1 - |\beta|  + \beta \norm{R}_{\infty}  \right\}\\
& \le \sup_{\beta \in [-1,1] }\left\{ 2|1 - \beta|  + \beta \norm{R}_{\infty}  \right\}\\
& = \sup_{\beta \in \{-1,1\} }\left\{ 2|1 - \beta|  + \beta \norm{R}_{\infty}  \right\}\\
& = \max\left\{  \norm{R}_{\infty}  , 4 - \norm{R}_{\infty}  \right\}\\
& \le \norm{R}_{\infty}  +  4\ \ind{ \norm{R}_{\infty}  \le 4}
\end{align*}
Hence combining with equation \ref{eq:lowR} we can conclude that
\begin{align*}
\Es{R}{ \sup_{x} \left\{\ip{f^*}{x} + \norm{R + x}_\infty \right\}} & \le \Es{R}{\inf_{f \in \F} \sup_{x} \left\{\ip{f}{x} + \norm{R + x}_\infty \right\}} + 4\ \Es{R}{\ind{ \norm{R}_{\infty}  \le 4}}\\
& = \Es{R}{\inf_{f \in \F} \sup_{x} \left\{\ip{f}{x} + \norm{R + x}_\infty\right\}} + 4\ \P\left( \norm{R}_{\infty}  \le 4 \right)
\end{align*}
\end{proof}

\begin{proof}[\textbf{Proof of Lemma \ref{lem:l1fplmain}}]
On any round $t$, the algorithm draws $\epsilon_{t+1} , \ldots, \epsilon_T$ and $x_{t+1},\ldots,x_T \sim D^N$ and plays
$$
f_t = \argmin{f \in \F}\ip{f}{\sum_{i=1}^{t-1} x_i - C \sum_{i=t+1}^T  x_i } 
$$
We shall show that this randomized algorithm is (almost) admissible w.r.t. the relaxation (with some small additional term at each step). We define the relaxation as
$$
	\Relax{T}{\F}{x_1,\ldots,x_t} = \Es{x_{t+1},\ldots x_T \sim D}{ \norm{\sum_{i=1}^t x_i  - C \sum_{i=t+1}^T x_i}_\infty}
$$
Proceeding just as in the proof of Lemma \ref{lem:fpl} note that, for our randomized strategy, 
\begin{align}
\sup_{x} &\left\{ \Es{f \sim q_t}{\ip{f}{x}} + \Relax{T}{\F}{x_1,\ldots,x_t}\right\} \notag\\
& = \sup_{x} \left\{ \Es{x_{t+1:T} \sim D^N}{ \ip{f_t}{x}} + \Es{x_{t+1:T} \sim D^N}{\norm{ \sum_{i=1}^{t-1} x_i + x - C \sum_{i=t+1}^T x_i}_\infty } \right\}\notag\\
& \le \Es{x_{t+1:T} \sim D^N}{ \sup_{x} \left\{ \ip{f_t}{x} +  \norm{  \sum_{i=1}^{t-1} x_i  + x - C \sum_{i=t+1}^T x_i  }_\infty \right\}}\label{eq:fp1}
\end{align}

In view of Lemma \ref{lem:l1linffpl} (with $R = \sum_{i=1}^{t-1} x_i - C \sum_{i=t+1}^T \epsilon_i x_i$) we conclude that
\begin{align*}
& \Es{x_{t+1},\ldots,x_{T}}{ \sup_{x \in \X} \left\{\ip{f_t}{x} + \norm{\sum_{i=1}^{t-1} x_i - C \sum_{i=t+1}^T  x_i + x}_\infty \right\}} \\
&~~~~~~~~~~ \le  \Es{x_{t+1},\ldots,x_{T}}{\inf_{f \in \F} \sup_{x} \left\{\ip{f}{x} + \norm{\sum_{i=1}^{t-1} x_i - C \sum_{i=t+1}^T x_i + x}_\infty \right\}} + 4\ \P\left( \norm{\sum_{i=1}^{t-1} x_i - C \sum_{i=t+1}^T  x_i}_{\infty}  \le 4 \right)\\
&~~~~~~~~~~ =  \Es{x_{t+1},\ldots,x_{T}}{\sup_{x} \left\{\ip{f^*_t}{x} + \norm{\sum_{i=1}^{t-1} x_i - C \sum_{i=t+1}^T x_i + x}_\infty \right\}} + 4\ \P\left( \norm{\sum_{i=1}^{t-1} x_i - C \sum_{i=t+1}^T x_i}_{\infty}  \le 4 \right)
\end{align*}
where
$$
f^*_t = \argmin{f \in \F}\sup_{x} \left\{\ip{f}{x} + \norm{\sum_{i=1}^{t-1} x_i - C \sum_{i=t+1}^T x_i + x}_\infty \right\}
$$ 
Combining with Equation \eqref{eq:fp1} we conclude that 
\begin{align*}
\sup_{x} &\left\{ \Es{f \sim q_t}{\ip{f}{x}} + \Relax{T}{\F}{x_1,\ldots,x_t}\right\} \\
& \le \Es{x_{t+1},\ldots,x_{T}}{\sup_{x} \left\{\ip{f^*_t}{x} + \norm{\sum_{i=1}^{t-1} x_i - C \sum_{i=t+1}^T x_i + x}_\infty \right\}} + 4\ \P\left( \norm{\sum_{i=1}^{t-1} x_i - C \sum_{i=t+1}^T x_i}_{\infty}  \le 4 \right)
\end{align*}

Now, since
\begin{align*}
4\ \P\left( \norm{\sum_{i=1}^{t-1} x_i - C \sum_{i=t+1}^T  x_i}_{\infty}  \le 4 \right) \leq 4\ \P\left( C \norm{ \sum_{i=t+1}^T  x_i}_{\infty}  \le 4 \right)
\le 4\ \P_{y_{t+1},\ldots,y_T \sim D}\left( C \left| \sum_{i=t+1}^T  y_i\right|  \le 4 \right)
\end{align*}
we have
\begin{align}\label{eq:fplrelaxation2}
\sup_{x} &\left\{ \Es{f \sim q_t}{\ip{f}{x}} + \Relax{T}{\F}{x_1,\ldots,x_t}\right\} \\
& \le \Es{x_{t+1},\ldots,x_{T}}{\sup_{x} \left\{\ip{f^*_t}{x} + \norm{\sum_{i=1}^{t-1} x_i - C \sum_{i=t+1}^T x_i + x}_\infty \right\}} + 4\ \P_{y_{t+1},\ldots,y_T \sim D}\left( C \left| \sum_{i=t+1}^T  y_i\right|  \le 4 \right)
\end{align}
In view of Lemma \ref{lem:genlinffpl},  Assumption~\ref{asm:fpl-linear}  is satisfied by $D^N$ with constant $C$. Further in the proof of Lemma \ref{lem:fpl}  we already showed that whenever Assumption~\ref{asm:fpl-linear} is satisfied, the randomized strategy specified by  $f^*_t$ is admissible. More specifically we showed that 
$$
\Es{x_{t+1},\ldots,x_{T}}{\sup_{x} \left\{\ip{f^*_t}{x} + \norm{\sum_{i=1}^{t-1} x_i - C \sum_{i=t+1}^T x_i + x}_\infty \right\}} \le \Relax{T}{F}{x_1,\ldots,x_{t-1}}
$$
and so using this in Equation \eqref{eq:fplrelaxation2} we conclude that for the randomized strategy in the statement of the lemma,
\begin{align*}
\sup_{x} &\left\{ \Es{f \sim q_t}{\ip{f}{x}} + \Relax{T}{\F}{x_1,\ldots,x_t}\right\} \\
& \le \Relax{T}{F}{x_1,\ldots,x_{t-1}} + 4\ \P_{y_{t+1},\ldots,y_T \sim D}\left( C \left| \sum_{i=t+1}^T  y_i\right|  \le 4 \right)
\end{align*}
Or in other words the randomized strategy proposed is admissible with an additional additive factor of $4\ \P_{y_{t+1},\ldots,y_T \sim D}\left( C \left| \sum_{i=t+1}^T  y_i\right|  \le 4 \right)$ at each time step $t$.
Hence by Proposition \ref{prop:main} we have that for the randomized algorithm specified in the lemma,
\begin{align*}
\E{\Reg_T} & \le \Rel{T}{F} + 4 \sum_{t=1}^T  \P_{y_{t+1},\ldots,y_T \sim D}\left( C \left| \sum_{i=t+1}^T  y_i\right|  \le 4 \right) \\
& = C\ \Es{x_1,\ldots,x_T \sim D^N}{\norm{\sum_{t=1}^T  x_t}_\infty}  + 4 \sum_{t=1}^T  \P_{y_{t+1},\ldots,y_T \sim D}\left( C \left| \sum_{i=t+1}^T  y_i\right|  \le 4 \right)
\end{align*}
This concludes the proof.
\end{proof}

\begin{proof}[\textbf{Proof of Lemma \ref{lem:fpl-l2l2}}]
Instead of using $C = 4 \sqrt{2}$ and drawing uniformly from surface of unit sphere we can equivalently think of the constant as being $1$ and drawing uniformly from surface of sphere of 	radius $4 \sqrt{2}$. Let $\norm{\cdot}$ stand for the Euclidean norm. To prove \eqref{eq:assumption_fpl_linear_simpler}, first observe that
	\begin{align}
		\sup_{p \in \Delta(\X)} \Eunderone{x_t \sim p} \norm{ w + \Eunderone{x \sim p} [x] - x_t } \le \sup_{x \in \X} \Eunderone{\epsilon} \norm{ w + 2\epsilon x }
	\end{align}	
	for any $w\in B$. Further, using Jensen's inequality
	$$\sup_{x \in \X} \Eunderone{\epsilon} \norm{ w + 2\epsilon x }\leq \sup_{x \in \X} \sqrt{\Eunderone{\epsilon} \norm{ w + 2\epsilon x }^2}\leq \sup_{x \in \X} \sqrt{ \norm{ w}^2 + \Eunderone{\epsilon}\norm{2\epsilon x }^2}= \sqrt{ \norm{ w}^2 +4}$$
	To prove the lemma, it is then enough to show that for $r=4\sqrt{2}$
	\begin{align}
		\label{eq:wanted_ineq}
		\En_{x\sim D}\norm{w+rx} \geq \sqrt{ \norm{ w}^2 +4}
	\end{align}
	for any $w$, where we omitted $\epsilon$ since $D$ is symmetric. This fact can be proved with the following geometric argument. 

	We define quadruplets $(w+z_1,w+z_2,w-z_1,w-z_2)$ of points on the sphere of radius $r$. Each quadruplets will have the property that 
	\begin{align}
		\label{eq:wanted_ineq2}
		\frac{\norm{w+z_1}+\norm{w+z_2}+\norm{w-z_1}+\norm{w-z_2}}{4}\geq \sqrt{ \norm{ w}^2 +4}
	\end{align}
	for any $w$.
	We then argue that the uniform distribution can be decomposed into these quadruplets such that each point on the sphere occurs in only one quadruplet (except for a measure zero set when $z_1$ is aligned with $-w$), thus concluding that \eqref{eq:wanted_ineq} holds true.

	\begin{figure}[htbp]
		\centering
			\includegraphics[height=1.3in]{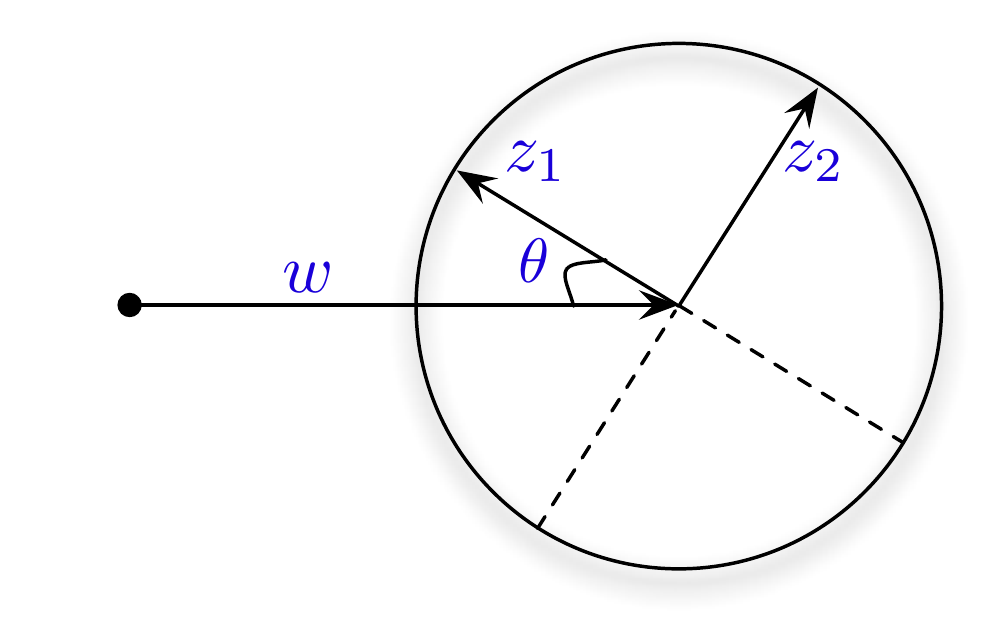}
		\caption{The two-dimensional construction for the proof of Lemma~\ref{lem:fpl-l2l2}. }
		\label{fig:graphics_fpl-l2-l2}
	\end{figure}

	Pick any direction $w^\perp$ perpendicular to $w$. A quadruplet is defined by perpendicular vectors $z_1$ and $z_2$ which have length $r$ and which lie in the plane spanned by $w,w^\perp$. Let $\theta$ be the angle between $-w$ and $z_1$. Since we are now dealing with a two dimensional plane spanned by $w$ and $w^\perp$, we may as well assume that $w$ is aligned with the positive $x$-axis, as in Figure~\ref{fig:graphics_fpl-l2-l2}. We write $w$ for $\|w\|$. The coordinates of the quadruplet are  
	$$(w-r\cos(\theta), r\sin(\theta)),~~ (w+r\cos(\theta), -r\sin(\theta)),~~ (w+r\sin(\theta), r\cos(\theta)),~~ (w-r\sin(\theta), -r\cos(\theta))$$
	For brevity, let $s=\sin(\theta), c=\cos(\theta)$. The desired inequality \eqref{eq:wanted_ineq2} then reads 
	$$\sqrt{w^2-8wc+r^2}+\sqrt{w^2+8wc+r^2}+\sqrt{w^2+8ws+r^2}+\sqrt{w^2-8ws+r^2}\geq 4\sqrt{w^2+4}$$
	To prove that this inequality holds, we square both sides, keeping in mind that the terms are non-negative. The sum of four squares on the left hand side gives $4w^2+4r^2$. For the six cross terms, we can pass to a lower bound by replacing $r^2$ in each square root by $r^2c^2$ or $r^2s^2$, whichever completes the square. Then observe that
	$$ |w+rs|\cdot|w-rs|+|w+rc|\cdot|w-rc| = 2w^2-r^2$$
	while the other four cross terms
	\begin{align*}
			(|w+rs|\cdot|w-rc|+|w+rs|\cdot|w+rc|)+(|w-rs|\cdot|w+rc|+|w-rs|\cdot|w-rc|)  \geq|w+rs|\cdot 2w+|w-rs|\cdot 2w \geq 4w^2
		\end{align*}
	Doubling the cross terms gives a contribution of $2(6w^2-r^2)$, while the sum of squares yielded  $4w^2+4r^2$. The desired inequality is satisfied as long as $16w^2+2r^2 \geq 16(w^2+4)$, or $r\geq 4\sqrt{2}$.

\end{proof}

\begin{proof}[\textbf{Proof of Lemma \ref{lem:l2fplmain}}]
By Lemma \ref{lem:fpl-l2l2}, Assumption~\ref{asm:fpl-linear} is satisfied by distribution $D$ with constant $C = 4 \sqrt{2}$. Hence by Lemma \ref{lem:genlinffpl} we can conclude that for the randomized algorithm which at round $t$ freshly draws  $x_{t+1},\ldots,x_T \sim D$ and picks 
$$
f^*_t = \argmin{f \in \F}\sup_{x \in \X} \left\{\ip{f}{x} + \norm{- \sum_{i=1}^{t-1} x_i + 4 \sqrt{2} \sum_{i=t+1}^T x_i  - x}_2 \right\}
$$
(we dropped the $\epsilon$'s as the distribution is symmetric to start with) the expected regret is bounded as
$$
\E{\Reg_T} \le 4 \sqrt{2}\ \Es{x_1,\ldots,x_T \sim D}{\norm{\sum_{t=1}^T x_t}_2} \le 4 \sqrt{2 T} 
$$
We claim that the strategy specified in the lemma that chooses 
$$
f_t = \frac{- \sum_{i=1}^{t-1} x_i + 4 \sqrt{2} \sum_{i=t+1}^T x_i }{\sqrt{\norm{- \sum_{i=1}^{t-1} x_i + 4 \sqrt{2} \sum_{i=t+1}^T \epsilon_i x_i }_2^2 + 1}}
$$ 
is the same as choosing $f^*_t$. To see this let us start by defining 
$$
\bar{x}_t = - \sum_{i=1}^{t-1} x_i + 4 \sqrt{2} \sum_{i=t+1}^T x_i  
$$
Now note that 
\begin{align*}
f^*_t = \argmin{f \in \F}\sup_{x \in \X} \left\{\ip{f}{x} + \norm{- \sum_{i=1}^{t-1} x_i + 4 \sqrt{2} \sum_{i=t+1}^T x_i  - x}_2 \right\} & = \argmin{f \in \F}\sup_{x \in \X} \left\{\ip{f}{x} + \norm{\bar{x}_t  - x}_2 \right\}\\
& = \argmin{f \in \F}\sup_{x \in \X} \left\{\ip{f}{x} + \sqrt{\norm{\bar{x}_t  - x}_2^2} \right\}\\
& = \argmin{f \in \F}\sup_{x : \norm{x}_2 \le 1} \left\{\ip{f}{x} + \sqrt{\norm{\bar{x}_t}^2  - 2 \ip{\bar{x}_t}{x} +  \norm{x}_2^2} \right\}\\
& = \argmin{f \in \F}\sup_{x : \norm{x}_2 = 1} \left\{\ip{f}{x} + \sqrt{\norm{\bar{x}_t}^2  - 2 \ip{\bar{x}_t}{x} +  1} \right\}
\end{align*}
However this argmin calculation is identical to the one in the proof of Proposition~\ref{prop:mirror_relax}  (with $C = 1$ and $T-t = 0$) and the solution is given by
$$
f^*_t = f_t = \frac{- \sum_{i=1}^{t-1} x_i + 4 \sqrt{2} \sum_{i=t+1}^T x_i }{\sqrt{\norm{- \sum_{i=1}^{t-1} x_i + 4 \sqrt{2} \sum_{i=t+1}^T \epsilon_i x_i }_2^2 + 1}}
$$
Thus we conclude the proof.
\end{proof}

\begin{proof}[\textbf{Proof of Lemma~\ref{lem:fpl2}}]
We first prove the statement for the convex case. 
To show admissibility using the particular randomized strategy given in the lemma, we need to show that for the randomized strategy specified by $q_t$, 
\begin{align*}
\sup_{y_t} \left\{ \Es{\hat{y}_t \sim q_t}{\ell(\hat{y}_t,y_t)} + \Relax{T}{\F}{(x_1,y_1),\ldots,(x_t,y_t)}\right\} \le \Relax{T}{\F}{(x_1,y_1),\ldots,(x_{t-1},y_{t-1})}
\end{align*}
for any $x_t$. The strategy $q_t$ proposed by the lemma is such that we first draw $(x_{t+1},y_{t+1}),\ldots,(x_T,y_T) \sim D$ and $\epsilon_{t+1},\ldots \epsilon_T$ Rademacher random variables, and then based on this sample pick $\hat{y}_t=\hat{y}_t(x_{t+1:T},y_{t+1:T},\epsilon_{t+1:T})$ as in \eqref{eq:def_general_fpl2}. Hence,
\begin{align*}
\sup_{y_t} &\left\{ \Es{\hat{y}_t \sim q_t}{\ell(\hat{y}_t,y_t)} + \Relax{T}{\F}{(x_1,y_1),\ldots,(x_t,y_t)}\right\} \\
& = \sup_{y_t} \left\{ \Eunder{(x_{t+1:T},y_{t+1:T})}{\epsilon_{t+1:T}} \ell(\hat{y}_t,y_t) + \Eunder{(x_{t+1:T},y_{t+1:T})}{\epsilon_{t+1:T}} \sup_{f \in \F} \left[ C \sum_{i=t+1}^T \epsilon_i \ell(f(x_i),y_i) - L_t(f) \right] \right\}\\
& \le \Eunder{(x_{t+1:T},y_{t+1:T})}{\epsilon_{t+1:T}} \sup_{y_t} \left\{ \ell(\hat{y}_t,y_t) +  \sup_{f \in \F}\left[ C \sum_{i=t+1}^T \epsilon_i \ell(f(x_i),y_i) - L_t(f) \right]\right\} \ .
\end{align*}
Now, with $\hat{y}_t$ in \eqref{eq:def_general_fpl2},
\begin{align*}
\sup_{y_t} & \left\{ \ell(\hat{y}_t,y_t) + \sup_{f \in \F} \left[ C \sum_{i=t+1}^T \epsilon_i \ell(f(x_i),y_i) - L_t(f)  \right] \right\} \\
& = \inf_{\hat{y} \in [-B,B]} \sup_{y_t} \left\{ \ell(\hat{y},y_t) + \sup_{f \in \F} \left[ C \sum_{i=t+1}^T \epsilon_i \ell(f(x_i),y_i) - L_t(f)  \right] \right\}\\
& = \inf_{\hat{y} \in [-B,B]} \sup_{p_t} \Es{y_t \sim p_t}{ \ell(\hat{y},y_t) + \sup_{f \in \F} \left[ C \sum_{i=t+1}^T \epsilon_i \ell(f(x_i),y_i) - L_t(f)  \right] }
\end{align*}
Now we assume that the loss $\ell(\hat{y},y)$ is convex in  the first argument (and bounded). Note that the term
$$
 \Es{y_t \sim p_t}{ \ell(\hat{y},y_t) + \sup_{f \in \F} \left\{ C \sum_{i=t+1}^T \epsilon_i \ell(f(x_i),y_i) - L_{t}(f)  \right\} }
$$
is linear in $p_t$ and, due to convexity of loss, is convex in $\hat{y}_t$. Hence by the minimax theorem, for this choice of $q_t$, we conclude that
\begin{align*}
\sup_{y_t} &\left\{ \Es{\hat{y}_t \sim q_t}{\ell(\hat{y}_t,y_t)} + \Relax{T}{\F}{(x_1,y_1),\ldots,(x_t,y_t)}\right\} \\
& \le \Eunder{(x_{t+1},y_{t+1}),\ldots, (x_T,y_T)}{\epsilon_{t+1:T}} \inf_{\hat{y}_t \in [-B,B]} \sup_{p_t} \Es{y_t \sim p_t}{\ell(\hat{y}_t,y_t) +  \sup_{f \in \F} \left\{ C \sum_{i=t+1}^T \epsilon_i \ell(f(x_i),y_i) - L_t(f) \right\} }\\
& = \Eunder{(x_{t+1},y_{t+1}),\ldots, (x_T,y_T)}{\epsilon_{t+1:T}} \sup_{p_t }\inf_{\hat{y}_t \in [-B,B]}  \Es{y_t \sim p_t}{\ell(\hat{y}_t,y_t) +  \sup_{f \in \F} \left\{ C \sum_{i=t+1}^T \epsilon_i \ell(f(x_i),y_i) - L_t(f) \right\}} 
\end{align*}
The last step above is due to the minimax theorem as the loss is convex in $\hat{y}_t$, the set $[-B,B]$ is compact, and the term is linear in $p_t$. The above expression is equal to
\begin{align*}
& = \En \sup_{p_t }\En_{y_t \sim p_t}  \sup_{f \in \F} \left\{ C \sum_{i=t+1}^T \epsilon_i \ell(f(x_i),y_i) - \sum_{i=1}^{t-1} \ell(f(x_i),y_i) + \inf_{\hat{y_t} \in [-B,B]}  \Es{y_t \sim p_t}{\ell(\hat{y}_t,y_t)} - \ell(f(x_t),y_t)\right\} \\
& \le \En \sup_{p_t }\En_{y_t \sim p_t}   \sup_{f \in \F} \left\{ C \sum_{i=t+1}^T \epsilon_i \ell(f(x_i),y_i) - L_{t-1}(f) +   \inf_{g \in \F} \Es{y_t \sim p_t}{\ell(g(x_t),y_t)} - \ell(f(x_t),y_t)\right\} \\
& \le \En \sup_{p_t }\En_{y_t \sim p_t}  \sup_{f \in \F} \left\{ C \sum_{i=t+1}^T \epsilon_i \ell(f(x_i),y_i) - L_{t-1}(f)  +   \Es{y_t \sim p_t}{\ell(f(x_t),y_t)} - \ell(f(x_t),y_t)\right\} \\
& \le \Eunder{(x_{t+1},y_{t+1}),\ldots, (x_T,y_T)}{\epsilon_{t+1:T}} \En_{(x_t,y_t) \sim D}\En_{\epsilon_t}   \sup_{f \in \F} \left\{ C \sum_{i=t+1}^T \epsilon_i \ell(f(x_i),y_i) - L_{t-1}(f)  +   C \epsilon_t \ell(f(x_t),y_t) \right\}\\
& = \Relax{T}{\F}{(x_1,y_1),\ldots,(x_{t-1},y_{t-1})}
\end{align*}
The second part of the Lemma is proved analogously.

\end{proof}

\begin{proof}[\textbf{Proof of Lemma \ref{lem:treewalk}}]
Now let $q_t$ be the randomized strategy where we draw $\epsilon_{t+1},\ldots,\epsilon_T$ uniformly at random and pick 
\begin{align}\label{eq:fpltree}
q_t(\epsilon) = \argmin{q \in \Delta(\F)} \sup_{x_t}\left\{ \Es{f_t \in q}{\ell(f_t,x_t)} +  \sup_{f \in \F} \left[ 2 \sum_{i=t+1}^T \epsilon_i \ell(f,\x^t_i(\epsilon)) - \sum_{i=1}^t \ell(f,x_i) \right] \right\}
\end{align}
With the definition of $\x^t$ in \eqref{eq:worst_case_tree_fpl}, and with the notation $L_t(f)=\sum_{i=1}^t \ell(f,x_i) $
\begin{align*}
\sup_{x_t}& \left\{ \Es{f_t \sim q_t}{\ell(f_t,x_t)} + \sup_{\x} \En_{\epsilon} \sup_{f \in \F} \left[ 2 \sum_{i=t+1}^T \epsilon_i \ell(f,\x_i(\epsilon)) - L_{t}(f)\right] \right\} \\
& = \sup_{x_t}\left\{ \Es{\epsilon}{\Es{f_t \sim q_t(\epsilon)}{\ell(f_t,x_t)}} +  \En_{\epsilon}\sup_{f \in \F} \left[ 2 \sum_{i=t+1}^T \epsilon_i \ell(f,\x^t_i(\epsilon)) - L_{t}(f)\right] \right\}\\
& \le \Es{\epsilon}{ \sup_{x_t}\left\{ \Es{f_t \sim q_t(\epsilon)}{\ell(f_t,x_t)} +   \sup_{f \in \F} \left[ 2 \sum_{i=t+1}^T \epsilon_i \ell(f,\x^t_i(\epsilon)) - L_{t}(f)\right] \right\}}\\
& = \Es{\epsilon}{ \inf_{q_t \in \Delta(\F)} \sup_{x_t}\left\{ \Es{f_t \sim q_t}{\ell(f_t,x_t)} +   \sup_{f \in \F} \left[2 \sum_{i=t+1}^T \epsilon_i \ell(f,\x^t_i(\epsilon)) - L_{t}(f)\right] \right\}}
\end{align*}
where the last step is due to the way we pick our predictor $f_t(\epsilon)$ given random draw of $\epsilon$'s in Equation \eqref{eq:fpltree}. We now apply the minimax theorem, yielding the following upper bound on the term above:
\begin{align*}
&\Es{\epsilon}{  \sup_{p_t \in \Delta(\X)} \inf_{f_t \in \F}\left\{ \Es{x_t \sim p_t}{\ell(f_t,x_t)} +   \En_{x_t \sim p_t} \sup_{f \in \F} \left[ 2 \sum_{i=t+1}^T \epsilon_i \ell(f,\x^t_i(\epsilon)) - L_{t}(f)\right] \right\}}
\end{align*}
This expression can be re-written as
\begin{align*}
&\Es{\epsilon}{  \sup_{p_t \in \Delta(\X)} \left\{  \En_{x_t \sim p_t} \sup_{f \in \F} \left[ 2 \sum_{i=t+1}^T \epsilon_i \ell(f,\x^t_i(\epsilon)) - L_{t-1}(f) + \Es{x_t \sim p_t}{\ell(f ,x_t)} - \ell(f,x_t) \right] \right\}}\\
&\le \Es{\epsilon}{  \sup_{p_t \in \Delta(\X)} \left\{  \En_{x_t, x'_t \sim p_t}\En_{\epsilon_t}\sup_{f \in \F} \left[ 2 \sum_{i=t+1}^T \epsilon_i \ell(f,\x^t_i(\epsilon)) - L_{t-1}(f) + \epsilon_t\left(\ell(f ,x_t) - \ell(f,x_t) \right) \right] \right\}}
\end{align*}
By passing to the supremum over $x_t,x'_t$, we get an upper bound
\begin{align*}
&\Es{\epsilon}{  \sup_{x_t , x'_t \in \X} \left\{  \En_{\epsilon_t} \sup_{f \in \F} \left[ 2 \sum_{i=t+1}^T \epsilon_i \ell(f,\x^t_i(\epsilon)) - L_{t-1}(f) + \epsilon_t\left(\ell(f ,x_t) - \ell(f,x_t) \right) \right] \right\}}\\
&\le \Es{\epsilon}{  \sup_{x_t  \in \X} \left\{  \En_{\epsilon_t} \sup_{f \in \F} \left[ 2 \sum_{i=t+1}^T \epsilon_i \ell(f,\x^t_i(\epsilon)) - L_{t-1}(f) + 2 \epsilon_t \ell(f ,x_t) \right] \right\}}\\
&\le \sup_{\tilde{\x}} \Es{\epsilon}{  \sup_{x_t  \in \X} \left\{  \En_{\epsilon_t}\sup_{f \in \F} \left[ 2 \sum_{i=t+1}^T \epsilon_i \ell(f,\tilde{\x}_i(\epsilon)) - L_{t-1}(f) + 2 \epsilon_t \ell(f ,x_t) \right] \right\}} \\
&\le \sup_{\x }\En_{\epsilon}  \sup_{f \in \F}\left[ 2 \sum_{i=t}^T \epsilon_i \ell(f,\x_i(\epsilon)) - L_{t-1}(f) \right]
\end{align*}
\end{proof}

\begin{proof}[\textbf{Proof of Lemma \ref{lem:ontrvar2}}]
We shall start by showing that the relaxation is admissible for the game where we pick prediction $\hat{y}_t$ and the adversary then directly picks the gradient $\partial \ell(\hat{y}_t,y_t)$. To this end note that 
\begin{align*}
\inf_{\hat{y}_t} \sup_{\partial \ell(\hat{y}_t,y_t)}
& \left\{\partial \ell(\hat{y}_t,y_t) \cdot \hat{y}_t  + \Relax{T}{\F}{  \partial \ell(\hat{y}_1,y_1) , \ldots, \partial \ell(\hat{y}_t,y_t)} \right\}\\
& = \inf_{\hat{y}_t} \sup_{\partial \ell(\hat{y}_t,y_t)} \left\{\partial \ell(\hat{y}_t,y_t) \cdot \hat{y}_t  + \Es{\epsilon}{ \sup_{f \in \F} 2 L \sum_{i=t+1}^T \epsilon_i f[t] - \sum_{i=1}^t \partial \ell(\hat{y}_i,y_i) \cdot f[i] } \right\}\\
& \le \inf_{\hat{y}_t} \sup_{r_t \in [-L , L]} \left\{ r_t\cdot \hat{y}_t  + \Es{\epsilon}{ \sup_{f \in \F} 2 L \sum_{i=t+1}^T \epsilon_i f[t] - L_{t-1}(f)  - r_t \cdot f[t]} \right\}
\end{align*}
Let us use the notation $L_{t-1}(f) = \sum_{i=1}^{t-1} \partial \ell(\hat{y}_i,y_i) \cdot f[i]$ for the present proof. The supremum over $r_t\in[-L,L]$ is achieved at the endpoints since the expression is convex in $r_t$. Therefore, the last expression is equal to
\begin{align*}
&\inf_{\hat{y}_t} \sup_{r_t \in \{-L , L\}} \left\{ r_t\cdot \hat{y}_t  + \En_{\epsilon} \sup_{f \in \F}\left[ 2 L \sum_{i=t+1}^T \epsilon_i f[t] - L_{t-1}(f)  - r_t \cdot f[t] \right]\right\}\\
& = \inf_{\hat{y}_t} \sup_{p_t \in \Delta(\{-L , L\})} \Es{r_t \sim p_t}{ r_t\cdot \hat{y}_t  + \En_{\epsilon} \sup_{f \in \F} \left[ 2 L \sum_{i=t+1}^T \epsilon_i f[t] - L_{t-1}(f)  - r_t \cdot f[t]\right] } \\
& =  \sup_{p_t \in \Delta(\{-L , L\})} \inf_{\hat{y}_t} \Es{r_t \sim p_t}{ r_t\cdot \hat{y}_t  + \En_{\epsilon} \sup_{f \in \F} \left[ 2 L \sum_{i=t+1}^T \epsilon_i f[t] - L_{t-1}(f)  - r_t \cdot f[t] \right]} 
\end{align*}
where the last step is due to the minimax theorem. The last quantity is equal to
\begin{align*}
& \sup_{p_t \in \Delta(\{-L , L\})}  \Es{\epsilon}{ \Es{r_t \sim p_t}{ \inf_{\hat{y}_t} \Es{r_t \sim p_t}{r_t} \cdot \hat{y}_t  +  \sup_{f \in \F} \left( 2 L \sum_{i=t+1}^T \epsilon_i f[t] - L_{t-1}(f)  - r_t \cdot f[t] \right) } }\\
& \le  \sup_{p_t \in \Delta(\{-L , L\})} \Es{\epsilon}{ \Es{r_t \sim p_t}{ \sup_{f \in \F} \left( 2 L \sum_{i=t+1}^T \epsilon_i f[t] - L_{t-1}(f)  + (\Es{r_t \sim p_t}{r_t} - r_t) \cdot f[t] \right) }}\\
& \le  \sup_{p_t \in \Delta(\{-L , L\})} \Es{r_t , r'_t \sim p_t}{  \En_{\epsilon} \sup_{f \in \F} \left[ 2 L \sum_{i=t+1}^T \epsilon_i f[t] - L_{t-1}(f)  + (r'_t - r_t) \cdot f[t] \right] }\\
& = \sup_{p_t \in \Delta(\{-L , L\})} \Es{r_t , r'_t \sim p_t}{  \En_{\epsilon} \sup_{f \in \F} \left[ 2 L \sum_{i=t+1}^T \epsilon_i f[t] - L_{t-1}(f)  + \epsilon_t (r'_t - r_t) \cdot f[t] \right] }
\end{align*}
By passing to the worst-case choice of $r_t , r'_t$ (which is achieved at the endpoints because of convexity), we obtain a further upper bound
\begin{align*}
&\sup_{r_t , r'_t \in \{L, -L\}}  \En_{\epsilon} \sup_{f \in \F} \left[ 2 L \sum_{i=t+1}^T \epsilon_i f[t] - L_{t-1}(f)  + \epsilon_t (r'_t - r_t) \cdot f[t] \right]\\
& \le \sup_{r_t \in \{L, -L\}}  \En_{\epsilon} \sup_{f \in \F} \left[2 L \sum_{i=t+1}^T \epsilon_i f[t] - L_{t-1}(f)  + 2 \epsilon_t r_t \cdot f[t] \right]\\
& = \sup_{r_t \in \{L, -L\}}  \En_{\epsilon} \sup_{f \in \F} \left[ 2 L \sum_{i=t}^T \epsilon_i f[t] - L_{t-1}(f)  \right] \\
& = \Relax{T}{\F}{  \partial \ell(\hat{y}_1,y_1) , \ldots, \partial \ell(\hat{y}_{t-1},y_{t-1})}
\end{align*}
Thus we see that the relaxation is admissible. Now the corresponding prediction is given by 
\begin{align*}
\hat{y}_t & = \argmin{\hat{y}} \sup_{r_t \in [-L,L]} \left\{ r_t \hat{y} + \Es{\epsilon}{\sup_{f \in \F} \left\{ 2 L \sum_{i=t+1}^{T} \epsilon_i f[i] - \sum_{i=1}^{t-1} \partial\ell(\hat{y}_i,y_i) f[i] - r_t f[t] \right\} }  \right\}\\
& = \argmin{\hat{y}} \sup_{r_t \in [-L,L]} \left\{ r_t \hat{y} + \Es{\epsilon}{\sup_{f \in \F} \left\{ 2 L \sum_{i=t+1}^{T} \epsilon_i f[i] - \sum_{i=1}^{t-1} \partial\ell(\hat{y}_i,y_i) f[i] - r_t f[t] \right\} }  \right\}\\
& = \argmin{\hat{y}} \sup_{r_t \in \{-L,L\}} \left\{ r_t \hat{y} + \Es{\epsilon}{\sup_{f \in \F} \left\{ 2 L \sum_{i=t+1}^{T} \epsilon_i f[i] - \sum_{i=1}^{t-1} \partial\ell(\hat{y}_i,y_i) f[i] - r_t f[t] \right\}  } \right\}
\end{align*}
The last step holds because of convexity of the term inside the supremum over $r_t$ is convex in $r_t$ and so the suprema is attained at the endpoints of the interval. The $\hat{y}_t$ above is attained when both terms of the supremum are equalized, that is for $\hat{y}_t$ is the prediction that satisfies :
\begin{align*}
\hat{y}_t =   \Es{\epsilon}{\sup_{f \in \F} \left\{ \sum_{i=t+1}^{T} \epsilon_i f[i] - \frac{1}{2L} \sum_{i=1}^{t-1} \partial\ell(\hat{y}_i,y_i) f[i] + \frac{1}{2} f[t] \right\} - \sup_{f \in \F} \left\{  \sum_{i=t+1}^{T} \epsilon_i f[i] -  \frac{1}{2L} \sum_{i=1}^{t-1} \partial\ell(\hat{y}_i,y_i) f[i] -  \frac{1}{2} f[t] \right\}}
\end{align*}
Finally since the relaxation is admissible we can conclude that the regret of the algorithm is bounded as
$$
\Reg_T \le \Rel{T}{\F} = 2\ L\ \Es{\epsilon}{\sup_{f \in \F} \sum_{t=1}^T \epsilon_t f[t]} 
\ .$$
This concludes the proof.
\end{proof}

\begin{proof}[\textbf{Proof of Lemma~\ref{lem:transd-assm2-satisfied}}]

The proof is similar to that of Lemma \ref{lem:ontrvar2}, with a few more twists. We want to establish admissibility of the relaxation given in \eqref{eq:trans1rel} w.r.t. the randomized strategy $q_t$ we provided. To this end note that
\begin{align*}
& \sup_{y_t} \left\{ \Es{\hat{y}_t \sim q_t}{\ell(\hat{y}_t,y_t) } + \Es{\epsilon}{\sup_{f \in \F} \left\{ 2 L \sum_{i=t+1}^{T} \epsilon_i f[i] - L_{t}(f) \right\} } \right\}  \notag \\
& = \sup_{y_t} \left\{ \Es{\epsilon}{\ell(\hat{y}_t(\epsilon),y_t) } + \Es{\epsilon}{\sup_{f \in \F} \left\{ 2 L \sum_{i=t+1}^{T} \epsilon_i f[i] - L_{t}(f) \right\} } \right\}  \notag \\
& \le \Es{\epsilon}{\sup_{y_t} \left\{ \ell(\hat{y}_t(\epsilon),y_t)  + \sup_{f \in \F} \left\{ 2 L \sum_{i=t+1}^{T} \epsilon_i f[i] - L_{t}(f) \right\}  \right\} }  
\end{align*}
by Jensen's inequality, with the usual notation $L_{t}(f)= \sum_{i=1}^{t} \ell(f[i],y_i)$. Further, by convexity of the loss, we may pass to the upper bound
\begin{align}
& \Es{\epsilon}{\sup_{y_t} \left\{ \partial \ell(\hat{y}_t(\epsilon),y_t) \hat{y}_t(\epsilon)  + \sup_{f \in \F} \left\{ 2 L \sum_{i=t+1}^{T} \epsilon_i f[i] - L_{t-1}(f) - \partial \ell(\hat{y}_t(\epsilon),y_t) f[t] \right\}  \right\} } \notag  \\
& \le \Es{\epsilon}{\sup_{y_t} \left\{ \Es{r_t }{r_t \cdot \hat{y}_t(\epsilon)}  + \sup_{f \in \F} \left\{ 2 L \sum_{i=t+1}^{T} \epsilon_i f[i] - L_{t-1}(f) - \Es{r_t}{ r_t \cdot f[t]} \right\}  \right\} } \notag  
\end{align}
where $r_t$ is a $\{\pm L\}$-valued random variable with the mean $\partial \ell(\hat{y}_t(\epsilon),y_t)$. With the help of Jensen's inequality, and passing to the worst-case $r_t$ (observe that this is legal for any given $\epsilon$), we have an upper bound
\begin{align}
& \Es{\epsilon}{\sup_{y_t} \left\{ \Es{r_t \sim \partial \ell(\hat{y}_t(\epsilon),y_t)}{r_t \cdot \hat{y}_t(\epsilon)}  + \Es{r_t \sim \partial \ell(\hat{y}_t(\epsilon),y_t)}{  \sup_{f \in \F} \left\{ 2 L \sum_{i=t+1}^{T} \epsilon_i f[i] - L_{t-1}(f) - r_t \cdot f[t] \right\}}  \right\} }  \notag \\
& \le \Es{\epsilon}{\sup_{r_t \in \{\pm L\}} \left\{ r_t \cdot \hat{y}_t(\epsilon)  +   \sup_{f \in \F} \left\{ 2 L \sum_{i=t+1}^{T} \epsilon_i f[i] - L_{t-1}(f) - r_t \cdot f[t] \right\}  \right\} }   \label{eq:int}
\end{align}

Now the strategy we defined is
$$
\hat{y}_t(\epsilon) = \argmin{\hat{y}_t} \sup_{r_t \in \{\pm L\}} \left\{ r_t \cdot \hat{y}_t(\epsilon)  +   \sup_{f \in \F} \left\{ 2 L \sum_{i=t+1}^{T} \epsilon_i f[i] - \sum_{i=1}^{t-1} \ell(f[i],y_i) - r_t \cdot f[t] \right\}  \right\}
$$
which can be re-written as 
\begin{align*}
\hat{y}_t(\epsilon) = \left( \sup_{f \in \F} \left\{ \sum_{i=t+1}^{T} \epsilon_i f[i] - \frac{1}{2 L}L_{t-1}(f) + \frac{1}{2} f[t] \right\} - \sup_{f \in \F} \left\{  \sum_{i=t+1}^{T} \epsilon_i f[i] - \frac{1}{2 L} L_{t-1}(f) - \frac{1}{2} f[t] \right\} \right)
\end{align*}

By this choice of $\hat{y}_t(\epsilon)$, plugging back in Equation \eqref{eq:int} we see that 
\begin{align*}
 \sup_{y_t} & \left\{ \Es{\hat{y}_t \sim q_t}{\ell(\hat{y}_t,y_t) } + \Es{\epsilon}{\sup_{f \in \F} \left\{ 2 L \sum_{i=t+1}^{T} \epsilon_i f[i] - L_{t}(f) \right\} } \right\}   \\
& \le \Es{\epsilon}{ \sup_{r_t \in \{\pm L\}} \left\{ r_t \cdot \hat{y}_t(\epsilon)  +   \sup_{f \in \F} \left\{ 2 L \sum_{i=t+1}^{T} \epsilon_i f[i] - L_{t-1}(f) - r_t \cdot f[t] \right\}  \right\} }   \\
& = \Es{\epsilon}{ \inf_{\hat{y}_t} \sup_{r_t \in \{\pm L\}} \left\{ r_t \cdot \hat{y}_t  +   \sup_{f \in \F} \left\{ 2 L \sum_{i=t+1}^{T} \epsilon_i f[i] - L_{t-1}(f) - r_t \cdot f[t] \right\}  \right\} }   \\
& = \Es{\epsilon}{ \inf_{\hat{y}_t} \sup_{p_t \in \Delta(\{\pm L\})} \En_{r_t \sim p_t} \left\{ r_t \cdot \hat{y}_t  +   \sup_{f \in \F} \left\{ 2 L \sum_{i=t+1}^{T} \epsilon_i f[i] - L_{t-1}(f) - r_t \cdot f[t] \right\}  \right\} }   
\end{align*}
The expression inside the supremum is linear in $p_t$, as it is an expectation. Also note that the term is convex in $\hat{y}_t$, and the domain $\hat{y}_t \in [ - \sup_{f \in \F} |f[t]| , \sup_{f \in \F} |f[t]|] $ is a bounded interval (hence, compact). We conclude that we can use the minimax theorem, yielding 
\begin{align*}
& \Es{\epsilon}{  \sup_{p_t \in \Delta(\{\pm L\})} \inf_{\hat{y}_t} \Es{r_t \sim p_t} { r_t \cdot \hat{y}_t  +   \sup_{f \in \F} \left\{ 2 L \sum_{i=t+1}^{T} \epsilon_i f[i] - L_{t-1}(f) - r_t \cdot f[t] \right\} } }   \\
& = \Es{\epsilon}{  \sup_{p_t \in \Delta(\{\pm L\})} \left\{ \inf_{\hat{y}_t} \Es{r_t \sim p_t}{ r_t \cdot \hat{y}_t}  +   \Es{r_t \sim p_t}{\sup_{f \in \F} \left\{ 2 L \sum_{i=t+1}^{T} \epsilon_i f[i] - L_{t-1}(f) - r_t \cdot f[t] \right\} }   \right\}}\\
& = \Es{\epsilon}{  \sup_{p_t \in \Delta(\{\pm L\})} \left\{  \Es{r_t \sim p_t}{ \sup_{f \in \F} \left\{ \inf_{\hat{y}_t} \Es{r_t \sim p_t}{ r_t \cdot \hat{y}_t}  +   2 L \sum_{i=t+1}^{T} \epsilon_i f[i] - L_{t-1}(f) - r_t \cdot f[t] \right\} }   \right\}}\\
& \le \Es{\epsilon}{  \sup_{p_t \in \Delta(\{\pm L\})} \left\{  \Es{r_t \sim p_t}{ \sup_{f \in \F} \left\{ \Es{r_t \sim p_t}{ r_t \cdot f[t]}  +   2 L \sum_{i=t+1}^{T} \epsilon_i f[i] - L_{t-1}(f) - r_t \cdot f[t] \right\} }   \right\}}
\end{align*}
In the last step, we replaced the infimum over $\hat{y}_t$ with $f[t]$, only increasing the quantity. Introducing an i.i.d. copy $r'_t$ of $r_t$,
\begin{align*}
& = \Es{\epsilon}{  \sup_{p_t \in \Delta(\{\pm L\})} \left\{   \Es{r_t \sim p_t}{\sup_{f \in \F} \left\{ 2 L \sum_{i=t+1}^{T} \epsilon_i f[i] - L_{t-1}(f) + \left(\Es{r_t \sim p_t}{r_t}- r_t\right) \cdot f[t] \right\} }   \right\}}\\
& \le \Es{\epsilon}{  \sup_{p_t \in \Delta(\{\pm L\})} \left\{   \Es{r_t, r'_t \sim p_t}{\sup_{f \in \F} \left\{ 2 L \sum_{i=t+1}^{T} \epsilon_i f[i] - L_{t-1}(f) + \left(r'_t- r_t\right) \cdot f[t] \right\} }   \right\}}
\end{align*}
Introducing the random sign $\epsilon_t$ and passing to the supremum over $r_t,r_t'$, yields the upper bound
\begin{align*}
&\Es{\epsilon}{  \sup_{p_t \in \Delta(\{\pm L\})} \left\{   \En_{r_t, r'_t \sim p_t}\Es{\epsilon_t}{\sup_{f \in \F} \left\{ 2 L \sum_{i=t+1}^{T} \epsilon_i f[i] - L_{t-1}(f) + \left(r'_t- r_t\right) \cdot f[t] \right\} }   \right\}}\\
& \le \Es{\epsilon}{  \sup_{r_t, r'_t \in \{\pm L\}} \left\{   \Es{\epsilon_t}{\sup_{f \in \F} \left\{ 2 L \sum_{i=t+1}^{T} \epsilon_i f[i] - L_{t-1}(f) + \epsilon_t\left(r'_t- r_t\right) \cdot f[t] \right\} }   \right\}} \\
& \le \Es{\epsilon}{  \sup_{r_t, r'_t \in \{\pm L\}} \left\{   \Es{\epsilon_t}{\sup_{f \in \F} \left\{  L \sum_{i=t+1}^{T} \epsilon_i f[i] - \frac{1}{2}L_{t-1}(f) + \epsilon_t r'_t \cdot f[t] \right\} }   \right\}} \\
& ~~~~~~~~~~~~~ + \Es{\epsilon}{  \sup_{r_t, r'_t \in \{\pm L\}} \left\{   \Es{\epsilon_t}{\sup_{f \in \F} \left\{  L \sum_{i=t+1}^{T} \epsilon_i f[i] - \frac{1}{2}L_{t-1}(f) - \epsilon_t r_t \cdot f[t] \right\} }   \right\}}
\end{align*}
In the above we split the term in the supremum as the sum of two terms one involving $r_t$ and other $r'_t$ (other terms are equally split by dividing by $2$), yielding
\begin{align*}
&\Es{\epsilon}{  \sup_{r_t \in \{\pm L\}} \left\{   \Es{\epsilon_t}{\sup_{f \in \F} \left\{ 2 L \sum_{i=t+1}^{T} \epsilon_i f[i] - L_{t-1}(f) + 2 \ \epsilon_t\ r_t \cdot f[t] \right\} }   \right\}}
\end{align*}
The above step used the fact that the first term only involved $r'_t$ and second only $r_t$ and further $\epsilon_t$ and $- \epsilon_t$ have the same distribution. Now finally noting that irrespective of whether $r_t$ in the above supremum is $L$ or $-L$, since it is multiplied by $\epsilon_t$ we obtain an upper bound
\begin{align*}
&\Es{\epsilon}{\sup_{f \in \F} \left\{ 2 L \sum_{i=t}^{T} \epsilon_i f[i] - L_{t-1}(f)  \right\} }
\end{align*}
We conclude that the relaxation 
$$
\Relax{T}{\F}{y_1,\ldots,y_t} = \Es{\epsilon}{\sup_{f \in \F} \left\{ 2 L \sum_{i=t+1}^{T} \epsilon_i f[i] - L_{t}(f)  \right\} }
$$
is admissible and further the randomized strategy where on each round we first draw $\epsilon$'s and then set
\begin{align*}
\hat{y}_t(\epsilon) & = \left( \sup_{f \in \F} \left\{ \sum_{i=t+1}^{T} \epsilon_i f[i] - \frac{1}{2 L}L_{t-1}(f) + \frac{1}{2} f[t] \right\} - \sup_{f \in \F} \left\{  \sum_{i=t+1}^{T} \epsilon_i f[i] - \frac{1}{2 L} L_{t-1}(f) - \frac{1}{2} f[t] \right\} \right)\\
& = \left(\inf_{f \in \F} \left\{  - \sum_{i=t+1}^{T} \epsilon_i f[i] + \frac{1}{2 L} L_{t-1}(f) + \frac{1}{2} f[t] \right\} - \inf_{f \in \F} \left\{ - \sum_{i=t+1}^{T} \epsilon_i f[i] + \frac{1}{2 L}L_{t-1}(f) - \frac{1}{2} f[t] \right\} \right) 
\end{align*}
is an admissible strategy. Hence, the expected regret under the strategy is bounded as
$$
\E{\Reg_T} \le \Rel{T}{\F} = 2 L\  \Es{\epsilon}{\sup_{f \in \F}  \sum_{i=1}^{T} \epsilon_i f[i]  }
$$
which concludes the proof.
\end{proof}

\begin{proof}[\textbf{Proof of Lemma~\ref{lem:constrained}}]
	The proof is almost identical to the proof of admissibility for the Mirror Descent relaxation, so let us only point out the differences. Let $\tilde{x}_{t-1}=\sum_{i=1}^t x_i$ and $\mu_{t-1}=\frac{1}{t-1}\tilde{x}_{t-1}$. Using the fact that $x_t$ is $\sigma_t$-close to $\mu_{t-1}$, we expand
	\begin{align*}
		\left( \left\|\tilde{x}_{t}\right\|^2 +  C\sum_{s=t+1}^T\sigma_s^2 \right)^{1/2} &\leq \left( \left\|\tilde{x}_{t-1} \left(\frac{t}{t-1}\right) \right\|^2 + \ip{\nabla\left\|\left( \frac{t}{t-1}\right)\tilde{x}_{t-1}\right\|^2}{x_t-\mu_{t-1}}  + C\sum_{s=t+1}^T\sigma_s^2 \right)^{1/2}\\
	\end{align*}
	As before, pick $x_t = \beta \tilde{x}_{t-1} + \gamma y$ for some $y \in \mrm{Kernel}(\nabla\|\tilde{x}_{t-1}\|^2)$. The above expression under the square root then becomes
	\begin{align*}
		 \left\|\tilde{x}_{t-1}  \right\|^2 +\underbrace{\left(\frac{1}{(t-1)^2}+\frac{2}{t-1} + \left( \frac{t}{t-1}\right)^2\left(\beta-\frac{1}{t-1}\right)\right)}_{\beta'}\left\|\tilde{x}_{t-1}  \right\|^2   + C\sum_{s=t+1}^T\sigma_s^2 ,
	\end{align*}
	and the only difference from the expression in \eqref{eq:grad_desc_expansion} is that we have a $\beta'$ instead of $\beta$ under the square root. Taking the derivatives, we see that
	$$ \alpha = \frac{\left(1+\frac{1}{t-1}\right)^2}{2\sqrt{\|\tilde{x}_{t-1}\|^2 + C\sum_{s=t}^T\sigma_s^2}}$$
	forces $\beta'=0$ and we conclude admissibility.

	\paragraph{Arriving at the Relaxation} 
	We upper bound the sequential Rademacher complexity as
	\begin{align}
	&\frac{2}{\alpha} \sup_{(\x,\x') \in \mc{T}} \Es{\epsilon}{\sup_{f \in \F}\inner{  f, \alpha \sum_{s=t+1}^T \epsilon_s\left(\x_s(\epsilon) - \frac{1}{s-t} \sum_{\tau=t+1}^{s-1} \chi_\tau(\epsilon_\tau) \right) - \sum_{r=1}^t x_r }} \notag \\
	& \le \frac{2 R^2}{\alpha} + \frac{\alpha}{\lambda} \sup_{(\x,\x')}  \En_\epsilon\left\|  \sum_{s=t+1}^T\epsilon_s \left(\x_s(\epsilon) - \frac{1}{s-t} \sum_{\tau=t+1}^{s-1} \chi_\tau(\epsilon_\tau) \right) - \sum_{r=1}^t x_r \right\|^2   \\
	&\leq \frac{2\sqrt{2}R}{\sqrt{\lambda}} \sqrt{\sup_{(\x,\x')}  \Es{\epsilon}{\left\|\sum_{s=t+1}^T \epsilon_s\left( \x_s(\epsilon) - \frac{1}{s-t} \sum_{\tau=t+1}^{s-1} \chi_\tau(\epsilon_\tau)\right) - \sum_{r=1}^t x_r \right\|^2}}\\
	&\leq \frac{2\sqrt{2}R}{\sqrt{\lambda}} \sqrt{ \left\|\sum_{r=1}^t x_r \right\|^2 + \sup_{(\x,\x')} C\sum_{s=t+1}^T\left\| \x_s(\epsilon) - \frac{1}{s-t} \sum_{\tau=t+1}^{s-1} \chi_\tau(\epsilon_\tau) \right\|^2 }
	\end{align}
	Since $(\x,\x') \in \mathcal{T}$ are pairs of tree such that for any $\epsilon \in \{\pm 1\}^T$ and any $t \in [T]$.
	$$
	C(x_1,\ldots,x_t,\chi_1(\epsilon_1), \ldots,\chi_{t-1}(\epsilon_{t-1}), \x_{t}(\epsilon)) = 1
	$$
	we can conclude that for any $\epsilon \in \{\pm 1\}^T$ and any $t \in [T]$,
	$$
	\left\|\x_t(\epsilon) - \frac{1}{t-1} \sum_{\tau=1}^{t-1} \chi_{\tau}(\epsilon_\tau) \right\| \le \sigma_t
	$$
\end{proof}

\begin{proof}[\textbf{Proof of Lemma~\ref{lem:get_mirror}}]
Then Sequential Rademacher complexity can be upper bounded as
\begin{align*}
\sup_{\x} \Es{\epsilon}{\left\|\sum_{i=1}^t x_t + \sum_{i=1}^{T-t} \epsilon_i \x_i(\epsilon) \right\|} & \le \sup_{\x} \left( \Es{\epsilon}{\left\|\sum_{i=1}^t x_t + \sum_{i=1}^{T-t} \epsilon_i \x_i(\epsilon) \right\|^p} \right)^{1/p}\\
& \le \sup_{\x} \left( \Es{\epsilon}{\left\|\sum_{i=1}^t x_t + \sum_{i=1}^{T-t} \epsilon_i \x_i(\epsilon) \right\|^p - C \sum_{i=1}^{T-t} \Es{\epsilon}{\left\|\x_i(\epsilon)\right\|^p}} + C (T - t) \right)^{1/p} \\
& =  \left( \Psi^*\left(\sum_{i=1}^t x_i \right)  +  C (T-t) \right)^{1/p} \\
&\leq \left( \Psi^*\left(\sum_{i=1}^{t-1} x_i \right)  +\ip{\nabla \Psi^*\left(\sum_{i=1}^{t-1} x_i\right)}{x_t} + C (T-t+1) \right)^{1/p}
\end{align*}
and admissibility is verified in a similar way to the $2$-smooth case in the Section \ref{sec:rel}. Here we instead use  $p$-smoothness which follows from result in \cite{SreSriTew11}. The form of update specified by the relaxation in this case follows exactly the proof of Proposition~\ref{prop:mirror_relax}, yielding
$$
f_t = - \frac{\nabla \Psi^*(\sum_{j=1}^{t-1} x_i)}{p \left( \Psi^*(\sum_{j=1}^{t-1} x_i) + C(T-t+1)\right)^{1/p}}
$$
\end{proof}

\begin{lemma}
	\label{lem:reg_split}
	The regret upper bound
	\begin{align}
	 \sum_{t=1}^T \ell(f_t,x_t) - \inf_{f \in \F} \sum_{t=1}^T \ell(f,x_t) \le  \sum_{t=1}^T \ell(f_t,x_t) - \sum_{i=1}^m \inf_{f \in \F^{k_i}\left(x_1,\ldots,x_{\tilde{k}_{i-1}}\right)} \sum_{t=\tilde{k}_{i-1}+1}^{\tilde{k}_{i}} \ell(f,x_t) \ . 
	\end{align}
	is valid.
\end{lemma}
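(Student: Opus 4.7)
The inequality to prove is equivalent, after subtracting the player's cumulative loss $\sum_t \ell(f_t,x_t)$ from both sides and negating, to the claim
$$\sum_{i=1}^m \inf_{f\in\F^{k_i}(x_1,\ldots,x_{\tilde{k}_{i-1}})} \sum_{t=\tilde{k}_{i-1}+1}^{\tilde{k}_i} \ell(f,x_t) \;\le\; \inf_{f\in\F}\sum_{t=1}^T \ell(f,x_t).$$
So the task reduces to showing that the sum of per-block losses of ``localized'' ERM candidates does not exceed the global ERM's cumulative loss.

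The plan is as follows. For each $j\in[m]$, let $\hat{f}_j \in \arg\min_{f\in\F}\sum_{t=1}^{\tilde{k}_j}\ell(f,x_t)$ be the ERM over the first $\tilde{k}_j$ rounds; in particular $\hat{f}_m$ is a minimizer on the full sequence. The first observation is that $\hat{f}_i$ is a legitimate element of $\F^{k_i}(x_1,\ldots,x_{\tilde{k}_{i-1}})$: the definition of this set asks for the existence of some continuation of length $k_i$ making $f$ the ERM, and we can simply take the continuation to be the actually observed $x_{\tilde{k}_{i-1}+1},\ldots,x_{\tilde{k}_i}$. This immediately yields
$$\inf_{f\in\F^{k_i}(x_1,\ldots,x_{\tilde{k}_{i-1}})} \sum_{t=\tilde{k}_{i-1}+1}^{\tilde{k}_i} \ell(f,x_t) \;\le\; \sum_{t=\tilde{k}_{i-1}+1}^{\tilde{k}_i} \ell(\hat{f}_i,x_t),$$
so it remains only to show $\sum_{i=1}^m \sum_{t=\tilde{k}_{i-1}+1}^{\tilde{k}_i} \ell(\hat{f}_i,x_t) \le \sum_{t=1}^T \ell(\hat{f}_m,x_t)$.

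I would establish this bound by induction on $m$ via a telescoping ERM argument. Assuming the inequality for $m-1$ blocks (with $\hat{f}_{m-1}$ in the role of the global minimizer over the first $\tilde{k}_{m-1}$ rounds), one writes
$$\sum_{i=1}^{m} \sum_{t=\tilde{k}_{i-1}+1}^{\tilde{k}_i}\ell(\hat{f}_i,x_t) \;\le\; \sum_{t=1}^{\tilde{k}_{m-1}}\ell(\hat{f}_{m-1},x_t) + \sum_{t=\tilde{k}_{m-1}+1}^{\tilde{k}_m}\ell(\hat{f}_m,x_t),$$
and then uses the optimality of $\hat{f}_{m-1}$ on $x_1,\ldots,x_{\tilde{k}_{m-1}}$, namely $\sum_{t=1}^{\tilde{k}_{m-1}}\ell(\hat{f}_{m-1},x_t)\le \sum_{t=1}^{\tilde{k}_{m-1}}\ell(\hat{f}_m,x_t)$, to replace $\hat{f}_{m-1}$ with $\hat{f}_m$ in the first sum, collapsing the right-hand side to $\sum_{t=1}^T \ell(\hat{f}_m,x_t)$.

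The only subtle point is the choice of inductive hypothesis: one must phrase it with $\hat{f}_{m-1}$ (the ERM on the first $m-1$ blocks) rather than with $\hat{f}_m$, so that the final step is exactly one application of ERM optimality. Beyond this bookkeeping, everything reduces to routine manipulation.
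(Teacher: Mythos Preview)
Your proof is correct and follows essentially the same approach as the paper's. Both arguments reduce to an inductive step (the paper phrases it as ``subdividing $T$ into two blocks''), and both hinge on the same key observation: the empirical minimizer $\hat f_i$ on $x_1,\ldots,x_{\tilde k_i}$ belongs to $\F^{k_i}(x_1,\ldots,x_{\tilde k_{i-1}})$ by taking the actual observed continuation as the witness. Your write-up is in fact more explicit than the paper's terse sketch, spelling out the telescoping via the chain $\hat f_1,\ldots,\hat f_m$ and the one-line ERM optimality step $\sum_{t\le \tilde k_{m-1}}\ell(\hat f_{m-1},x_t)\le \sum_{t\le \tilde k_{m-1}}\ell(\hat f_m,x_t)$.
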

\begin{proof}[\textbf{Proof of Lemma~\ref{lem:reg_split}}]
	To prove this inequality, it is enough to show that it holds for subdividing $T$ into two blocks $k_1$ and $k_2$. Observe, that the comparator term becomes only smaller if we pass to two instead of one infima, but we must check that no function $f$ that minimizes the loss over the first block is removed from being a potential minimizer over the second block. This is exactly the definition of $\F^{k_2}(x_1,\ldots,x_{k_1})$.
\end{proof}

\begin{lemma}
	\label{lem:strconv}
	The relaxation $$\Relax{T}{\F}{ x_{1},\ldots,x_{t}} =  - \inf_{f \in \F} \sum_{i=1}^t x_i(f) + (T-t)  \inf_{f \in\F} \sup_{f' \in \F} \|f - f'\|$$
	 is admissible.
\end{lemma}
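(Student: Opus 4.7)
The plan is to verify the two admissibility conditions directly. The terminal check at $t=T$ is immediate: the coefficient $(T-t)$ vanishes, so the relaxation equals $-\inf_{f\in\F}\sum_{i=1}^T x_i(f)$ exactly, matching the required boundary condition.

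The crux is the recursive inequality. The key conceptual step is to play the \emph{Chebyshev center} of the decision set, namely any $f^\circ \in \argmin{f\in\F}\sup_{f'\in\F}\|f-f'\|$, rather than the empirical risk minimizer. Writing $D \deq \inf_{f\in\F}\sup_{f'\in\F}\|f-f'\| = \sup_{f'\in\F}\|f^\circ - f'\|$ and $L_t(f)=\sum_{i=1}^t x_i(f)$, I would use the deterministic admissible strategy $q_{t+1} = \delta_{f^\circ}$. Admissibility then reduces, after cancelling a common $(T-t-1)D$ on both sides, to the single display
\begin{align*}
\sup_{x\in\X}\left\{x(f^\circ) - \inf_{g\in\F}\bigl(L_t(g)+x(g)\bigr)\right\}
  &= \sup_{x\in\X}\sup_{g\in\F}\bigl\{(x(f^\circ)-x(g)) - L_t(g)\bigr\} \\
  &\le \sup_{g\in\F}\|f^\circ - g\| - \inf_{g\in\F} L_t(g) \\
  &\le D - \inf_{g\in\F}L_t(g),
\end{align*}
where the second line invokes $1$-Lipschitzness of each convex cost $x$ with respect to $\|\cdot\|$ (the standing assumption of the online convex optimization setting) to get $x(f^\circ)-x(g)\le\|f^\circ-g\|$, after which the two suprema decouple.

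The main ``obstacle'' is conceptual rather than computational: the ERM choice $\hat f_t$ does \emph{not} close the recursion, because nothing a priori controls $\|\hat f_t - g\|$ for an arbitrary competitor $g\in\F$. Recognizing that for this particular relaxation one must play the set's Chebyshev center — and that this is precisely why the per-step overhead $\inf_f\sup_{f'}\|f-f'\|$ appears — is the entire substance of the lemma; once the right strategy is identified, the verification is a single display. The resulting unblocked bound is trivial (linear in $T$), which is exactly why the lemma is used downstream in conjunction with blocking and localization, where the effective Chebyshev radius shrinks to order $1/(\lambda t)$ and yields the optimal $O(\log T/\lambda)$ regret.
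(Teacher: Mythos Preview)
Your proof is correct and follows essentially the same route as the paper: play the Chebyshev center of $\F$, use $1$-Lipschitzness to bound $x(f^\circ)-x(g)\le\|f^\circ-g\|$, and absorb the resulting radius $D$ into the $(T-t)$ term. The paper's write-up is organized slightly differently (it decouples $-\inf_f(L_{t-1}(f)+x(f))\le -\inf_f L_{t-1}(f)-\inf_f x(f)$ first and passes through a gradient step $x(f_t)-x(f)\le\ip{\nabla x}{f_t-f}$ before bounding by the norm), but the substance is identical and your version is in fact a bit more direct.
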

\begin{proof}[\textbf{Proof of Lemma~\ref{lem:strconv}}]
First, 
	$$\Relax{T}{\F}{ x_{1},\ldots,x_T} =  - \inf_{f \in \F} \sum_{t=1}^T x_t(f). $$
	As for admissibility, 
	\begin{align*}
		 &\inf_{f_t \in \F} \sup_{x}  \left\{ x(f_t) +  \Relax{T}{\F}{x_1,\ldots,x_{t-1},x}\right\}  \\
		 & =  \inf_{f_t \in \F} \sup_{x}  \left\{ x(f_t) - \inf_{f \in \F} \left\{ \sum_{i=1}^{t-1} x_i(f) + x(f) \right\}\right\} + (T-t)  \inf_{f \in\F} \sup_{f' \in \F} \norm{f - f'}\\
		 & \le \inf_{f_t \in \F} \sup_{x}  \left\{ x(f_t) - \inf_{f \in \F}  \sum_{i=1}^{t-1} x_i(f) - \inf_{f \in \F} x(f)\right\} + (T-t)  \inf_{f \in\F} \sup_{f' \in \F} \norm{f - f'}\\
		 & \le \inf_{f_t \in \F} \sup_{x}  \left\{ \sup_{f \in \F} \ip{\nabla x}{ f_t - f} - \inf_{f \in \F}  \sum_{i=1}^{t-1} x_i(f)   \right\} + (T-t)  \inf_{f \in\F} \sup_{f' \in \F} \norm{f - f'}\\
		 & \le \inf_{f_t \in \F}  \left\{ \sup_{f \in \F} \norm{ f_t - f} - \inf_{f \in \F}  \sum_{i=1}^{t-1} x_i(f)   \right\} + (T-t)  \inf_{f \in\F} \sup_{f' \in \F} \norm{f - f'}\\
		 & =  \Relax{T}{\F}{x_1,\ldots,x_{t-1}}
	\end{align*}
\end{proof}

\section*{Acknowledgements}
We gratefully acknowledge the support of NSF under grants CAREER DMS-0954737 and CCF-1116928.

\bibliographystyle{plain}
\bibliography{paper}

\end{document}